\newcommand{\qedhere}{\tag*{\BlackBox}}
\newcommand{\qedheretext}{\hfill\BlackBox}
\newcommand{\noqed}{\renewcommand{\BlackBox}{}\leavevmode\vspace{-\medskipamount}\vspace{-2mm}}
\DeclareMathOperator*{\argmax}{arg\,max}
\newcommand{\sharpP}{\text{\rm\#}P}
\newtheorem{problem}[theorem]{Problem}
\newtheorem{claim}[theorem]{Claim}
\newenvironment{parameterizedproblem}%
{%
  \leavevmode\nobreak\par
  \begin{list}%
    {}%
    {%
      \def\labelstyle{\itshape}
      \setlength{\topsep}{0pt}%
      \settowidth{\labelwidth}{\labelstyle Parameter:}%
      \setlength{\leftmargin}{\labelwidth}%
      \addtolength{\leftmargin}{\labelsep}%
      \setlength{\itemsep}{0pt}%
      \setlength{\parsep}{0pt}%
    }%
      \def\instance{\item[\labelstyle Instance:]}%
      \def\result{\item[\labelstyle Result:]}%
    }%
    {%
  \end{list}%
}
\def\exampleqed{\hfill$\diamond$}
\def\bouquet{\mathcal{B}}
\def\cliques{\Pi}
\def\separators{\Delta}
\def\chordalcomps{\mathcal{C}}
\def\cI{\mathcal I}
\def\cE{\mathcal E}
\def\cR{\mathcal R}
\newcommand{\argmin}{\mathop{\mathrm{arg\,min}}}
\def\balap#1{\hbox to 0pt{\scriptsize$#1$ \hss}}
\def\hamo{\text{\rm\#\kern-0.5ptAMO}}
\def\hext{\text{\rm\#\kern-0.5ptEXT}}
\def\hto{\text{\rm\#\kern-0.5ptTO}}
\def\cC{\mathcal C}
\def\cpp{{C\nolinebreak[4]\hspace{-.05em}\raisebox{.4ex}{\tiny\bf ++}}}
\definecolor{ba.yellow}{RGB}{252,190,18}
\definecolor{ba.gray}{RGB}{153,153,156}
\definecolor{ba.blue}{RGB}{6,123,164}
\definecolor{ba.red}{RGB}{213,96,98}
\definecolor{ba.orange}{RGB}{233,116,81}
\definecolor{ba.pine}{RGB}{67,154,134}
\definecolor{ba.green}{RGB}{196,247,161}
\definecolor{ba.violet}{RGB}{88, 53, 94}
\tikzset{
  pico/.style = {
    every node/.style = {
      draw,
      circle,
      semithick,
      inner sep = 0pt,
      minimum width = 0.7ex,
      fill = white
    },
    semithick
  },
  edge/.style = {
    semithick
  },
  arc/.style = {
    edge,
    ->,
    >={[round,sep]Stealth}
  },
}
\tikzset{
  axis/.style = {
    semithick,
    ->,
    >={[round,sep]Stealth},
  },
  tick/.style = {
    thin,
    font=\small
  },
  timeout/.style = {
    semithick,
    densely dashed,
    color = ba.red,
    font=\small,
  },
  mean_dot/.style = {
    draw, fill,
    circle,
    inner sep = 0pt,
    minimum width = 1mm
  }
}
\def\rhombus{%
  \node (1) at (-1,0) {$1$};%
  \node (2) at (0,1)  {$2$};%
  \node (3) at (0,-1) {$3$};%
  \node (4) at (1,0)  {$4$};%
}
\def\marcelgraph{%
  \node (1) at (0,0)  {$1$};
  \node (2) at (1,0)  {$2$};
  \node (3) at (2,0)  {$3$};
  \node (4) at (0,-1) {$4$};
  \node (5) at (1,-1) {$5$};
  \node (6) at (2,-1) {$6$};
  \graph[use existing nodes, edges = {edge}] {
    1 -- 2 -- 3 --[bend right] 1;
    4 -- 5 -- 6;
    2 -- {4,5,6};
    3 -- {4,5,6};
  };
}
\renewcommand\rightarrow[1][1.4em]{\tikz[baseline=-0.5ex, shorten <=2pt, shorten >=2pt] \draw[-{Stealth[round,sep]}] (0,0) -- (#1,0);}
\renewcommand\leftarrow[1][1.4em]{\tikz[baseline=-0.5ex, shorten  <=2pt, shorten >=2pt] \draw[{Stealth[round,sep]}-] (0,0) -- (#1,0);}
\begin{document}

\title{Polynomial-Time Algorithms for Counting and Sampling Markov Equivalent DAGs
with Applications}

\author{\name Marcel Wien\"{o}bst \email m.wienoebst@uni-luebeck.de \\
       \addr Institute for Theoretical Computer Science\\
       University of L\"{u}beck\\
        23538  L\"{u}beck, Germany
       \AND
       \name Max Bannach \email max.bannach@esa.int \\
       \addr Advanced Concepts Team\\
       European Space Agency\\
       2201 AZ Noordwijk, The Netherlands
       \AND
       \name Maciej Li\'{s}kiewicz \email maciej.liskiewicz@uni-luebeck.de \\
         \addr Institute for Theoretical Computer Science\\
       University of L\"{u}beck\\
        23538  L\"{u}beck, Germany}

\editor{Vanessa Didelez}

\maketitle

\begin{abstract}%
Counting and sampling directed acyclic graphs
from a Markov equivalence class are fundamental tasks in graphical causal 
analysis. In this paper we show that these tasks can be performed in 
polynomial time, solving a long-standing open problem in this 
area. Our algorithms are effective and easily implementable.
As we show in experiments, these breakthroughs make
thought-to-be-infeasible strategies in
active learning of causal structures and causal effect identification
with regard to a Markov equivalence class practically applicable.
\end{abstract}

\begin{keywords}
   Causal inference, Graphical models, Markov equivalence, Interventions, Chordal graphs.
\end{keywords}

\section{Introduction}
Graphical modeling plays a key role in causal theory, allowing to
express complex causal phenomena in an elegant, mathematically-sound way.
One of the most popular graphical models are directed acyclic graphs
(DAGs), which represent direct causal influences
between random variables by directed
edges~\citep{spirtes2000causation,pearl2009causality,koller2009probabilistic}.
They are commonly used in empirical sciences to discover and understand 
causal effects. However, in practice, the underlying DAG is often unknown
and cannot be identified unambiguously from observational data.
Instead, the statistical properties of the data are shared by a
number of different DAGs, which constitute a Markov equivalence class
(MEC, for short). Therefore, these DAGs are indistinguishable on the
basis of observations alone
\citep{verma1990equivalence,verma1992algorithm,heckerman1995learning}.

It is of great importance to investigate model learning and to analyze causal 
phenomena using MECs directly rather than the DAGs themselves. 
Consequently, Markov equivalence
classes of DAGs constitute a central part of causal discovery and inference.
Our work contributes to this line of research  by providing 
the first polynomial-time algorithms for \emph{counting} and 
for \emph{uniform sampling} Markov 
equivalent DAGs \textendash\ important primitives in both theory and practice. 

Finding the graphical criterion for two DAGs to be Markov equivalent
\citep{verma1990equivalence} and providing the graph-theoretic 
characterization of MECs as CPDAGs
\citep{Andersson1997} mark key turning  points in this research direction. 
In particular, they have contributed to the progress of computational 
methods in this area. Important advantages of CPDAGs
are demonstrated by algorithms that learn causal structures from observational data
\citep{verma1992algorithm,Meek1995,meek1997graphical,spirtes2000causation,chickering2002learning,chickering2002optimal};
and that analyze causality based on a given MEC, rather than a single DAG
\citep{maathuis2009estimating,van2016separators,perkovic2017complete}.

A key characteristic of an MEC 
is its size, i.e., 
the number of DAGs in the class. It indicates uncertainty 
of the causal model inferred from observational data and
it serves as an indicator for the possibility of recovering true causal effects.
Efficient algorithms for computing the size of an MEC are necessary,
whenever researchers aim to quantify or even reduce the uncertainty
present, which is the case particularly in causal effect identification over MECs~\citep{maathuis2009estimating} as
well as active intervention design with the aim to recover the true
DAG with as few experiments as possible~\citep{He2008,hauser2012characterization,shanmugam2015learning,ghassami2018budgeted,Ghassami2019}.

\begin{table}
  \caption{An overview of the algorithmic improvements in the
    computation of the size of Markov equivalence classes. Here,
$n$ denotes the number of vertices, $k$ indicates the size of the largest undirected clique in the graph,
while $d$ is the maximal number of undirected neighbors of a vertex. The first
polynomial-time algorithm given in~\citep{wienobst2021counting} is
presented, applied and extended in this work. The unexplained abbreviations in the table should be read as follows: DP -- Dynamic Programming;
Dom.\ --   Dominating.
}
\label{table:complexities}
  \begin{tabular}{llll} \toprule 
                                 & Approach                     & Complexity                     \\ \midrule
    \citet{Meek1995}             & Exhaustive search  & $\mathcal{O}(n!)$              \\
    \citet{He2015,he2016formulas}     & Root-Picking (RP)            & $\mathcal{O}(n!)$              \\
    \citet{Talvitie2019}       & RP + Memoization (MM)       & $\mathcal{O}(2^n \cdot n^4)$   \\
    \citet{Talvitie2019}       & DP on Clique-Tree            &
    $\mathcal{O}(k!2^kk^2n)$       \\
    \citet{Ghassami2019}        & RP + MM + Clique-Tree       & $\mathcal{O}(2^n \cdot n^4)$   \\
    \citet{Ganian2020,ganian2022efficient}           & RP + MM + Dom.\ Vertex      & $\mathcal{O}(2^n \cdot n^4)$   \\ 
    \citet{Teshnizi20}         & Intervention Design + MM           & $\mathcal{O}(2^{n+d}(nd + d^3))$ \\ 
    \citet{wienobst2021counting}; This work                    & Clique-Picking + MM         & $\mathcal{O}(n^4)$     \\
    \bottomrule
  \end{tabular}
\end{table}

The first algorithmic approaches to counting the number of Markov equivalent
DAGs date back to the work of~\citet{Meek1995}. Therein, it is already observed
that it suffices to consider the \emph{undirected
components}\footnote{These are the connected components of the graph
after removing all directed edges.} of the CPDAG separately.
Even when using exhaustive search for these components, this can already lead to
reasonably fast algorithms when the CPDAG has few undirected edges,
respectively small undirected components. In the worst-case of large undirected
components, however, this approach requires exponential-time.
Starting with the work of \citet{He2015}, the problem has been readdressed with
better and better worst-case bounds of the run-time (see
Table~\ref{table:complexities} for an overview). Particularly, the
\emph{root-picking} approach has been successively refined in multiple works  
\citep{He2015,he2016formulas,Talvitie2019,Ghassami2019,Ganian2020}
bringing the time complexity down to $\mathcal{O}(2^n \cdot
n^4)$, where $n$ denotes the number of  vertices,
which still amounts to exponential worst-case time.

In this paper, we present the culmination of these algorithmic
efforts, the first algorithm with polynomial time complexity
$\mathcal{O}(n^4)$ for counting the number of Markov equivalent DAGs.
Preliminary results of this work have been presented at the AAAI
Conference on Artificial Intelligence, AAAI~2021 \citep{wienobst2021counting}.
Therein, the polynomial-time algorithm has been proposed and an implementation
has been given, which outperforms the previous methods.  
This algorithm certifies that the problem can be solved efficiently -- in
theory and practice. It also implies a polynomial-time algorithm for
uniformly sampling of DAGs from an MEC. A major focus of this work is
on the further consequences of this
breakthrough and, in particular, to illustrate its application to
downstream tasks in causal discovery and inference, which have been
underexplored thus far. 
To make such applications possible, we
analyze the counting problem particularly for \emph{interventional}
MECs and show how the algorithmic achievements translate to this setting,
which plays an important role in many fundamental tasks in causality, two of which we discuss in depth. Additionally, we provide the
first practical implementation of a uniform sampling algorithm for the
members of an
MEC by extending and improving the results from~\citep{wienobst2021counting}.

%

\subsection*{Our Contributions} 
The main achievement of our paper is Algorithm~\ref{alg:main}, which is the 
first poly\-no\-mial-time algorithm for counting 
Markov equivalent DAGs:

\begin{theorem}[Main]\label{thm:main}
  For an input CPDAG $C$, Algorithm~\ref{alg:main} returns the size of
  the MEC represented by $C$ in polynomial time in the size of the graph.
\end{theorem}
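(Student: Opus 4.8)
The plan is to reduce the counting problem over a CPDAG $C$ to a collection of counting problems over its undirected connected components, each of which is a \emph{chordal graph}, and then to solve the chordal case by a recursive decomposition along clique separators. The first step is standard: by the characterization of CPDAGs \citep{Andersson1997}, the consistent DAG extensions of $C$ are obtained by orienting the undirected part without creating new v-structures or directed cycles, and this factorizes over the chordal connected components of the undirected subgraph. Hence it suffices to count, for a connected chordal graph $G$, the number of acyclic orientations of $G$ that introduce no v-structure (equivalently, the number of \emph{AMOs}, acyclic moral orientations). A known fact that I would invoke here is that such orientations are in bijection with the perfect elimination orderings of $G$ up to a suitable equivalence, and in particular each AMO has a unique source vertex; this gives a natural handle for recursion.

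The core of the argument is a divide-and-conquer over a clique-tree (junction-tree) representation of the chordal graph $G$. I would root the clique tree and, using the fact that any AMO of $G$ restricted to a clique is a linear order, set up a recurrence that, for each clique separator, counts the number of AMOs consistent with a given ``interface'' orientation on that separator. The key structural lemma to establish is that once the orientation on a minimal separator $S$ is fixed, the counting problem splits independently across the components of $G - S$ — this is where chordality and the clique-tree structure do the real work, since separators in a chordal graph are cliques and therefore completely oriented by a linear order. Summing over the (polynomially many, after grouping) equivalence classes of separator orientations, and memoizing over the nodes of the clique tree, yields a polynomial number of subproblems, each solvable with polynomially many arithmetic operations on integers of polynomial bit-length.

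Finally I would assemble these pieces: Algorithm~\ref{alg:main} computes the chordal connected components of the undirected part of $C$, runs the clique-tree recursion on each to obtain its AMO count, and multiplies the results. Correctness follows from the factorization lemma plus the separator-splitting lemma; the running time is polynomial because the clique tree has linearly many nodes, each separator induces only polynomially many relevant interface states (after exploiting symmetry so that only the relative order of ``incoming'' versus ``outgoing'' directions at the separator matters), and all bignum arithmetic stays within polynomial size since the answer itself is at most $n!$.

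The main obstacle I anticipate is the second step: naively, the orientation of a separator clique of size $k$ has $k!$ possibilities, which is not polynomial. The crux of the proof must therefore be a combinatorial insight showing that the AMO count depends on the separator orientation only through some coarse, polynomially-sized statistic (e.g. which vertex of the separator is ``closest to the source'' in each subtree, or a partition of the separator induced by the neighboring component) — collapsing the $k!$ cases into polynomially many. Getting this equivalence exactly right, and proving that the counts are genuinely independent across the split once the coarse statistic is fixed, is the delicate part; the rest is bookkeeping and a routine induction on the clique tree.
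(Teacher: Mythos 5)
Your first step---factorizing $\hext(C)$ over the undirected chordal components and reducing to counting AMOs of a UCCG---matches the paper and is fine. The problem is that the second step, which is the entire technical content of the theorem, is not actually carried out: you explicitly defer ``the crux of the proof,'' namely the combinatorial insight that collapses the $k!$ orientations of a separator clique into polynomially many cases and guarantees that each AMO is counted exactly once. Without that insight there is no polynomial-time algorithm, so the proposal does not constitute a proof of the statement.

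Moreover, the route you sketch rests on a flawed key lemma. You claim that once the orientation on a minimal separator $S$ is fixed, the counting problem splits independently across the components of $G-S$. This is false as stated: already for the path $1-2-3$ with $S=\{2\}$, the two components $\{1\}$ and $\{3\}$ each admit exactly one orientation, yet the graph has three AMOs, because what matters is the orientation of the edges \emph{crossing} $S$, and these are coupled across components by the no-new-v-structure constraint at the vertices of $S$. The paper's actual decomposition is different: it fixes a maximal clique $K$ as a \emph{prefix} of the topological ordering (so all crossing edges point out of $K$), proves that the resulting components $\cC_G(K)$ and their induced orientations are invariant under the $|K|!$ internal permutations of $K$ (Lemma~\ref{lemma:perminvariance}), and then handles the remaining overcounting---one AMO being reachable from several starting cliques---via the common-prefix lemma (Lemma~\ref{lemma:minimalSeparator}: any two clique-starting orderings of the same AMO share a prefix in $\separators(G)\cup\cliques(G)$) together with the function $\phi(S,\mathrm{FP}(v,\mathcal{T}))$ over nested forbidden prefixes read off a rooted clique tree, which is computable in polynomial time by Lemmas~\ref{lemma:efficientPhi} and~\ref{lemma:inputPhi}, with the number of recursive subproblems bounded by $2|\cliques(G)|-1$ (Proposition~\ref{prop:subpbound}). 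These are exactly the ingredients your proposal names as ``delicate'' and leaves unproved; they are the theorem.
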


\renewcommand{\figurename}{Algorithm}
\begin{figure}
  \begin{subfigure}{0.59\textwidth}
    \resizebox{\textwidth}{!}{\begin{algorithm}[H] 
      \SetKwInOut{Input}{input}\SetKwInOut{Output}{output}
      \DontPrintSemicolon
      \Input{A CPDAG $C = (V,E)$.}
      \Output{Size of the MEC represented by $C$.}
      $\text{size} \gets 1$ \;
      \ForEach{\text{undirected component} $G$ of $C$}{
        $\text{size} \gets \text{size} \times \text{Clique-Picking}(G)$ \;
      }
      \KwRet $\mathrm{size}$\;
    \end{algorithm}}
  \end{subfigure}
  \begin{subfigure}{0.39\textwidth}
    \resizebox{\textwidth}{!}{\begin{tikzpicture}[baseline=-2.65cm]
      \node (1) at (-0.5,-1.3) {\footnotesize$1$};
      \node (2) at (1,-1.3) {\footnotesize$2$};
      \node (3) at (2.5,-1.3) {\footnotesize$3$};
      \node (4) at (-0.5,-2.3) {\footnotesize$4$};
      \node (5) at (1,-2.3) {\footnotesize$5$};
      \node (6) at (2.5,-2.3) {\footnotesize$6$};
      \graph[use existing nodes, edges = {edge}] {
        1 -- {2,4};
        3 -- 6;
      };
      \graph[use existing nodes, edges = {arc}] {
        {2,4,6} -- 5;
      };

      \node (u) at (1.25,-2.9) {\footnotesize $\text{Clique-Picking}(4-1-2)$};
      \node (l) at (0.79,-3.35) {\footnotesize $\times \; \text{Clique-Picking}(3-6)$};
      \draw (-1.25,-3.6) -- (3.5,-3.6);
      \node (r) at (1, -3.85) {\footnotesize $3 \; \times \; 2 \; = \; 6$};
    \end{tikzpicture}}
  \end{subfigure}
  \caption{The algorithm to compute the size of the MEC represented by
    a CPDAG $C$. Clique-Picking is presented as Algorithm~\ref{alg:cliquepicking}
    in Section~\ref{sec:cliquepicking}. On the right, we show a simple
    example computation. The undirected component $4 - 1 - 2$
    has three possible orientations as $4 \rightarrow 1 \leftarrow 2$
    is disallowed (more on this in the subsequent section). In total,
    there are 6 DAGs in the MEC.}
  \label{alg:main}
\end{figure}
\renewcommand{\figurename}{Figure} 

\setcounter{algocf}{1}
The key component of this algorithm, 
which we coin \emph{Clique-Picking}, computes the number of Markov equivalent
DAGs separately for each undirected component of the CPDAG. In order to do so, it utilizes
the fact that these components are \emph{chordal} by exploring their
clique-tree representation and by evaluating a non-trivial
recursive counting function.\footnote{All technical terms are
  formally introduced in the subsequent section.}

With the recursion used in Clique-Picking, the
problem of uniformly sampling a DAG from a Markov equivalence class can
be solved in polynomial-time as well, by running an
adapted version of Clique-Picking as a preprocessing step, after which sampling is possible in
linear-time.\footnote{The method we propose in this work is an
  improvement over the original algorithm given
  in~\citep{wienobst2021counting}. The algorithm is simpler and the
preprocessing step asymptotically more efficient by a factor $n$.}

\begin{theorem}\label{thm:main:sampling}
  There is an algorithm that, given a CPDAG $C$,
  uniformly samples a DAG in the MEC represented by $C$
  in expected linear time in the size of the graph after an initial
  polynomial-time preprocessing setup.
\end{theorem}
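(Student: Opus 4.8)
The plan is to turn the counting recursion underlying Clique-Picking into a uniform sampler by the standard ``count first, then sample top-down'' scheme, exploiting that this recursion partitions the DAGs of the MEC into disjoint, explicitly described subclasses. \emph{Step 1: reduction to a single chordal component.} Every DAG in the MEC represented by $C$ agrees with $C$ on the directed edges, and, by the decomposition already used in Algorithm~\ref{alg:main}, a DAG of the MEC is obtained exactly by replacing each undirected (hence chordal) component $G$ of $C$ by one of its acyclic moral orientations, the components being treated independently; conversely every such combination lies in the MEC. Hence the size of the MEC is the product, over the undirected components $G$, of the number $\phi(G)$ of acyclic moral orientations of $G$, and a uniform sample from the MEC is the same as independent uniform samples of an acyclic moral orientation for each $G$. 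So it suffices to handle one connected chordal graph $G$ at a time and sample one of its acyclic moral orientations uniformly.

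\emph{Step 2: preprocessing.} Recall that Clique-Picking computes $\phi(G)$ by a recursion on a clique tree of $G$ which, at each subinstance, branches over a finite family of local choices (essentially which clique to ``fix first'', together with an admissible ordering of its vertices); each branch forces the orientation of some edges and splits $G$ into strictly smaller chordal subinstances $H_1,\dots,H_m$, and $\phi(G)$ equals the sum over the branches of $w\cdot\prod_{i}\phi(H_i)$, where $w$ is the number of admissible orderings recorded by the branch. The (adapted) preprocessing runs this recursion once and memoizes, for every subinstance it visits, the value $\phi(\cdot)$, the list of branches, and, per branch, the weight $w$, the forced edges, and pointers to the $H_i$; by Theorem~\ref{thm:main} this takes polynomial time and space. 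In addition, for each subinstance we precompute an alias table over the branch weights $w\cdot\prod_i\phi(H_i)$, so that a branch can later be drawn in expected $O(1)$ time.

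\emph{Step 3: sampling and correctness.} To draw a sample, descend the memoized recursion: at a subinstance $G$ pick a branch with probability proportional to $w\cdot\prod_i\phi(H_i)$, commit the orientations it forces, choose one of its $w$ admissible orderings uniformly at random, and recurse on each $H_i$; at the base cases all edges are oriented and the sampler returns a topological ordering witnessing the accumulated acyclic moral orientation, which has size $O(n)$ and from which the oriented DAG is immediate. Because the branches, refined by the chosen ordering and (recursively) by the orientations of the $H_i$, form a partition of the acyclic moral orientations of $G$, an induction on the recursion depth shows that each acyclic moral orientation is produced with probability exactly $1/\phi(G)$; together with the across-component independence from Step~1 this yields a uniformly random DAG of the MEC.

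\emph{Step 4: running time, the main obstacle.} The memoized recursion tree has polynomial, not constant, size, so a naive traversal would make the sampling phase polynomial rather than linear. The key observation is that a single top-down pass is cheap: each recursive call permanently orients the edges inside at least one clique and hands pairwise disjoint vertex sets to its children, so the calls made for one sample are in correspondence with at most $n$ cliques and no vertex is revisited. With the alias tables, each branch choice costs expected $O(1)$; and the work of enumerating the candidate branches at a node, emitting the forced orientations, and writing down the chosen ordering can be charged to the vertices that node owns, which telescopes to $O(n)$ over the whole pass. The delicate point is precisely this amortization: one has to use the clique-tree structure to ensure that the candidate branches at a node can be listed in time proportional to the progress (the vertices fixed) made there, so that no part of $C$ is charged more than a bounded number of times during one sample. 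Assembling these pieces gives the claimed expected linear time in the number of nodes, after the polynomial-time preprocessing.
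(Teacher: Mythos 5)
Your overall architecture matches the paper's: reduce to the undirected chordal components, run the Clique-Picking recursion once as preprocessing to memoize the counts and build alias tables over the clique-tree nodes, then sample top-down by choosing a clique-tree node with probability proportional to its contribution, an admissible permutation of that clique, and recursing on the resulting components. The correctness argument (the terms of the recursion partition the AMOs, so induction gives probability $1/\hamo(G)$ for each AMO) and the amortization over disjoint vertex sets are also essentially the paper's.

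However, there is a genuine gap at the step you treat as a primitive: ``choose one of its $w$ admissible orderings uniformly at random.'' The $w$ admissible orderings are the permutations of the clique $\iota(v)$ that avoid every forbidden prefix in $\mathrm{FP}(v,\mathcal{T})$; there can be exponentially many of them (up to $|\iota(v)|!$), and they are not stored anywhere, so this is not an $O(1)$ or even obviously polynomial operation, and your charging argument in Step~4 silently assumes it costs time proportional to the clique size. This is precisely where the paper does its main technical work for the sampling result: it implements this step by rejection sampling (draw a uniformly random permutation of the clique, check in $O(|\iota(v)|)$ time whether it has a forbidden prefix, retry if so) and proves that the acceptance probability $\phi(S,\mathrm{FP})/|S|!$ is at least $1/2$ for every possible chain of forbidden prefixes, via a worst-case reduction to counting irreducible (indecomposable) permutations and a bound on $\sum_i \binom{|S|}{i}^{-1}$. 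Without that bound the expected number of retries is not controlled; and without \emph{some} expected-$O(|\iota(v)|)$ method for this step you only get the earlier $O(|\iota(v)|^2)$-per-clique routine, which yields $O(|V|+|E|)$ rather than the claimed expected time linear in the number of nodes. You should either supply the rejection-sampling analysis (or an equivalent sequential construction of an admissible permutation) or weaken the claimed sampling time.
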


Both algorithms are easy to implement and very fast in practice
-- outperforming previous approaches by a large margin in experimental
evaluations~\citep{wienobst2021counting}. We complement our theoretical findings with optimized implementations
in \cpp{} and Julia to facilitate the application to real-world problems.

Particularly to this end, a special focus in this paper is devoted to two
applications of the Clique-Picking algorithm: Active
learning of causal DAGs~\citep{He2008} and the global-IDA algorithm for causal effect
identification~\citep{maathuis2009estimating}.
We argue that it is
desirable in both cases to compute the size of an MEC (or more
precisely, an \emph{interventional} MEC), a task which, as we show,
can be efficiently solved using Clique-Plicking as a
subroutine.\footnote{Both applications have been considered
  before (see~\cite{Ghassami2019} and other works), but never from the point of view of
  interventional MECs. Instead the more general setting of counting
  with background knowledge has been used, which we show to be computationally
  intractable. In particular,
  utilizing interventional MECs in the global-IDA algorithm connects
  two
disjoint subfields of causality.}
Previously,
the task of counting the number of Markov equivalent DAGs was \emph{avoided}
by researchers in these fields due to its apparent intractability,
leading to the prevalent use
of heuristics at the cost of accuracy.
We demonstrate that through the new methods developed in this paper, such
heuristics are not needed anymore, as the size of an MEC can now be
computed fast in practice and empirically validate this
claim.\footnote{The code for the experiments can be accessed at the address  \url{https://github.com/mwien/counting-with-applications}.}

Finally, we complete the complexity-theoretical study of the counting
problem by investigating the more general problem of counting the
number of DAGs with additional background knowledge. We show that this problem is intractable under common
complexity-theoretical assumptions by connecting it to classical counting problems.

\paragraph*{}
The paper is split into roughly two parts: In the first less-technical
half, we formally introduce the problem of computing the size of an MEC and
give the well-known reduction to a purely graph-theoretical problem
(Section~\ref{sec:preliminaries}), whose solution (Clique-Picking)
will be presented later in Section~\ref{sec:cliquepicking}.
Before this, we derive  and analyze in Section~\ref{sec:applications}
the two applications mentioned above. For these,
Clique-Picking may be viewed as a black-box algorithm, which allows us
to defer the technically demanding introduction and analysis 
of this algorithm to Section~\ref{sec:cliquepicking} in the second half of this paper. Building on these techniques, in Sec.~\ref{sec:uniform:sampling}, we give the polynomial-time algorithm for uniform sampling
of Markov equivalent DAGs and afterwards generalize the results
to the setting of additional background knowledge in
Section~\ref{sec:count:background}. In the subsequent
Section~\ref{sec:conclusion} we give conclusions. To improve readability, we move some 
technical proofs to Section~\ref{sec:missing:proofs}.

\section{Preliminaries}\label{sec:preliminaries}
A graph $G = (V_G, E_G)$ consists of a set of vertices $V_G$ and a set
of edges $E_G \subseteq V_G \times V_G$. Throughout this paper,
whenever the graph $G$ is clear from the context, we will drop the
subscript in this and analogous notations.
An edge $u-v$ is undirected if $(u, v), (v, u) \in E_G$ and directed
$u \rightarrow v$ if $(u,v) \in E_G$ and $(v,u) \not\in E_G$. In the
latter case $u$ is called a parent of $v$. Graphs which contain
undirected and directed edges are called partially directed. Directed
acyclic graphs (DAGs) contain only directed edges and no directed cycle. 
We refer to the neighbors of a vertex~$u$ in $G$ as $N_G(u)$. A
\emph{clique} is a set of pairwise adjacent vertices. We denote the
induced subgraph of $G$ on a set $C\subseteq V$ by $G[C]$. The undirected components of a partially directed
graph $G$ are the connected components in the
undirected graph one obtains after removing all directed
edges from $G$. 

In causality theory, DAGs are used as mathematical models to represent causal 
relations \citep{pearl2009causality}.
For a DAG $D = (V,E)$, the vertices $V=\{1,\ldots, n\}$ represent the random variables
$X=(X_1,\ldots,X_n)$.  A distribution $f$ over $X$ is \emph{Markov} to $D$ if it factorizes as 
$$f(x_1,\ldots, x_n)=\prod_{i\in V} f(x_i \mid \textit{pa}_{i}(D)),$$ 
where $\textit{pa}_i (D)$ denotes the values $x_{j_1},\ldots,x_{j_k}$
of the parents $\textit{Pa}_i (D)=\{{j_1},\ldots,{j_k}\}$ 
 of vertex $i$ in $D$. Two DAGs $D_1$ and $D_2$ are 
\emph{Markov equivalent} 
if for any positive distribution $f$, $f$ is Markov to $D_1$ 
if, and only, if it is Markov to $D_2$.

Due to \citet{verma1990equivalence}, we know a graphical criterion to decide
this relation: Two DAGs are Markov equivalent if, and only if,
they have the same \emph{skeleton} and the same \emph{v-structures}.
The skeleton of a (partially) directed graph $G$ is the undirected 
graph that results from ignoring edge directions. A
v-structure in a (partially) directed graph $G$ is an ordered
triple of vertices $(a,b,c)$ which induce the subgraph $a \rightarrow
b \leftarrow c$.

The Markov equivalence relation partitions the set of all DAGs into
 Markov equivalence classes (MECs), where we denote the MEC of a DAG $D$  as $[D]$.
An MEC can be represented by a CPDAG  $G$ (\emph{completed partially
directed acyclic graph}, also known as an \emph{essential graph}), 
which is the \emph{union graph} of the DAGs in
the equivalence class it represents. When we speak of the union
of a set of graphs $\{G_1 = (V, E_1), \dots, G_k = (V, E_k)\}$,
we think of the graph $G = (V, \bigcup_{i=1}^k E_k)$.
The MEC represented by $G$ is denoted as $[G]$.
The undirected components of a CPDAG are \emph{undirected and connected chordal graphs}
(UCCGs)~\citep{Andersson1997}. In a chordal graph, every undirected
cycle of lengths $\geq 4$ contains a chord, that is an edge between
two vertices of the cycle, which is not part of the cycle.

The problem this paper addresses is, generally speaking, the opposite
direction w.r.t.\ the definition of a CPDAG above: Given a
CPDAG $G$, we aim to compute $| \, [G] \, |$. To do so, we introduce
the following terms\footnote{Most of them are introduced more
  generally for a ``partially directed graph $G$'', you may replace this
  with a ``CPDAG $G$'' during the first read.}: An \emph{extension} of a partially directed
graph $G$ is obtained by replacing
each undirected edge with a directed one.\footnote{We also use the term
  orientation in addition to extension in this paper, which more or less means the same
  thing. In accordance with the literature, we prefer to use orientation for undirected graphs
  and extension for partially directed graphs.} It is called a
\emph{consistent} extension if it is acyclic and does not create a new 
v-structure not present in $G$ (this ensures that for a CPDAG $G$, the
set of consistent extensions constitutes $[G]$). We will denote the number of
consistent extensions of graph $G$ as $\hext(G)$. Hence, if $G$ is a
CPDAG, then $\hext(G)$ is the size of the corresponding Markov
equivalence class. We also refer to the \emph{computational problem}
of counting the number of consistent extensions for a given 
partially directed graph as $\hext$:
\begin{problem}{$\hext$} 
  \begin{parameterizedproblem}
    \instance A partially directed graph $G = (V, E)$.
    \result The number of consistent extensions of $G$. 
  \end{parameterizedproblem}
\end{problem}
By restricting the instances to graphs of a specific graph class, 
we derive the $\hext$ problem for this class.
Naturally, of particular interest is the class of CPDAGs. In this
paper, we also study the problem $\hext$ for interventional essential
graphs, general PDAGs, as well as for MPDAGs. While these graph classes are
formally defined and discussed later, we want to highlight an
important difference: Counting for MPDAGs and PDAGs\footnote{These
  classes are essentially equivalent when it comes to the time complexity
  of the counting task,
  as for every PDAG, there exists an MPDAG with the same consistent
extensions and it can be computed in polynomial time.} is
intractable (more precisely, we show that it is \sharpP-hard). For interventional essential graphs, however, it is
possible to perform this task in polynomial-time, as they share important properties with
CPDAGs (this is formalized in the subsequent section).

\begin{figure} 
  \centering
  \begin{tikzpicture}
    \node (1) at (0,-1.5) {$1$};
    \node (2) at (1.5,-1.5) {$2$};
    \node (3) at (0,-3) {$3$};
    \node (4) at (1.5,-3) {$4$};
    \node (5) at (0,-4.5) {$5$};
    \node (6) at (1.5,-4.5) {$6$};
    \node (7) at (0,-6) {$7$};
    \node (8) at (1.5,-6) {$8$};
    \node (l1) at (0.75, -7) {CPDAG};
    \graph[use existing nodes, edges = {edge}] {
      1 -- {2};
      3 -- {5,6};
      4 -- 6;
      5 -- {6,7};
    };

    \graph[use existing nodes, edges = {arc}] {
      3 -> {1,2};
      4 -> {1,2};
      6 -> 8;
      7 -> 8;
    };
    
    \node (1) at (3,-1.5) {$1$};
    \node (2) at (4.5,-1.5) {$2$};
    \node (3) at (3,-3) {$3$};
    \node (4) at (4.5,-3) {$4$};
    \node (5) at (3,-4.5) {$5$};
    \node (6) at (4.5,-4.5) {$6$};
    \node (7) at (3,-6) {$7$};
    \node (8) at (4.5,-6) {$8$};    
    \node (l2) at (3.75,-7) {UCCGs};

    \graph[use existing nodes, edges = {edge}] {
      1 -- {2};
      3 -- {5,6};
      4 -- 6;
      5 -- {6,7};
    };

    \node (1) at (6,-1.5) {$1$};
    \node (2) at (7.5,-1.5) {$2$};
    \node (3) at (6,-3) {$3$};
    \node (4) at (7.5,-3) {$4$};
    \node (5) at (6,-4.5) {$5$};
    \node (6) at (7.5,-4.5) {$6$};
    \node (7) at (6,-6) {$7$};
    \node (8) at (7.5,-6) {$8$};
    \node (l3) at (6.75, -7) {AMOs};

    \graph[use existing nodes, edges = {arc}] {
      2 -> 1;
      3 -> {5,6};
      6 -> 4;
      5 -> 7;
      6 -> 5;
    };
    
    \node (1) at (6+3,-1.5) {$1$};
    \node (2) at (7.5+3,-1.5) {$2$};
    \node (3) at (6+3,-3) {$3$};
    \node (4) at (7.5+3,-3) {$4$};
    \node (5) at (6+3,-4.5) {$5$};
    \node (6) at (7.5+3,-4.5) {$6$};
    \node (7) at (6+3,-6) {$7$};
    \node (8) at (7.5+3,-6) {$8$};
    \node (l3) at (6.75+3, -7) {Cons.\ ext.\ };

    \graph[use existing nodes, edges = {arc}] {
      2 -> 1;
      3 -> {1,2,5,6};
      4 -> {1,2};
      6 -> 4;
      5 -> 7;
      6 -> {5,8};
      7 -> 8;
    };
    
  \end{tikzpicture}
  \caption{A CPDAG with its UCCGs and possible AMOs. Substituting these into the original CPDAG gives a consistent extension. The AMOs can be constructed independently of each other and the directed part of the CPDAG. Hence, the number of consistent extensions can be computed as the product of the number of AMOs for each UCCG.
    }
  \label{fig:cpdaguccgs}
\end{figure}
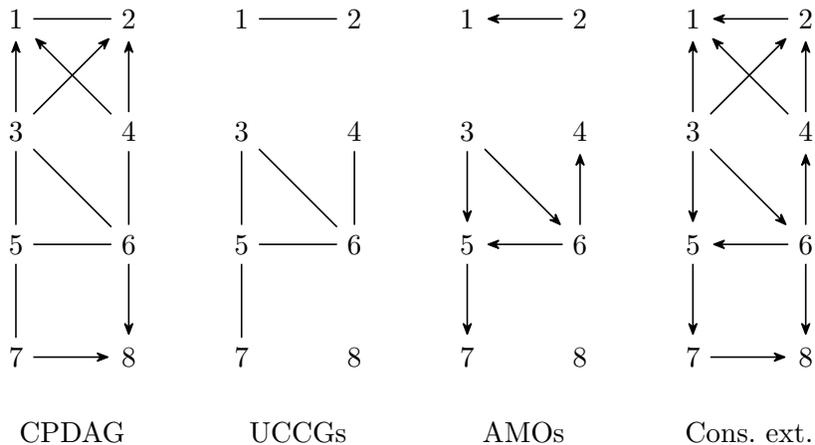

To start with, it is a crucial fact that, for a CPDAG $G$, each UCCG of $G$
can be oriented independently of the other UCCGs 
and the directed part of $G$~\citep{Andersson1997}. This means that to
obtain a consistent extension of $G$, it suffices to orient each UCCG
without creating a cycle or a v-structure. In line with the literature, we term such orientations
AMOs (acyclic moral orientations). 
In accordance with the notation above, we define  $\hamo(H)$ as the
number of AMOs of a connected chordal graph (i.e., UCCG) $H$. We illustrate the introduced terms in
Fig.~\ref{fig:cpdaguccgs}, where the UCCGs of the given CPDAG can be
oriented independently to yield a consistent extension.
Thus, we can conclude that for a CPDAG $G$ the size of $| \, [G] \, |$
is equal to
\begin{equation}\label{eq:hamo:in:cpdag}
  \hext(G) =\prod_{\text{$H$ is UCCG in $G$}} \hamo(H) .
\end{equation}
In other words, the problem $\hext$ of counting the number of DAGs in an
MEC reduces to counting the number of AMOs in a
UCCG~\citep{Gillispie2002,He2008}. We tackle this purely
graph-theoretical problem in Section~\ref{sec:cliquepicking} and derive the first polynomial-time
algorithm for it. For the subsequent Section~\ref{sec:applications}, which focuses on
applications for our methods, it is sufficient to know that such an algorithm exists.

\section{Some Applications of Our Methods} 
\label{sec:applications}
The techniques developed in this paper can be applied to important
tasks in causal discovery and inference. In this section, we highlight
two possible applications: (i) to improve efficiency of learning from 
interventional data and (ii) estimating causal effects from an MEC representation.

\subsection{Incorporating Observational and Interventional Data Efficiently}
As discussed above, when dealing with purely
observational data, a DAG is only
identifiable up to its MEC~\citep{Andersson1997}, which often
makes it impossible to discover the  unique structure. In some cases,
however, additional experimental (also called interventional) data 
may be available or can be produced, in order to resolve the
ambiguities. 
There is a large body of work in the field addressing this problem 
of estimating and explaining a causal structure from both observational \emph{and} 
interventional data.
Analogously to the observational case, all DAGs which satisfy 
the conditional independencies in both observational and interventional 
data form an equivalence class represented by an \emph{interventional 
essential graph}. This graph (as the CPDAG for MECs) is formed by taking the
union of the DAGs in this class.
In the following, we define the concept more formally. 

Let $D=(V,E)$ be a DAG and $f$ be Markov to $D$.
For a set of targets $I \subseteq V$, an intervention with perturbation targets $i\in I$ 
models the effect of replacing the observational distribution 
$f(x_i \mid \textit{pa}_{i}(D))$ by $f^I(x_i)$ 
for all $i\in I$. The intervention graph of $D$ is the DAG $D^I=(V, E^I)$, 
where $E^I =\{ u \to v \in E \mid v\not\in I\}$. Given a family of targets
$\cI \subseteq 2^V$ the pair $(f, \{f^{I}\}_{I \in \cI})$ 
is $\cI$-Markov to $D$ if $f$ is Markov to $D$ and  
for all $I\in \cI$ the interventional distribution $f^I$ factors as
$$
   f^I(x_1,\ldots,x_n)=\prod_{i\not\in I} f(x_i \mid \textit{pa}_{i}(D))\prod_{i\in I} f^I(x_i).$$
Two DAGs $D_1$ and $D_2$ are $\cI$-Markov equivalent if for all positive distributions, 
$(f, \{f^{I}\}_{I \in \cI})$ is $\cI$-Markov for $D_1$ if and only if
it
is $\cI$-Markov to $D_2$. This relation can be expressed in a graphical language as follows:
For a conservative family of targets\footnote{A family of targets $\cI$ is called conservative if for all
$v\in V$, there is some $I \in \cI$ such that $v\not\in I$. Note that, e.g., any 
family containing the empty set $\emptyset$ is conservative.
In this section we assume that the target families are conservative.}  
$\cI$, $D_1$ and $D_2$ are $\cI$-Markov equivalent if for all $I\in \cI$,
$D^I_1$ and $D^I_2$ have the same skeleton and the same v-structures. 
The $\cI$-Markov equivalence class of a DAG $D$ ($\cI$-MEC) is denoted by $[D]_{\cI}$
and can be represented by the $\cI$-essential graph
$\cE_{\cI}(D)=\bigcup_{D'\in [D]_{\cI}} D'$.  
A partially directed graph $G$ is called an $\cI$-essential graph if $G = \cE_{\cI}(D)$ for some DAG $D$.
An example and further explanation regarding interventional MECs and essential
graphs is given in Fig.~\ref{fig:intessgraphs}.

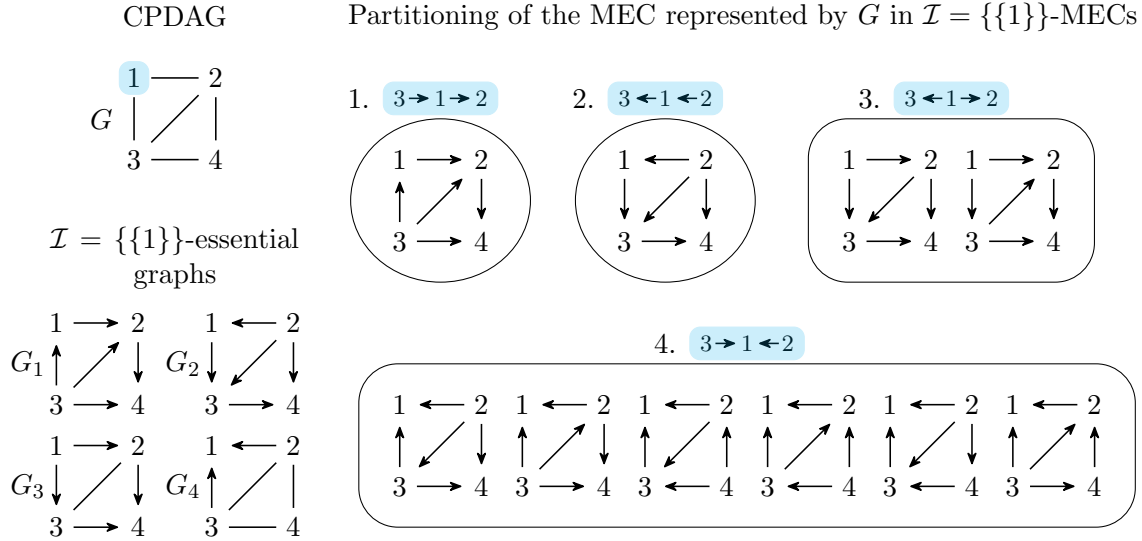
\begin{figure}
  \resizebox{\textwidth}{!}{\begin{tikzpicture}[scale=1.1]
  \node (l1) at (0.25,3.75) {CPDAG};
  \node (l2) at (7.2,3.75) {Partitioning of the MEC represented by $G$  in $\mathcal{I} = \{\{1\}\}$-MECs};
  \node[text width = 4.1cm, align=center] (l3) at (0.25,.8)  {
  $\mathcal{I} = \{\{1\}\}$-essential\\graphs};
  
    \node (G) at (-0.65,2.5) {$G$};
    \node (1) at (-0.25,3) {$1$};
    \node (2) at (0.75,3) {$2$};
    \node (3) at (-0.25,2) {$3$};
    \node (4) at (0.75,2) {$4$};
    \graph[use existing nodes, edges = {edge}] {
      1 -- 2 -- 4 -- 3 -- 1;
      2 -- 3;
    };

    \filldraw[cyan, opacity = 0.2, rounded corners] (-0.25-0.18,3-0.2) rectangle (-0.25+0.18,3+0.2);

    \node (G1) at (-1.55,0-.5) {$G_1$};
    \node (1) at (-1-0.2,0) {$1$};
    \node (2) at (0-0.2,0) {$2$};
    \node (3) at (-1-0.2,-1) {$3$};
    \node (4) at (0-0.2,-1) {$4$};
    \graph[use existing nodes, edges = {arc}] {
      3 -- 1 -- 2 -- 4;
      3 -- 4;
      3 -- 2;
    };

    \node (G2) at (0.5-0.15,0-0.5) {$G_2$};
    \node (1) at (.5+0.2,0) {$1$};
    \node (2) at (1.5+0.2,0) {$2$};
    \node (3) at (.5+0.2,-1) {$3$};
    \node (4) at (1.5+0.2,-1) {$4$};
    \graph[use existing nodes, edges = {arc}] {
      2 -- 1 -- 3 -- 4;
      2 -- 3;
      2 -- 4;
    };

   \node (G3) at (-1.55,-1.5-.5) {$G_3$};
    \node (1) at (-1-0.2,-1.5) {$1$};
    \node (2) at (0-0.2,-1.5) {$2$};
    \node (3) at (-1-0.2,-2.5) {$3$};
    \node (4) at (0-0.2,-2.5) {$4$};
    \graph[use existing nodes, edges = {edge}] {
      2 -- 3;
    };
    \graph[use existing nodes, edges = {arc}] {
      1 -- 2 -- 4;
      1 -- 3 -- 4;
    };    

   \node (G4) at (.5-0.15,-1.5-.5) {$G_4$};
    \node (1) at (.5+0.2,-1.5) {$1$};
    \node (2) at (1.5+0.2,-1.5) {$2$};
    \node (3) at (.5+0.2,-2.5) {$3$};
    \node (4) at (1.5+0.2,-2.5) {$4$};
    \graph[use existing nodes, edges = {edge}] {
      3 -- 2 -- 4 -- 3;
    };   
    \graph[use existing nodes, edges = {arc}] {
      3 -- 1;
      2 -- 1;
    };    

    \node (1) at (3, 2) {$1$};
    \node (2) at (4, 2) {$2$};
    \node (3) at (3, 1) {$3$};
    \node (4) at (4, 1) {$4$};
    \graph[use existing nodes, edges = {arc}] {
      3 -- 1 -- 2 -- 4;
      3 -- 4;
      3 -- 2;
    };

    \node (l1) at (2.5, 2.75) {1.};

    \node[inner sep = 1pt] (3) at (3,2.75) {\footnotesize $3$};
    \node[inner sep = 1pt] (1) at (3.5,2.75) {\footnotesize $1$};
    \node[inner sep = 1pt] (2) at (4,2.75) {\footnotesize $2$};
    \graph[use existing nodes, edges = {arc}] {
      3 -- 1 -- 2;
    };

    \filldraw[cyan, opacity = 0.2, rounded corners] (3-0.2,2.75-0.2) rectangle (4+0.2,2.75+0.2);
    
    \draw (3.5, 1.5) ellipse (1.1cm and 1cm);

    \node (1) at (5.75, 2) {$1$};
    \node (2) at (6.75, 2) {$2$};
    \node (3) at (5.75, 1) {$3$};
    \node (4) at (6.75, 1) {$4$};
    \graph[use existing nodes, edges = {arc}] {
      2 -- 1 -- 3 -- 4;
      2 -- 3;
      2 -- 4;
    };

    \node (l2) at (5.25, 2.75) {2.};
    
    \node[inner sep = 1pt] (3) at (5.75,2.75) {\footnotesize $3$};
    \node[inner sep = 1pt] (1) at (6.25,2.75) {\footnotesize $1$};
    \node[inner sep = 1pt] (2) at (6.75,2.75) {\footnotesize $2$};
    \graph[use existing nodes, edges = {arc}] {
      2 -- 1 -- 3;
    };

    \filldraw[cyan, opacity = 0.2, rounded corners] (5.75-0.2,2.75-0.2) rectangle (6.75+0.2,2.75+0.2);
    
    \draw (6.25, 1.5) ellipse (1.1cm and 1cm);

    \node (1) at (8.5, 2) {$1$};
    \node (2) at (9.5, 2) {$2$};
    \node (3) at (8.5, 1) {$3$};
    \node (4) at (9.5, 1) {$4$};
    \graph[use existing nodes, edges = {arc}] {
      1 -- 2 -- 4;
      1 -- 3 -- 4;
      2 -- 3;
    };

    \node (1) at (10, 2) {$1$};
    \node (2) at (11, 2) {$2$};
    \node (3) at (10, 1) {$3$};
    \node (4) at (11, 1) {$4$};
    \graph[use existing nodes, edges = {arc}] {
      1 -- 2 -- 4;
      1 -- 3 -- 4;
      3 -- 2;
    };

    \node (l3) at (8.75, 2.75) {3.};
    \node[inner sep = 1pt] (3) at (9.25,2.75) {\footnotesize $3$};
    \node[inner sep = 1pt] (1) at (9.75,2.75) {\footnotesize $1$};
    \node[inner sep = 1pt] (2) at (10.25,2.75) {\footnotesize $2$};
    \graph[use existing nodes, edges = {arc}] {
      1 -- 3;
      1 -- 2;
    };

    \filldraw[cyan, opacity = 0.2, rounded corners] (9.25-0.2,2.75-0.2) rectangle (10.25+0.2,2.75+0.2);

    \draw[rounded corners=15pt] (8, 2.5) rectangle (11.5, 0.5);

    \node (1) at (3, -1) {$1$};
    \node (2) at (4, -1) {$2$};
    \node (3) at (3, -2) {$3$};
    \node (4) at (4, -2) {$4$};
    \graph[use existing nodes, edges = {arc}] {
      2 -- 1;
      3 -- 1;
      2 -- 3;
      3 -- 4;
      2 -- 4;
    };

    \node (1) at (4.5, -1) {$1$};
    \node (2) at (5.5, -1) {$2$};
    \node (3) at (4.5, -2) {$3$};
    \node (4) at (5.5, -2) {$4$};
    \graph[use existing nodes, edges = {arc}] {
      2 -- 1;
      3 -- 1;
      3 -- 2;
      3 -- 4;
      2 -- 4;
    };

    \node (1) at (6, -1) {$1$};
    \node (2) at (7, -1) {$2$};
    \node (3) at (6, -2) {$3$};
    \node (4) at (7, -2) {$4$};
    \graph[use existing nodes, edges = {arc}] {
      2 -- 1;
      3 -- 1;
      2 -- 3;
      4 -- 3;
      4 -- 2;
    };

    \node (1) at (7.5, -1) {$1$};
    \node (2) at (8.5, -1) {$2$};
    \node (3) at (7.5, -2) {$3$};
    \node (4) at (8.5, -2) {$4$};
    \graph[use existing nodes, edges = {arc}] {
      2 -- 1;
      3 -- 1;
      3 -- 2;
      4 -- 3;
      4 -- 2;
    };

    \node (1) at (9, -1) {$1$};
    \node (2) at (10, -1) {$2$};
    \node (3) at (9, -2) {$3$};
    \node (4) at (10, -2) {$4$};
    \graph[use existing nodes, edges = {arc}] {
      2 -- 1;
      3 -- 1;
      2 -- 3;
      4 -- 3;
      2 -- 4;
    };

    \node (1) at (10.5, -1) {$1$};
    \node (2) at (11.5, -1) {$2$};
    \node (3) at (10.5, -2) {$3$};
    \node (4) at (11.5, -2) {$4$};
    \graph[use existing nodes, edges = {arc}] {
      2 -- 1;
      3 -- 1;
      3 -- 2;
      3 -- 4;
      4 -- 2;
    };

    \node (l4) at (6.25, -.25) {4.};
    \node[inner sep = 1pt] (3) at (6.75,-.25) {\footnotesize $3$};
    \node[inner sep = 1pt] (1) at (7.25,-.25) {\footnotesize $1$};
    \node[inner sep = 1pt] (2) at (7.75,-.25) {\footnotesize $2$};
    \graph[use existing nodes, edges = {arc}] {
      3 -- 1;
      2 -- 1;
    };
    
    \filldraw[cyan, opacity = 0.2, rounded corners] (6.75-0.2,-.25-0.2) rectangle (7.75+0.2,-.25+0.2);
        
    \draw[rounded corners = 15pt] (2.5,-0.5) rectangle (12, -2.5);
  \end{tikzpicture}}
  \caption{For the CPDAG $G$ on the top left, we show on the right the interventional MECs
  for the family $\mathcal{I} = \{\{1\}\}$, i.e., an intervention is performed on the variable corresponding to vertex $1$ (marked in color). The  possible results for the intervention are shown in the colored regions and the DAGs are partitioned according to those configurations. Each $\{\{1\}\}$-MEC can be represented by the corresponding interventional essential graph on the bottom left, which encodes the still unknown edge orientations as undirected edges.}
  \label{fig:intessgraphs}
\end{figure}

The key property of interventional essential graphs, for our purposes,
is that their undirected components are chordal and induced
subgraphs, just as in CPDAGs:
\begin{proposition}[\citet{hauser2012characterization}] 
\label{prop:hauser}
Let $G$ be an $\cI$-essential graph representing an $\cI$-MEC $[D]_{\cI}$ for 
a target family $\cI$. Then, the undirected components of $G$ are
chordal (we will refer to them as UCCGs, just as for CPDAGs). 
Moreover, a DAG $D'$ is in $[D]_{\cI}$ if and only if $D'$ can be obtained from 
$G$ by acyclic moral orientations of the UCCGs of $G$ independently of each other.
\end{proposition}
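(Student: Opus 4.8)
The plan is to split the statement into its three assertions. Connectedness of the undirected components is immediate — ``component'' already means a maximal connected piece. For chordality and for the ``only if'' direction of the characterization I would argue directly, using that $\cI$ is conservative together with the hypothesis that, for every $D' \in [D]_{\cI}$, the graphs $(D')^{I}$ and $D^{I}$ share skeleton and v-structures for all $I \in \cI$. The ``if'' direction — that installing an arbitrary AMO in each undirected component of $G$, independently, always produces a DAG in $[D]_{\cI}$ — is the substantive part and amounts to the interventional analogue of the Meek / Andersson--Madigan--Perlman machinery for CPDAGs.

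For chordality, suppose an undirected component $H$ of $G$ contained a chordless cycle $v_1 - v_2 - \dots - v_k - v_1$ with $k \ge 4$, and fix any $D' \in [D]_{\cI}$. Reading off the orientations of the cycle edges in $D'$ and taking a vertex that is a sink along the cycle yields a triple $v_{j-1} \to v_j \leftarrow v_{j+1}$ with $v_{j-1} \nsim v_{j+1}$ (nonadjacency since the cycle is chordless and $k \ge 4$), i.e.\ a v-structure of $D'$. By conservativeness pick $I \in \cI$ with $v_j \notin I$; since $(D')^{I}$ is obtained from $D'$ by deleting only arcs with head in $I$, the arcs $v_{j-1} \to v_j$ and $v_{j+1} \to v_j$ survive, so $(v_{j-1}, v_j, v_{j+1})$ is a v-structure of $(D')^{I}$, hence of $D^{I}$, hence of $(D'')^{I}$ for \emph{every} $D'' \in [D]_{\cI}$. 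Then every such $D''$ contains the arc $v_{j-1} \to v_j$, so $G = \cE_{\cI}(D) = \bigcup_{D''} D''$ has $v_{j-1} \to v_j$ as a \emph{directed} edge — contradicting $v_{j-1} - v_j \in H$. The same argument gives ``only if'': if $D' \in [D]_{\cI}$ contained a triple $a \to b \leftarrow c$ with $a \nsim c$ and both $a - b$, $b - c$ undirected in $G$, then choosing $I \in \cI$ with $b \notin I$ would force $a \to b$ to be directed in $G$ — a contradiction. Hence $D'$ is acyclic, agrees with $G$ on all directed edges (by definition of the union graph), and restricts to a v-structure-free orientation — an AMO — on each undirected component, independently of the others.

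The remaining ``if'' direction is where I expect the real work. Given an AMO $A_t$ of each undirected component $H_t$ of $G$, one must show that the graph $D'$ obtained by installing all the $A_t$ and keeping the directed edges of $G$ is (i) acyclic, (ii) without new v-structures, and (iii) $\cI$-Markov equivalent to $D$. Claims (i)--(ii) say that $G$ is a chain graph whose chordal components can be oriented independently into AMOs without interaction through its directed part (the interventional counterpart of the Andersson--Madigan--Perlman structure theorem), and (iii) says that such a re-orientation is indistinguishable from $D$ under each interventional distribution $f^{I}$; both can be extracted from the characterization of $\cI$-essential graphs via the $\cI$-analogue of Meek's orientation rules, which in particular certifies that $G$ is already closed under those rules, so no undirected edge is secretly forced. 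An alternative route reduces to the ordinary CPDAG case through an augmented graph, adding a fresh node $\zeta_I$ for each target $I \in \cI$ with arcs $\zeta_I \to i$ for all $i \in I$; the delicate point there — and a further place where conservativeness enters — is to set the construction up so that each $\zeta_I$ is forced to remain a source in every member of the augmented class (a naive augmentation can fail, since a target node that is ``never observed'' may have its intervention arc reversed), after which the $\zeta_I$-arcs are essential, the undirected part of the augmented essential graph lies entirely within $V$ and coincides with that of $G$, and \citet{Andersson1997} transfers both conclusions verbatim. In either approach, establishing this closure-and-realizability property is the main obstacle.
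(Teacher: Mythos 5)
This proposition is not proved in the paper at all: it is imported verbatim from \citet{hauser2012characterization}, so there is no in-paper argument to compare against. Judged on its own merits, your proposal establishes two of the three assertions correctly. The chordality argument (local sink of an acyclic orientation of a chordless cycle of length $\geq 4$ gives a v-structure; conservativeness supplies an $I \in \cI$ with $v_j \notin I$ so that the v-structure survives in $(D')^I$, is therefore shared by every $(D'')^I$, and hence forces the arc in the union graph) is sound, and the same mechanism correctly yields the ``only if'' direction. This is essentially the standard CPDAG argument lifted to the interventional setting, with conservativeness doing exactly the work you assign to it.

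The gap is the ``if'' direction, which you explicitly leave as ``the main obstacle'' with two sketched strategies and no execution. This is not a cosmetic omission: it is the substantive content of the proposition, and it is precisely the part that requires the structural theory of $\cI$-essential graphs developed by \citet{hauser2012characterization} (their characterization of strongly $\cI$-protected edges and the resulting chain-graph structure with chordal chain components, generalizing \citet{Andersson1997}). Neither of your two routes is carried out: the ``$\cI$-analogue of Meek's rules'' route presupposes the very closure property you need to prove, and the augmentation route via fresh source nodes $\zeta_I$ is, as you yourself note, delicate (one must prevent arc reversal at the $\zeta_I$) and is not shown to work. As a proof proposal this therefore proves roughly half of the statement; for the purposes of this paper the honest course is the one the authors take, namely to cite the result, but if you intend to supply a self-contained proof you must actually establish that installing arbitrary AMOs in the undirected components (i) yields an acyclic graph with no new v-structures in any $D^I$, $I \in \cI$, and (ii) that every such graph is realized by some member of $[D]_{\cI}$ — the realizability half being the part that genuinely depends on $G$ being an $\cI$-essential graph and not merely some chain graph with chordal components.
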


For example, in the $\cI$-essential graph $G_4$ in Fig.~\ref{fig:intessgraphs}
representing the $\cI$-MEC determined by the intervention result $3 \rightarrow 1 \leftarrow 2$, 
the undirected component is the triangle 
 \tikz[baseline={(0,-0.12)}]{ \node (2) at (0,0)  {$2$};
  \node (3) at (0.8,0)  {$3$};
  \node (4) at (1.6,0)  {$4$};
  \graph[use existing nodes, edges = {edge}] {
    2 -- 3 -- 4 --[bend right] 2;
  }}. 
The $\cI$-MEC consists of all six DAGs which can be obtained from 
$G$ by acyclic moral orientations of this triangle. In general, this
statement implies that for interventional essential graph $G$, the number of
DAGs in the corresponding equivalence class $[D]_{\cI}$ is
\[
 |[D]_{\cI}|\ =\  \prod_{H \text{ is UCCG in } G} \hamo(H)
\]
and, thus, this can be efficiently computed with the Clique-Picking
algorithm. This leads to the following, main theorem of this section,
which can be proved in  the same manner as Theorem~\ref{thm:main}
and~\ref{thm:main:sampling}.

\begin{theorem} \label{thm:main:interv}
   For a given interventional essential graph representing an $\cI$-MEC
   $[D]_{\cI}$, the number of DAGs in $[D]_{\cI}$ can be computed 
   in polynomial time. Moreover, sampling uniformly a DAG in $[D]_{\cI}$ 
   can be done in linear time, after preprocessing. 
\end{theorem}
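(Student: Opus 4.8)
The plan is to reduce to the UCCG setting exactly as in the CPDAG case and then invoke Clique-Picking as a black box. First I would compute the undirected components $H_1,\dots,H_k$ of the input $\cI$-essential graph $G$, i.e., the connected components of the subgraph consisting of the undirected edges of $G$; this takes linear time. By Proposition~\ref{prop:hauser} each $H_j$ is a connected chordal graph (a UCCG), and a DAG $D'$ lies in $[D]_{\cI}$ if and only if the restriction of $D'$ to each $H_j$ is an acyclic moral orientation of $H_j$, with the orientations of the $H_j$ chosen independently of one another and of the directed part of $G$. Consequently
\[
  |[D]_{\cI}| \;=\; \prod_{j=1}^{k}\hamo(H_j),
\]
mirroring Eq.~\eqref{eq:hamo:in:cpdag} for CPDAGs.

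Second, I would run the Clique-Picking algorithm (Algorithm~\ref{alg:cliquepicking}, whose correctness and polynomial running time for computing $\hamo$ on an \emph{arbitrary} connected chordal graph is established in Sec.~\ref{sec:cliquepicking} and already underlies Theorem~\ref{thm:main}) on each $H_j$, and return the product of the returned values. Since Clique-Picking is not tailored to UCCGs arising specifically from CPDAGs, nothing beyond Proposition~\ref{prop:hauser} is needed, and the total running time is polynomial in the size of $G$.

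Third, for uniform sampling I would use the sampling variant of Clique-Picking from Theorem~\ref{thm:main:sampling}. In the preprocessing phase, for each UCCG $H_j$ I would run the adapted Clique-Picking to build the data structures that permit sampling a uniformly random AMO of $H_j$ in expected linear time. To draw a uniform sample from $[D]_{\cI}$, I would sample an AMO of each $H_j$ independently and output the union of these orientations with the directed part of $G$; by the ``only if'' direction of Proposition~\ref{prop:hauser} the result is always a DAG in $[D]_{\cI}$, and by the ``if'' direction together with independence every element of $[D]_{\cI}$ is produced with probability $\prod_{j} 1/\hamo(H_j) = 1/|[D]_{\cI}|$. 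The sampling step runs in expected linear time in the number of nodes of $G$.

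The only place where the argument is not entirely routine is the appeal to Proposition~\ref{prop:hauser}: it is what guarantees both chordality of the undirected components and, crucially, that an arbitrary independent choice of AMOs never conflicts with the directed edges to create a cycle or a new v-structure \textendash\ exactly the property that validates the product formula and hence the reduction to Clique-Picking. Since that proposition is available from \citet{hauser2012characterization}, the remaining work is the bookkeeping described above, structurally identical to the proofs of Theorems~\ref{thm:main} and~\ref{thm:main:sampling}.
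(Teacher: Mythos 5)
Your proposal is correct and follows essentially the same route as the paper, which proves this theorem only by remarking that it follows ``in the same manner as Theorem~\ref{thm:main} and~\ref{thm:main:sampling}'' once Proposition~\ref{prop:hauser} supplies the chordality of the undirected components and the independence of their orientations, yielding the product formula $|[D]_{\cI}| = \prod_{H}\hamo(H)$. Your write-up merely makes explicit the bookkeeping (per-component Clique-Picking for counting, per-component independent sampling for the uniform draw) that the paper leaves implicit.
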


The fact that the size of interventional essential graphs can be
computed efficiently can be utilized in the context of \emph{active learning} 
of the underlying causal DAG. It describes the process of designing experiments
(i.e., interventions) in order to recover the DAG. 
A natural approach is to start estimating the essential graph
(CPDAG) with observational data
and afterwards, through experimentation, inferring the direction of
beforehand unorientable edges to reduce the number of indistinguishable DAGs.
Usually the objective is to find the underlying causal DAG with as few
experiments as possible.  Active learning has been the subject of a considerable amount 
of research,~see \citet{EberhardtGS05,Eberhardt08,He2008,hauser2012characterization,hauser2014two,shanmugam2015learning,ghassami2018budgeted,GreenewaldKSMKA19,activelearningdct2020} 
and the references therein.

One way of designing experiments is to use the  following  approach: 
Consider (for simplicity) only noiseless adaptive single-target
interventions, i.e.,
each experiment manipulates a single variable of interest and the
intervention informs us correctly about the resulting 
$\mathcal{I}$-MEC.
In this setting, every intervention reveals the orientations of all edges
adjacent to the intervened vertex and further edge orientations
may be inferred by the Meek rules~\citep{Meek1995}
(see Fig.~\ref{fig:intessgraphs} for an illustration).
Additionally, we assume variables are manipulated sequentially, i.e., one can use intervention results
obtained by manipulating the previous variables to select a current
variable to intervene on. To choose the best intervention target,
usually an objective function w.r.t.\ the current interventional
essential graph is computed for each variable, often based on every
possible intervention result. 

Below we discuss three algorithms following this approach: 
MinMaxMEC and MaxEntropy by \citet{He2008} and 
OptSingle by \citet{hauser2014two}.
The first two, particularly, use the sizes of the
$\mathcal{I}$-MECs resulting from such hypothetical interventions, in
order to compute the objective function. Hence, our methods are 
vital for the computational feasibility of those approaches. Moreover,
we show that even the third approach can be sped up significantly.

The algorithms start with the (observational)
MEC $[D]_{\cI}$, i.e., with $\cI=\{\emptyset \}$ for the true DAG $D$,
which is represented as an $\cI$-essential graph $\cE_{\cI}(D)$. Afterwards, 
while $|[D]_{\cI}|>1$, the current target family 
$\cI$ 
and $G=\cE_{\cI}(D)$ are 
updated as follows: MinMaxMEC selects the variable  to intervene on such that 
\begin{equation}\label{eq:MinMaxMEC}
  v^* = \argmin_{v\in V} \max_{D'\in [D]_{\cI}}   | [D']_{\cI\cup  \{\{v\}\}}| .
\end{equation}
MaxEntropy chooses
\begin{equation}\label{eq:MaxEntropy}
  v^* = \argmax_{v\in V}  H_v,
\end{equation}
where $H_v$ is the entropy defined as follows:
Let $D_1,\ldots,D_k \in [D]_{\cI}$ be  DAGs such that  
$[D_1]_{\cI\cup \{\{v\}\}}\dot{\cup} \ldots \dot{\cup} [D_k]_{\cI\cup  \{\{v\}\}}$
is a partition of $[D]_{\cI}$. Then
$
  H_v = - \sum_{j=1}^k \frac{l_j}{L}\log  \frac{l_j}{L},
$
with $l_j = | [D_j]_{\cI\cup  \{\{v\}\}}| $ and $L= |[D]_{\cI}| $.
Algorithm OptSingle  computes a vertex
\begin{equation}\label{eq:OptSingle}
  v^* = \argmin_{v\in V} \max_{D'\in [D]_{\cI}}  \xi(\cE_{\cI\cup \{\{v\}\}}(D')),
\end{equation}
where  $\xi(H)$ denotes the number of undirected edges in a graph $H$.
Next, the intervention on $v^*$ is realized and the algorithm updates 
$G:=\cE_{\cI\cup\{\{ v^*\}\}}(D)$ and $\cI:=\cI\cup\{\{ v^*\}\}$ completing  
the iteration step. Note that none of these three strategies lead to
an optimal algorithm (in a worst-case
or average-case sense), but are effective greedy heuristics.

\begin{example}\label{example:active:learning}
For the CPDAG $G$ in Fig.~\ref{fig:intessgraphs}, the algorithms 
MinMaxMEC, MaxEntropy and OptSingle partition, for every vertex $v$, 
the MEC represented by $G$ into $\{\{v\}\}$-MECs according to all possible 
results for the intervention on $v$. The partitioning for $v=1$ is shown in 
Fig.~\ref{fig:intessgraphs}. The values needed to select $v^*=2$ or  $v^*=3$ solving 
the Eq.~\eqref{eq:MinMaxMEC}, Eq.~\eqref{eq:MaxEntropy}, resp.~Eq.~\eqref{eq:OptSingle}, 
are given in the table below.

\begin{center}\normalfont\begin{tabular}{r c c c c c}
\toprule
	& cardinalities of & & &  number of  undir. edges  & \\
  	$v$  &  $\{\{v\}\}$-essential MECs &  $\max$ card. & $H_v$ & in $\{\{v\}\}$-essential graphs & $\max$  number\\ 
	   \cmidrule(rl){1-6} 
	$1$  & 6,2,1,1      & 6 & 1.57 & 3,1,0,0       & 3\\
	$2$  & 3,2,2,1,1,1& 3 & 2.45 & 2,1,1,0,0,0 & 2\\
	$3$  & 3,2,2,1,1,1& 3 & 2.45 & 2,1,1,0,0,0 & 2\\
	$4$  & 6,2,1,1      & 6 & 1.57 & 3,1,0,0    & 3\\
 \bottomrule
\end{tabular}
\end{center} \exampleqed
 \end{example}
  
Clearly, the most costly part 
of implementing MinMaxMEC and MaxEntropy is
the counting of Markov equivalent DAGs. As this was previously thought
infeasible, these methods were often avoided~\citep{activelearningdct2020}.
However, one can easily see that, based on
Theorem~\ref{thm:main:interv},
the sizes of MECs needed to choose a vertex w.r.t.\ Eq.~\eqref{eq:MinMaxMEC}, 
resp. Eq.~\eqref{eq:MaxEntropy}, can be computed in polynomial time,
assuming the $(\cI\cup \{\{v\}\})-$MECs are represented 
as interventional essential graphs.

Another efficiency issue of the algorithms,
including OptSingle, concerns the computation of the interventional essential graphs
for each possible intervention results (as there may be exponentially many
such results and the algorithms consider every hypothetical result in
advance, this step is crucial). Interestingly, using the ideas
from Section~\ref{sec:cliquepicking}
we can show that, given a current $\cI$-interventional essential graph $G$
and an interventional result on a vertex~$v$, we can compute the 
new interventional essential graph in linear time.
This is possible using an algorithm based on Maximum Label Search~\citep{berry2009maximal}, which is also used in Clique-Picking, 
as we state in the theorem below.

The only thing left to be explained is 
how to enumerate the possible interventional results 
on $v$: To see this, let $H$ be a UCCG of $G$ containing $v$. Then
the resulting orientations of the incident undirected 
edges $u - v$ in $G$ can be represented as a clique $K\subseteq N_H(v)$,
which contains the incident vertices $u$ of edges oriented as $u\to
v$;\footnote{The parents of $v$ have to form a clique, else a new
  v-structure would be created, which would be in violation with the
  definition of $\mathcal{I}$-MECs.}
the edges  $u - v$, with $u\in N_H(v) \setminus K$, are oriented as $u \gets v$.
Note, that $K$ can be empty.

\begin{theorem} \label{thm:I-MEC:enum}
	Assume $D$ is a DAG, $\cI$ is a  target family, $v$ is a vertex, and  $H$
	is a UCCG of $G=\cE_{\cI}(D)$ containing $v$. Let a clique $K\subseteq N_H(v)$ 
	represent orientations of edges $u - v$ in $G$ as described above
	and  let $D_K \in [D]_{\cI}$ be a DAG with the edges oriented according to $K$.	
	Then, given $G$, $v$, and $K$, the essential graph $G'=\cE_{\cI\cup \{\{v\}\}}(D_K)$ 
	can be computed in time $\mathcal{O}(|V_G| + |E_G|)$. 
\end{theorem}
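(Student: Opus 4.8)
The plan is to reduce the computation to the single undirected component $H$ and then to extract the new essential graph from a representative acyclic moral orientation produced by Maximum Label Search in one additional linear sweep.

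\emph{Step 1 (localisation).} I first claim that $G'$ coincides with $G$ outside $H$ and is determined, inside $H$, solely by the local configuration $K$. By Proposition~\ref{prop:hauser} every $D'\in[D_K]_{\cI\cup\{\{v\}\}}\subseteq[D]_{\cI}$ is obtained from $G=\cE_{\cI}(D)$ by choosing an AMO of each UCCG of $G$ independently; hence $G'$ agrees with $G$ on every directed edge and on the part outside the UCCGs. For a UCCG $H''\neq H$ of $G$, any AMO of $H''$ is still realised by some member of $[D_K]_{\cI\cup\{\{v\}\}}$ (orient $H$ as in $D_K$, $H''$ arbitrarily, and the remaining components as in $D_K$; the new target $\{v\}$ only constrains edges incident to $v\in H$), so $H''$ remains a UCCG of $G'$. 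Finally, an intervention on $v$ reveals exactly the orientations of all edges incident to $v$ — the standard property recalled in the text — and these are prescribed by $K$. Therefore $G'=(G\setminus E_H)\cup\tilde H$, where $\tilde H:=\bigcup_{\rho\in\mathcal{A}_K}\rho$ and $\mathcal{A}_K$ is the (nonempty, since $D_K$ exists) set of AMOs $\rho$ of $H$ with $\mathrm{Pa}_\rho(v)=K$ and $N_H(v)\setminus K$ the children of $v$. It then remains to build $\tilde H$ from $H$, $v$, $K$ in time $O(|V_H|+|E_H|)$, after which assembling $G'$ costs another $O(|V_G|+|E_G|)$.

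\emph{Step 2 (computing $\tilde H$).} By Meek's completeness for partially oriented DAGs arising from a DAG together with valid background knowledge~\citep{Meek1995} (equivalently, by Hauser and B\"{u}hlmann's characterisation of interventional essential graphs via strongly $(\cI\cup\{\{v\}\})$-protected arrows), $\tilde H$ is the closure of ``$H$ with the edges at $v$ oriented according to $K$'' under the Meek rules. Running these rules to a fixpoint is not obviously linear, so instead I would follow the clique-tree machinery of Section~\ref{sec:cliquepicking}: using Maximum Label Search~\citep{berry2009maximal} with initial labels that force $v$ to be eliminated with residual neighbourhood $N_H(v)\setminus K$ — possible because $K$ is a clique — I compute in linear time a perfect elimination ordering of $H$ whose reverse is a topological ordering $\tau$ of some representative $D'\in\mathcal{A}_K$. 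The ordering produced by MLS is a moplex/maximal-clique ordering, and I would prove that relative to $\tau$ every strong-protection test localises: for an edge $a\to b$ of $D'$ it reduces to a constant-size condition on information already gathered during the search (essentially, whether $a$ has a $\tau$-earlier neighbour non-adjacent to $b$, aggregated along the clique tree). One extra sweep over $V_H$ and $E_H$ then marks precisely the directed edges of $\tilde H$, yielding the stated bound.

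\emph{Main obstacle.} The crux is Step~2's linear-time extraction: one must show (i) that MLS can be steered to output an AMO of $H$ with the prescribed local orientation at $v$, and that this orientation does extend to a valid AMO (which uses chordality and the AMO/clique-tree correspondence underlying Clique-Picking), and (ii) that the strongly protected arrows of that AMO are recognisable by a single linear pass exploiting the MLS ordering rather than by iterating Meek's rules to a fixpoint. Carrying out (ii) requires matching the four Meek configurations against the structure of the MLS elimination ordering and the clique tree of $H$, and this is the technically delicate part; Step~1, by contrast, is routine once one invokes the standard fact that a single-vertex intervention reveals all incident orientations and affects no other component.
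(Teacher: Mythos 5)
Your Step 1 (localisation to the component $H$ containing $v$, and the identification of $G'$ inside $H$ with the union of all AMOs of $H$ whose parent set at $v$ is exactly $K$) is correct and matches the paper's first reduction via Proposition~\ref{prop:hauser}. The gap is in Step 2, which is where the actual content of the theorem lies. You propose to compute a single representative AMO by a steered MLS and then to recognise the ``strongly protected'' (essential) arrows of the new interventional essential graph in one extra linear sweep, by localising the Meek/strong-protection configurations along the MLS ordering and the clique tree. You explicitly flag this as the technically delicate part and do not carry it out --- and it is precisely the step that is not known to be doable by naively checking Meek configurations: iterating the Meek rules to a fixpoint costs more than linear time (the paper cites an $O(d\cdot m)$ bound for the specialised two-rule variant), and you give no argument that a single pass over a representative AMO suffices to decide, for every edge, whether \emph{both} orientations are realised by some AMO with parent set $K$ at $v$. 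As written, the proof of the $O(|V_G|+|E_G|)$ bound is therefore missing.

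The paper takes a different and self-contained route for Step 2 that you should compare against. It decomposes $H$ into $K$, the set $A$ of vertices not reachable from $v$ once edges incident to $K$ are deleted, and the reachable set $D$; it shows $H[A\cup K]$ stays undirected, all $K$--$D$ edges point away from $K$, and the orientation inside $D$ is exactly the output of Algorithm~\ref{alg:cgk} run on $H[D\cup K]$ with starting clique $K\cup\{v\}$. The linear-time claim then follows from Theorem~\ref{theorem:cgk}, whose proof already establishes the two facts you would otherwise need to prove from scratch: adjacent vertices output in the same component admit both orientations (so the edge is undirected in the union of AMOs), and edges between vertices output at different times are forced by an inductive application of the first Meek rule. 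In other words, the union-of-AMOs graph $G^{K\cup\{v\}}$ is computed \emph{directly} by the label search --- no representative AMO and no separate strong-protection test are needed. To repair your proof you would either have to reduce Step 2 to Theorem~\ref{theorem:cgk} in this way, or supply the missing argument that your one-pass strong-protection check is sound and complete, which is a substantial piece of work the current sketch does not contain.
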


Notably, this theorem improves upon previous work
by~\cite{Teshnizi20}, which gave an $O(d\cdot m)$ algorithm for this
task (with $d$ being the maximum degree of the graph). For example, computing a vertex solving Eq.~\eqref{eq:MinMaxMEC} in MinMaxMEC can be 
implemented as shown in Algorithm~\ref{alg:MinMaxMEC} below. The other two approaches  
can be implemented similarly. The time complexity is  
$\mathcal{O}(\text{Val}(G)\cdot p(|V_G|, |E_G|))$, where $\text{Val(G)}$ denotes the total number of 
intervention results and $p$ is the polynomial bounding the time complexity 
of the Clique-Picking algorithm used to compute the size of an MEC.

\begin{algorithm}
  \caption{An efficient implementation of MinMaxMEC: 
  The  algorithm computes a vertex solving Eq.~\eqref{eq:MinMaxMEC}.}
  \label{alg:MinMaxMEC}
  \SetKwInOut{Input}{input}\SetKwInOut{Output}{output}
  \DontPrintSemicolon
  \Input{An $\cI$-essential graph $G = (V,E)$.}
  \Output{Vertex $v^*$ solving Eq.~\eqref{eq:MinMaxMEC}.}
  {\bf if} $G$ is a DAG {\bf then return} $\emptyset$ {\bf and stop} \;
  $\text{MinMax} \gets \infty$ \;
  \ForEach{\text{undirected component} $H$ of $G$}{
	  \ForEach{$v\in H$}{
	  	$\text{Max} \gets 0$ \;
	  	\ForEach{clique $K\subseteq N_H(v)$}{
		for $G,v,K$ compute $G'=\cE_{\cI \cup \{ \{v\}\}}(D_K)$  using Theorem~\ref{thm:I-MEC:enum}\;
		$c \gets \hext(G')$ using Algorithm~\ref{alg:main}\;
		{\bf if }  $c  > \text{\rm Max}$ {\bf then}  $\text{Max}  \gets c$\;
  		}
		\uIf{$\text{\rm Max} < \text{\rm MinMax} $}{
     			 $\text{MinMax}  \gets \text{Max}$; $v^* \gets v$
    		}
  }}
  \KwRet $v^*$\;
\end{algorithm}

These improvements can make the difference between infeasibility and
practical applicability. We replicate the recent experimental results
from~\citet{activelearningdct2020}, which includes a comparison of the most
popular algorithms for single-target adaptive active
learning. For their experiments, a single
chordal component was generated in two ways: Large and
sparse graphs were sampled by adding edges to randomly
generated trees, and small and very
dense graphs by ``chordalizing'' Erdös-Renyi graphs.  The three
strategies we discussed above were only
included in the experiments on small graphs (between 8 and 14
vertices) due to their apparent infeasibility. We show that, using implementations\footnote{The experiments in this
  paper record the results using implementations in
  Julia. Additionally, in the code appendix (\url{https://github.com/mwien/counting-with-applications}), a \cpp implementation of Clique-Picking is provided.} of our
methods, these approaches scale to much larger graphs and that they deliver superior results compared to other
algorithms.

\begin{figure}
  \newcounter{plotlen}
  \newcommand{\plot}[3][black]{%
    \setcounter{plotlen}{0} \foreach \dump in {#2}{\stepcounter{plotlen}}    
    \draw[color=#1, semithick]
    \foreach \y [count=\x] in {#2}{ \ifnum\x=1 (\x,\y) node[point, color=#1, fill=#1!50] {} \else to (\x,\y) node[point, color=#1, fill=#1!50] (tmp) {} \fi };
    \begin{scope}[on background layer]
      \draw[color=ba.gray, semithick, densely dotted] (tmp) to ++(1,0) node[right] {\small\textcolor{#1}{#3}};
      \draw[color=ba.gray, semithick, densely dotted] (tmp) to ++(-\theplotlen,0);
    \end{scope}
  }
  \resizebox{\textwidth}{!}{\begin{tikzpicture}[
    axis/.style     = { semithick, draw, >={[round,sep]Stealth}, -> },
    axisskip/.style = { semithick, draw, rounded corners=1, decorate, decoration = {zigzag, segment length=2mm,amplitude=0.5mm} },
    point/.style    = { draw, circle, inner sep = 0pt, minimum width =
      1.25mm },
    scale = 0.9,
    ]

    \plot[ba.pine]{6.15, 7.39, 8.07, 8.79, 9.38}{dct}
    \plot[ba.blue]{5.11, 5.97, 6.61, 7.29, 7.77}{col}
    \plot[ba.yellow]{5.05, 5.76, 6.13, 6.52, 7.1}{os}
    \plot[ba.red]{4.74, 5.31, 5.95, 6.14, 6.65}{\raisebox{-2ex}{ent}} 
    \plot[ba.violet]{4.8, 5.34, 6.05, 6.22, 6.74}{\raisebox{1.5ex}{minmax}} 

    \foreach \tick [count=\x] in {100, 200, 300, 400, 500}{
      \draw[ba.gray] (\x,3.5) -- ++(0,-0.2) node[below] {\tiny\tick};
    }
    \node at (3, 2.5) {\small \#Vertices};

    \foreach \tick in {5,6,7,8,9}{
      \draw[ba.gray] (0,\tick) -- ++(-0.2,0) node[left] {\tiny\tick};
    }
    \node at (3.5, 10.75) {\small Large, sparse graphs};
    
    \draw[axisskip] (0,3.5) -- (0,4.5);    
    \draw[axis]     (0,4.5) -- (0,10) node[above] {\small\centering \#Interventions};
    \draw[axis]     (0,3.5) -- (7,3.5);            
  \end{tikzpicture}%
  \begin{tikzpicture}[
    axis/.style     = { semithick, draw, >={[round,sep]Stealth}, -> },
    axisskip/.style = { semithick, draw, rounded corners=1, decorate, decoration = {zigzag, segment length=2mm,amplitude=0.5mm} },
    point/.style    = { draw, circle, inner sep = 0pt, minimum width =
      1.25mm },
    scale=0.775
    ]

    \begin{scope}[yscale=0.5]
      \plot[ba.pine]{2.37, 3.89, 4.98, 6.82, 8.77, 11.19, 13.64}{\raisebox{1ex}{dct}}      
      \plot[ba.blue]{2.15, 3.55, 4.6, 6.4, 8.29, 10.73, 13.35}{\raisebox{-1ex}{col}}
      \plot[ba.yellow]{2.18, 3.64, 4.71, 6.66, 8.16}{\raisebox{-5ex}{os}}
      \plot[ba.red]{2.09, 3.49, 4.66, 6.29, 8.21}{\raisebox{3.5ex}{ent}}
      \plot[ba.violet]{2.14, 3.58, 4.62, 6.3, 8.18}{minmax}

      \foreach \tick in {2,4,...,14}{
        \draw[ba.gray] (0,\tick) -- ++(-0.2,0) node[left] {\tiny\tick};
      }
      \draw[axis] (0,0) -- (0,15) node[above] {\small\centering \#Interventions};
    \end{scope}

    \node at (3.75, 8.5) {\small Small, dense graphs};

    \foreach \tick [count=\x] in {10, 15, 20, 25, 30, 35, 40}{
      \draw[ba.gray] (\x,0) -- ++(0,-0.2) node[below] {\tiny\tick};
    }
    \draw[axis]  (0,0) -- (8.5,0);                  
    \node at (4, -1) {\small \#Vertices};
  \end{tikzpicture}}
  \caption{We compare the performance of the three discussed strategies MinMaxMEC ({\color{ba.violet}minmax}), MaxEntropy ({\color{ba.red}ent}) and OptSingle ({\color{ba.yellow}os}), with the coloring-based strategy ({\color{ba.blue}col}) of \cite{shanmugam2015learning}, and directed-clique-tree approach ({\color{ba.pine}dct}) of \cite{activelearningdct2020}.
On the left, the average number of interventions for large, sparse graphs; on the right, for small dense graphs. For each choice of parameters, 100 CPDAGs (chordal graphs) were generated as described in~\cite{activelearningdct2020}. Afterwards, for each CPDAG, a DAG was uniformly chosen from the MEC (the true DAG, used as oracle for the intervention results).}
  \label{fig:actplot}
\end{figure}

In Fig.~\ref{fig:actplot}, the plot on the left shows the number of
performed interventions of five active learning strategies for
the large and sparse graphs and the right one for the small and very
dense graphs. The three strategies we discussed above,
MinMaxMEC, MaxEntropy and OptSingle, clearly outperform the other
methods and, in particular, the two methods which utilize the sizes of
the $\mathcal{I}$-MECs perform the best. In case of the sparse graphs
the differences are larger due to the fact that the structure of the
graph can be utilized to a higher degree; the dense graphs are close to
fully connected graphs.
Importantly, for the sparse graphs, where a lot of performance may be
gained and which occur frequently in practice, the implementation using
Clique-Picking is able to scale up to graphs with 500 vertices (taking
about 10-15 minutes on a desktop computer for the largest graphs\footnote{The
  experiments were run on an Intel(R) Core(TM) i7-8565U
CPU with 16GBs of RAM.}). In
case of the dense graphs, the implementations of MinMaxMEC, MaxEntropy
and OptSingle can handle up to 30
vertices, compared to 14 previously (the bottleneck here is the exponential number of hypothetical intervention results).

Hence, we argue that these methods are feasible in most practical
settings  as, on the one hand, sparse graphs are more prevalent and,
on the other hand, this experiment considers
fully undirected graphs and usually many edge directions are already
detected during the estimation of the CPDAG. Finally, the tradeoff between possibly saved computation time and
finding better intervention targets, should lean, in our view, towards
the latter, as the cost of experimentation exceeds the cost of
beforehand-computation by a large margin.

\subsection{Estimating Causal Effects from CPDAGs and Observed Data}
A second application 
concerns calculating the \emph{total causal effects} to measure 
the effects of interventions  in observational studies using Pearl's 
{\it do}-calculus~\citep{pearl2009causality}. 
For a given DAG $D=(V,E)$, with vertices $V=\{1,\ldots,n\}$ representing random 
variables $X_i$, for $i\in V$, and a distribution $f$ over $X=(X_1,\ldots,X_n)$, 
which is Markov to $D$, 
the distribution generated by an intervention on $X_i$, written $\textit{do}(X_i=x'_i)$,
can be expressed in a truncated factorization formula
$f(x_1,\ldots,x_n\mid \textit{do}(X_i=x'_i))=\prod_{j=1, j\not= i}^n f(x_i \mid \textit{pa}_{i}(D))|_{x_i=x'_i}$ if $x_i=x'_i$ and $0$ otherwise.
By integrating out all variables, except $x_i, x_j$, we get that the distribution 
of $X_j$ after an intervention $\textit{do}(X_i=x'_i)$ 
can be computed by \emph{adjustment for direct causes} of~$X_i$ 
(represented as parents of $i$ in $D$):   
\begin{equation}\label{eq:adj}
  f(x_j \mid \textit{do}(X_i=x'_i))\ =\
 \left\{
    \begin{array}{ll}
       f(x_j) &\mbox{if $j\in \textit{Pa}_{i}$}\\[1mm]
       \int f(x_j \mid x'_i,\textit{pa}_{i})  f(\textit{pa}_{i}) \ d\,\textit{pa}_{i} & 
       \mbox{if $j\not\in \textit{Pa}_{i}$,}
    \end{array}
  \right.
\end{equation}
where $\textit{Pa}_{i}$ stands for  $\textit{Pa}_{i}(D)$, for short, 
and $f(\cdot)$ and  $f(\cdot \mid x'_i, \textit{pa}_{i})$ represent 
preinter\-vention distributions  \citep[Theorem~3.2.2]{pearl2009causality}.
The expected value 
$$
  \mathbb{E}(X_j \mid \textit{do}(X_i=x'_i))\ =\
 \left\{
    \begin{array}{ll}
       \mathbb{E}(X_j) &\mbox{if $j\in \textit{Pa}_{i}$}\\[1mm]
       \int \mathbb{E}(X_j \mid x'_i,\textit{pa}_{i})  f(\textit{pa}_{i}) \  d\,\textit{pa}_{i} & 
       \mbox{if $j\not\in \textit{Pa}_{i}$,}
    \end{array}
  \right.
$$
which summarizes the intervention distribution, can be used to define 
the total causal effect of $X_i$ on $X_j$ as follows:
For non-parametric models the formula 
 \begin{equation}\label{eq:causal:effect:nonparam}
    \theta_{ji}(D)\ =\  \mathbb{E}(X_j \mid \textit{do}(X_i=x'_i)) -  
     				\mathbb{E}(X_j \mid \textit{do}(X_i=x''_i))
\end{equation}
can be taken as a definition of the effect, 
that expresses the difference between expected value of $X_j$
under each of the following two actions: $X_i$ is forced to take value $x'_i$
or $X_i$ is forced to take value $x''_i$ \citep[page~70]{pearl2009causality};
For parametric linear models, the causal effect can be defined as
\begin{equation}\label{eq:causal:effect:param}
	\begin{array}{lll}
           \theta_{ji}(D) &= & \frac{\partial}{\partial x}{\mathbb{E}(X_j | \textit{do}(X_i=x))} \\[2mm]
           &= & \mathbb{E}(X_j \mid \textit{do}(X_i=x'_i+1)) -  \mathbb{E}(X_j \mid \textit{do}(X_i=x'_i))
       \end{array}
\end{equation}
for any chosen value of $x_i'$. 
It shows the change in the expected value of $X_j$ when changing in interventions 
the value of $X_i$ by one unit.

Thus, when the true causal DAG $D$ is known, the outcomes of interventions 
can be estimated using the formulas above and there is an extensive literature 
providing techniques for calculating the causal effects when the 
formula~\eqref{eq:adj}  
is not applicable, e.g., due  to unobservable variables (see, e.g., \citep{pearl2009causality,shpitser2006identification,ShpitserVR2010,van2019separators}).
On the other hand, if a DAG is only identifiable up to its MEC, the situation changes significantly: while for some CPDAGs one can compute the causal effects 
from the graph and the observed data \citep{van2016separators,perkovic2017complete},
in general, for a given CPDAG $G$, the true causal effect of $X_i$ on $X_j$ 
may differ across the DAGs in the  MEC $[G]$. In such cases 
we can at best determine a \emph{multiset} of possible causal effects 
$\theta_{ji}(D)$, one for each DAG $D$ in $[G]$.

Based on this idea, \citet{maathuis2009estimating}~propose 
algorithms, called IDA (\underline{I}ntervention Calculus 
when the \underline{D}AG is \underline{A}bsent), to extract useful causal 
information, e.g.,  to estimate bounds on causal effects. The basic algorithm (Algorithm~1 in 
\citep{maathuis2009estimating}), also called global-IDA in the literature,
starts with an empty multiset $\Theta$ and, for a given CPDAG $G$,  adds 
$\theta_{ji}(D)$ to $\Theta$  for all
DAGs $D$ in $[G]$. To avoid the unnecessary 
enumeration of all DAGs, the authors propose a natural 
modification, which computes the same output $\Theta$  and works as 
follows: Let $D_1,\ldots,D_k \in [G]$ 
be  DAGs such that  $[D_1]_{\{\emptyset,\{i\}\}}\dot{\cup} \ldots \dot{\cup} [D_k]_{\{\emptyset,\{i\}\}}$
is the partition of $[G]$ into the possible interventional-MECs for
$\cI = \{\emptyset,\{i\}\}$. Then, for all $\ell=1,\ldots,k$, the algorithm adds
$c_{\ell}$ copies of $\theta_{ji}(D_{\ell})$, where the multiplicity $c_{\ell}$
denotes the number of DAGs in $[D_{\ell}]_{\{\emptyset,\{i\}\}}$.
The correctness is based on the fact that for all  DAGs in $[D_{\ell}]_{\{\emptyset,\{i\}\}}$,
the causal effect of $X_i$ on $X_j$ is the same.\footnote{In the
original work as well as later papers~\cite{Ghassami2019} the
connection to $\cI$-MECs was not drawn. This is
crucial for obtaining efficient algorithms for computing the
multiplicities through Theorem~\ref{thm:main:interv} and connects the
two applications discussed in this paper.} 

\citet{maathuis2009estimating} notice, that global-IDA
\emph{``works well if the number of covariates is small, say less than
  10 or so''} and  that the bottleneck is the computation of the
multiplicities $c_{\ell}$, which quickly becomes infeasible if the number of covariates $n$ increases. Therefore, the authors developed a
``localized'' version, called local-IDA, which computes the multiset
$\Theta^L=\{\theta_{ji}(D_{1}),\ldots,\theta_{ji}(D_{k})\}$
instead of $\Theta$. This, however, does not reflect the true
multiplicities assuming each DAG in the MEC is equally likely to be
the ground truth. In the context of Fig.~\ref{fig:intessgraphs},
assuming we perform IDA on CPDAG $G$, this means that
the information, that the causal effect $\theta_{41}$ corresponding to
the configuration
$3 \rightarrow 1 \leftarrow 2$ appears in 6 of the possible 10 DAGs, is discarded.

In this paper we show that using our new approaches we 
can effectively implement global-IDA, meaning, in particular, we can efficiently
compute for each possible parent set (i.e., each of the partitions
of the MEC) the multiplicity, meaning the number of DAGs in the partition. We present this 
implementation as  Algorithm~\ref{alg:globalIDA}. Relying on the
formulation of the
problem in terms of $\cI$-MECs from above and utilizing Theorem~\ref{thm:main:interv}
and~\ref{thm:I-MEC:enum}, we can conclude:
\begin{theorem}\label{thm:ida:global}
  For a given CPDAG $G=(V,E)$ and $i,j\in V$, Algorithm~\ref{alg:globalIDA} computes 
  the multiset  $\Theta=\{\theta_{ji}(D)\mid D\in[G] \}$ including causal effects of $X_i$ on $X_j$ for
  all DAGs $D\in[G]$.  It runs in time $\mathcal{O}(k\cdot p(|V|, |E|))$,
  where $k=|\Theta^L|$ and $p$ is a polynomial bounding 
  the time complexity of the Clique-Picking algorithm.
\end{theorem}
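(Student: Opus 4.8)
The plan is to assemble the theorem from the IDA decomposition recalled just above together with Theorems~\ref{thm:main:interv} and~\ref{thm:I-MEC:enum}. Fix the partition $[D_1]_{\{\emptyset,\{i\}\}}\dot{\cup}\cdots\dot{\cup}[D_k]_{\{\emptyset,\{i\}\}}$ of $[G]$ into $\{\emptyset,\{i\}\}$-MECs. Within a single class the parent set of~$i$ is the same in every DAG, hence the adjustment set used to estimate the effect of~$X_i$ on~$X_j$ is the same, hence $\theta_{ji}(D)=\theta_{ji}(D_\ell)$ for all $D\in[D_\ell]_{\{\emptyset,\{i\}\}}$ --- this is precisely the correctness argument for global-IDA stated before the theorem. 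Consequently $\Theta$ is the multiset that contains, for each $\ell$, exactly $c_\ell=\bigl|[D_\ell]_{\{\emptyset,\{i\}\}}\bigr|$ copies of the common value $\theta_{ji}(D_\ell)$, and it suffices for Algorithm~\ref{alg:globalIDA} to (i)~enumerate the classes and (ii)~for each class output its common effect together with its multiplicity $c_\ell$.

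First I would enumerate the classes. If no undirected edge is incident to~$i$ in~$G$ then $k=1$ and the single class is $[G]$; otherwise let $H$ be the UCCG of~$G$ containing~$i$. As far as $\{\emptyset,\{i\}\}$-equivalence is concerned, a DAG $D'\in[G]$ is pinned down by the set of incident undirected edges of~$i$ that it orients into~$i$, and by the no-new-v-structure constraint this set is a clique $K\subseteq N_H(i)$ (possibly $K=\emptyset$); conversely, since $H$ is connected chordal, Proposition~\ref{prop:hauser} guarantees an acyclic moral orientation of~$H$ in which $K$ is exactly the set of parents of~$i$, so every such clique is realised by some $D_K\in[G]$. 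Thus the classes are in bijection with the cliques $K\subseteq N_H(i)$, and by Theorem~\ref{thm:I-MEC:enum} the interventional essential graph $G'_K=\cE_{\{\emptyset,\{i\}\}}(D_K)$ representing the class of $D_K$ is computable from $G,i,K$ in time $O(|V_G|+|E_G|)$.

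Next, for each clique $K$ I would compute the multiplicity and the effect. By Theorem~\ref{thm:main:interv} we have $c_\ell=\bigl|[D_K]_{\{\emptyset,\{i\}\}}\bigr|=\hext(G'_K)$, which Algorithm~\ref{alg:main} (Clique-Picking) evaluates in time $p(|V_G|,|E_G|)$. The value $\theta_{ji}(D_K)$ depends only on the parent set of~$i$ in $D_K$ --- the already-directed parents of~$i$ in~$G$ together with~$K$ --- so it is obtained by a single adjustment step: in the parametric-linear case a least-squares regression of~$X_j$ on~$X_i$ and that parent set, and in the nonparametric case the plug-in estimate of~\eqref{eq:adj}, each costing time polynomial in~$n$ and the sample size. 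Algorithm~\ref{alg:globalIDA} records this value with multiplicity~$c_\ell$. For the running time, note that the classes are exactly $k$ in number, since $\Theta^L$ has one entry per class and therefore $k=|\Theta^L|$ equals the number of cliques of $N_H(i)$; these cliques can be listed with polynomial delay, so the enumeration costs $O(k\cdot\mathrm{poly}(|V_G|,|E_G|))$, and per clique we spend $O(|V_G|+|E_G|)$ (Theorem~\ref{thm:I-MEC:enum}) plus $p(|V_G|,|E_G|)$ (Clique-Picking) plus the polynomial adjustment cost, which gives the claimed $O(k\cdot p(|V_G|,|E_G|))$ after absorbing the lower-order polynomials into~$p$.

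The only step that is not bookkeeping is the clique--class correspondence in the second paragraph: one must check that every clique $K\subseteq N_H(i)$ is realised by some consistent extension of~$G$ (so that no class is empty or missed) and that the induced partition of $[G]$ by parent sets of~$i$ coincides with the $\{\emptyset,\{i\}\}$-MEC partition. The realisability direction rests on the chordality of~$H$ and Proposition~\ref{prop:hauser}; the coincidence of the two partitions is then an unwinding of the definition of a single-target interventional MEC, using that intervening on~$i$ reveals precisely the orientations of the edges at~$i$ before the Meek rules are applied.
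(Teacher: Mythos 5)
Your proposal is correct and follows essentially the same route as the paper, which derives the theorem from the preceding discussion (constancy of $\theta_{ji}$ within each $\{\emptyset,\{i\}\}$-MEC and the correspondence between classes and cliques $K\subseteq N_H(i)$) together with Theorems~\ref{thm:main:interv} and~\ref{thm:I-MEC:enum}. You merely spell out in more detail the clique--class bijection and the per-iteration cost accounting that the paper leaves implicit.
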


\begin{algorithm}
  \caption{An efficient implementation of the global-IDA algorithm.}
  \label{alg:globalIDA}
  \SetKwInOut{Input}{input}\SetKwInOut{Output}{output}
  \DontPrintSemicolon
  \Input{A CPDAG $G = (V,E)$ and vertices $i,j\in V$.}
  \Output{Multiset $\Theta$ of possible causal effects of $X_i$ on $X_j$}
	  	$\Theta \gets \emptyset$ \;
		$H\gets $ undirected component of $G$ containing vertex $i$\;
	  	\ForEach{clique $K\subseteq N_H(i)$}{
		for $G,i,K$ compute $G'=\cE_{\{\emptyset,  \{i\}\}}(D_K)$  using Theorem~\ref{thm:I-MEC:enum}\;
		$c \gets \hext(G')$ using Algorithm~\ref{alg:main}\;
		add $c$ copies of $\theta_{ji}(D_K)$ to $\Theta$
  }
  \KwRet $\Theta$ \;
\end{algorithm}

Hence, the additional effort of Algorithm~\ref{alg:globalIDA} compared
to local-IDA is only a polynomial factor. In practice, this factor
will likely not matter, as we show by replicating the experiments on
linear models originally performed by~\citet{maathuis2009estimating}.

In a causal \emph{linear model}, where every edge represents a linear
direct causal effect and
under the assumption that the distribution of variables is multivariate normal,
one can compute the causal effects $\theta_{ji}(D)$,
defined in Eq.~\eqref{eq:causal:effect:param}, as the regression
coefficient $\beta_{ji\mid \textit{Pa}_{i}(D)}$
of $X_i$ in the linear regression of $X_j$ on $X_i$ and  $\textit{Pa}_{i}(D)$ 
(for details, see e.g., \citep{maathuis2009estimating}).
\citet{maathuis2009estimating} use \emph{sample versions} of global-
and local-IDA, particularly relying on the PC-algorithm and
conditional independence tests for the estimation of CPDAG $G$ from data. 
In their studies, they consider variables 
$X_1, \ldots , X_n, X_{n+1}$ and, as described above, they compute the multisets 
$\hat{\Theta}_i$ and  $\hat{\Theta}^L_i$ for total effects of a
randomly chosen covariate $X_i$ on a response variable $Y = X_n+1$. 

They report, that in simulation studies over sparse DAGs,
already for  $n = 14$, at least one of the 10 replicates\footnote{In the original experiments, the algorithms run over
  10 replicates with sample size 1000. We perform 100 replicates for
  more stable results.} of the global-IDA algorithm
took more than 48 hours to compute, so that the computation was aborted.
Likewise, for the \emph{riboflavin data} with $n=4088$ covariates in the data set,
the global-IDA algorithm is stated as infeasible.
We perform the same experiments reporting (i) the run time of
computing $\hat{\Theta}_i^L$, i.e., the causal effect for each
possible parent set of $X_i$ and (ii) the
run time of computing the multiplicity for each parent set, i.e., the
number of DAGs in the MEC with $X_i$ having the specified parents, as
proposed in Algorithm~\ref{alg:globalIDA}. Hence, local-IDA
would be identical to performing step (i), whereas global-IDA would
consist of (i) and (ii). Table~\ref{tab:IDA:times} shows the
results.

\begin{table}[htbp]  
  \centering\small
  \caption{Mean runtime in seconds of computing the causal effects and
    the multiplicities over 100 replicates with sample
    size 1000 and the specified
    number of covariates. The case $n=4088$ is the real-world
    riboflavin dataset and is hence performed only once. Here, we
    average over \emph{all} covariates $X_i$ instead of choosing a
    random one.}
  \label{tab:IDA:times}
  \begin{tabular}{rlccccccc}
    \toprule
                                           &          &          & \hss\hbox to 0pt{number of covariates \ \hss}     &        &       &      &       &                    \\
                                           &          & $4$      & $9$                                               & $14$   & $29$  & $49$ & $99$  & $4088$             \\
    \cmidrule(rl){1-9}
    \smash{\raisebox{-1.75ex}{Effects}}        & Time in $s$  & 0.09440   & 0.14986  & 0.18639   & 0.24875  & 0.24214    & 0.27393   & 0.48184            \\
                                               & Std.\ dev.\  & 0.11958   & 0.11310  & 0.12029   & 0.16941  & 0.11084    & 0.14278   & 0.22053            \\ [1ex]
    \smash{\raisebox{-1.75ex}{Multipl.}} & Time in $s$  & 0.00014   & 0.00023  & 0.00023   & 0.00086  & 0.00113    & 0.00172   & 0.07042            \\
                                               & Std.\ dev.\  & 0.00028   & 0.00059  & 0.00006   & 0.00453  & 0.00484    & 0.00545   & 0.02758            \\   
    \bottomrule
  \end{tabular}
\end{table}

Clearly, the extra effort of computing the multiplicities is
negligible in this setting. The regression tasks, which both
local- and global-IDA perform for the causal effect estimation, have
significantly larger computational effort. This is due to the fact
that the counting tasks for computing the multiplicities is often
\emph{extremely simple} for the given
graphs. The CPDAGs learned from the PC-algorithm are usually very
sparse. Moreover, the input to Clique-Picking (and other counting
algorithms) consists only of the
undirected components of the CPDAG and even for large graphs, these are often
quite small. In this sense, the graphs used in the experiments in the
previous
subsection were worst-case inputs when it comes to
computational cost, as they were completely undirected. Therefore,
we reemphasize that for most practical problems,
there is no reason to avoid the counting task as the Clique-Picking
algorithm should
be fast enough to handle almost all imaginable cases.

\section{The Clique-Picking Algorithm}
\label{sec:cliquepicking}

In this section, we will show how the problem $\hamo$ defined on
undirected chordal graphs can be solved in polynomial-time through the
novel Clique-Picking algorithm. As
discussed in Section~\ref{sec:preliminaries}, such a polynomial-time algorithm
for $\hamo$ means that the problem of computing the size of an MEC can
be solved in polynomial-time as well.

The algorithm Clique-Picking, which we develop in the following,
heavily relies on the special properties of chordal graphs and their
close connection to AMOs. Hence, we
start by introducing the necessary graphical terms and give important facts.

\subsection{Further Definitions and Known Properties}
\paragraph*{Chordal graphs.}
The set of all maximal cliques of undirected graph $G$ is denoted  by $\cliques(G)$. A vertex is
\emph{simplicial} if its neighbors form a clique.
In a connected graph, we call a set $S\subseteq V$ an
\emph{$a$-$b$-separator} for two nonadjacent vertices $a,b\in V$ if
$a$ and $b$ are in different connected components in
$G[V\setminus S]$. If no proper subset of $S$ separates $a$ and $b$ we
call $S$ a \emph{minimal $a$-$b$-separator}. We say a set
$S$ is a \emph{minimal separator} if it is a minimal $a$-$b$-separator
for any two vertices\footnote{Observe that a minimal separator can be a proper
subset of another minimal separator (for different vertex pairs $a$-$b$).}. We denote the set of all minimal
separators of a graph $G$ by $\separators(G)$.
An undirected graph is called
\emph{chordal} if no subset of four or more vertices induces an
undirected cycle. For every chordal graph on $n$ vertices we have
$|\cliques(G)|\leq n$~\citep{dirac61}. Furthermore, it is well-known
that a graph $G$ is chordal if, and only if, all its minimal
separators are cliques.

\paragraph*{AMOs.}
Just like any other DAG, an AMO $\alpha$ of a UCCG $G$ can be represented by a (not
necessarily unique) linear ordering of the vertices. Such a
\emph{topological} ordering $\tau$ represents $\alpha$ if for each edge $u
\rightarrow v$ in $\alpha$, $u$ precedes $v$ in $\tau$. 
Note that every AMO of a UCCG contains exactly
one source vertex, i.e., a vertex with no incoming
edges\cite{He2015}. 
Based on this observation, one may define the $s$-orientation $G^s$ of
a UCCG $G$ to be the union of all AMOs
of $G$ with unique source vertex~$s$. 
We view $s$-orientations from the equivalent
perspective of being the union of all AMOs that can be represented by
a topological ordering starting with~$s$. The undirected components of~$G^s$
are UCCGs and can be oriented independently~\citep{He2015}.
This observation enables recursive strategies for counting 
AMOs: the ``root-picking''
approaches~\citep{He2015,Ghassami2019,Talvitie2019,Ganian2020} that
pick each vertex~$s$ as source and recurse on the UCCGs of the $s$-orientation.
Because these UCCGs can be oriented independently, the number of AMOs
is obtained by alternately summing over the number of AMOs for each
source vertex $s$ and multiplying the number of AMOs for each independent UCCG.

\subsection{Basics of the Clique-Picking Algorithm}
To count the number of Markov equivalent DAGs, we use
the association between an AMO and its topological
orderings. In accordance with the algorithm we develop,
it is helpful to consider only topological orderings, which are
well-behaved in the following sense:

\begin{definition}
  A topological ordering $\tau$ of an AMO $\alpha$ is called \emph{clique-starting} if it has a maximal clique as a prefix.
\end{definition}

We denote all clique-starting topological orderings representing an
AMO $\alpha$ of a graph~$G$ by $\mathrm{top}_G(\alpha) = \{\tau_1,
\dots, \tau_\ell\}$ and will only consider such topological orderings in the following. It is sound to restrict ourselves in this way due to the following result:

\begin{lemma}
  \label{lemma:startclique}
  Every AMO can be represented by a clique-starting topological ordering.
\end{lemma}

Based on these observations, we generalize the definition of
$s$-orientations with the goal of handling whole cliques at once:
For this, we consider permutations $\pi$ of a clique $K$, as
each $\pi(K)$ represents a distinct AMO of the subgraph induced by $K$.
   
\begin{definition}
  Let $G$ be a UCCG, $K$ be a
  clique in $G$, and let $\pi(K)$ be a permutation of~$K$.
  \begin{enumerate}
  \item The $\pi(K)$-orientation of $G$, also denoted $G^{\pi(K)}$,
    is the union of all AMOs of $G$ that can be represented by a
    topological ordering beginning with $\pi(K)$.
  \item Let $G^{K}$ be the union of $\pi(K)$-orientations of $G$
    over all $\pi$, i.e.,\ let $G^{K} = \bigcup_{\pi} G^{\pi(K)}$.
  \item 
  Denote by $\cC_G(\pi(K))$ the undirected components 
  of $G^{\pi(K)}[V  \setminus K]$ and let 
  $\cC_G(K)$ denote the undirected components 
  of $G^{K}[V  \setminus K]$.
  \end{enumerate}
\end{definition}

Figure~\ref{fig:pikorients} shows an example $\pi(K)$-orientation of $G$:
For a graph $G$ in \textbf{(a)}, a clique $K = \{1,2,3,4\}$,
and a permutation $(4, 3, 2,1)$,  graph $G^{(4,3,2,1)}$ is presented in \textbf{(c)}.
It is the union of two DAGs which are AMOs of $G$, whose
topological orderings begin with $4,3,2,1$. The first DAG 
can be represented by topological ordering $4, 3, 2,1,5,6,7$ and 
the second one by $4, 3, 2,1,6,5,7$.  In Fig.~\ref{fig:pikorients},
we also compare the  $(4, 3, 2,1)$-orientation with an $s$-orientation, for 
$s=4$, shown in \textbf{(b)}. The undirected components of the orientations are
indicated by the colored regions. By orienting whole cliques at once, we get
significantly smaller undirected components in the resulting $\pi(K)$-orientation
than in the $s$-orientation (e.g., $\{5,6\}$ compared to $\{1, 2, 3, 5, 6\}$).
Finally, \textbf{(d)} illustrates graph $G^{\{1,2,3,4\}}$.
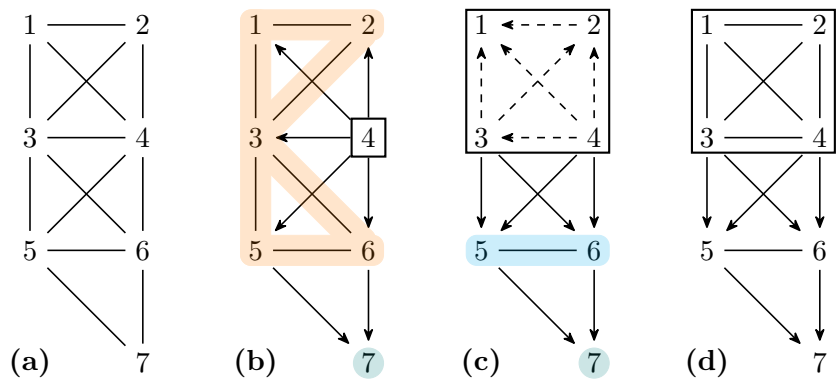
\begin{figure} 
  \centering
  \begin{tikzpicture}
    \node (1) at (0,-1.5) {$1$};
    \node (2) at (1.5,-1.5) {$2$};
    \node (3) at (0,-3) {$3$};
    \node (4) at (1.5,-3) {$4$};
    \node (5) at (0,-4.5) {$5$};
    \node (6) at (1.5,-4.5) {$6$};
    \node (7) at (1.5,-6) {$7$};
    \node (l1) at (0, -6) {\textbf{(a)}};
    \graph[use existing nodes, edges = {edge}] {
      1 -- {2,3,4};
      2 -- {3,4};
      3 -- {4,5,6};
      4 -- {5,6};
      5 -- {6,7};
      6 -- {7};
    };
    
    \node (1) at (3,-1.5) {$1$};
    \node (2) at (4.5,-1.5) {$2$};
    \node (3) at (3,-3) {$3$};
    \node[rectangle,draw, thick] (4) at (4.5,-3) {$4$};
    \node (5) at (3,-4.5) {$5$};
    \node (6) at (4.5,-4.5) {$6$};
    \node (7) at (4.5,-6) {$7$};
    \node (l2) at (3,-6) {\textbf{(b)}};

    \graph[use existing nodes, edges = {edge}] {
      1 -- {2,3};
      2 -- {3};
      3 -- {5,6};
      5 -- 6;
    };

    \graph[use existing nodes, edges = {arc}] {
      4 -> {1,2,3,5,6};
      5 -> 7;
      6 -> 7;
    };

    \begin{scope}[opacity = 0.2, transparency group]
      \filldraw[orange,rounded corners] (2.8,-4.7) rectangle
      (4.7,-4.3);
      \filldraw[orange,rounded corners] (2.8,-1.3) rectangle
      (3.2,-4.7);
      \filldraw[orange,rounded corners] (2.8,-1.3) rectangle (4.7,-1.7);
      \filldraw[orange,rounded corners] (2.85,-3.15) -- (3.15,-2.85) -- (4.65,-4.35) -- (4.35,-4.65);
      \filldraw[orange,rounded corners] (2.85,-2.85) -- (3.15,-3.15) -- (4.65,-1.65) -- (4.35,-1.35);
    \end{scope}
    \filldraw[teal, opacity = 0.2] (4.5,-6) circle (0.2);

    \node (1) at (6,-1.5) {$1$};
    \node (2) at (7.5,-1.5) {$2$};
    \node (3) at (6,-3) {$3$};
    \node (4) at (7.5,-3) {$4$};
    \node (5) at (6,-4.5) {$5$};
    \node (6) at (7.5,-4.5) {$6$};
    \node (7) at (7.5,-6) {$7$};
    \node (l3) at (6, -6) {\textbf{(c)}};

    \graph[use existing nodes, edges = {edge}] {
      5 -- 6;
    };

    \graph[use existing nodes, edges = {arc}] {
      2 ->[dashed] 1;
      3 ->[dashed] {1,2};
      3 -> {5,6};
      4 ->[dashed] {1,2,3};
      4 -> {,5,6};
      5 -> 7;
      6 -> 7;
    };

    \draw[thick] (5.8,-1.3) rectangle (7.7,-3.2);

    \filldraw[cyan, opacity = 0.2, rounded corners] (5.8,-4.7) rectangle (7.7,-4.3);
    \filldraw[teal, opacity = 0.2] (7.5,-6) circle (0.2);
    
    \node (1) at (6+3,-1.5) {$1$};
    \node (2) at (7.5+3,-1.5) {$2$};
    \node (3) at (6+3,-3) {$3$};
    \node (4) at (7.5+3,-3) {$4$};
    \node (5) at (6+3,-4.5) {$5$};
    \node (6) at (7.5+3,-4.5) {$6$};
    \node (7) at (7.5+3,-6) {$7$};
    \node (l3) at (6+3, -6) {\textbf{(d)}};

    \graph[use existing nodes, edges = {edge}] {
      5 -- 6;
      2 -- 1;
      3 -- {1,2};
      4 -- {1,2,3};
    };

    \graph[use existing nodes, edges = {arc}] {
      3 -> {5,6};
      4 -> {,5,6};
      5 -> 7;
      6 -> 7;
    };

    \draw[thick] (5.8+3,-1.3) rectangle (7.7+3,-3.2);

    
  \end{tikzpicture}
  \caption{For a UCCG $G$ in \textbf{(a)}, the figure shows $G^{(4)}$ in 
    \textbf{(b)},  $G^{(4,3,2,1)}$ in \textbf{(c)}, and $G^{\{1,2,3,4\}}$ in \textbf{(d)}.
    The undirected components in $G^{(4)}$ and $G^{(4,3,2,1)}$ are indicated by
    the colored regions and the vertices put at the beginning of the
    topological ordering by a rectangle (all edges from the rectangle
    point outwards). Edges inside the rectangle in \textbf{(c)} are dashed, as they
    have no influence on the further edge directions outside the rectangle. 
    }
  \label{fig:pikorients}
\end{figure}

The crucial observation is that the undirected components
$\cC_G(\pi(K))$ are independent of the permutation
$\pi$. This means no matter how the vertices $\{1, 2, 3, 4\}$ are
permuted, if the whole clique is put at the beginning of the
topological ordering, no further edge orientations will be
influenced. Informally, this is because all edges from the clique $K$
to other vertices are directed outwards no matter the permutation
$\pi$. We formalize this observation as:

\begin{lemma} \label{lemma:perminvariance}
  Let $G$ be a UCCG and $K$ be a clique of $G$.
  For each permutation $\pi(K)$, all edges of $G^{\pi(K)}$
  coincide with the edges of $G^K$, excluding the edges connecting 
  the vertices in $K$, and therefore, in particular, $\cC_G(\pi(K)) = \cC_G(K)$.
\end{lemma}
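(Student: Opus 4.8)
The plan is to reduce the statement to an edge-by-edge comparison and then to a single combinatorial fact about permuting the interior of $K$. First I would fix an edge $e=\{u,v\}$ of $G$ that is \emph{not} interior to $K$ (i.e.\ $|\{u,v\}\cap K|\le 1$) and, for each permutation $\pi$ of $K$, let $S_\pi(e)$ be the set of orientations that $e$ receives in some AMO of $G$ having a topological ordering with prefix $\pi(K)$. By the definition of the $\pi(K)$-orientation, $e$ is directed in $G^{\pi(K)}$ exactly when $|S_\pi(e)|=1$ (and then with that orientation), undirected when $|S_\pi(e)|=2$, and absent when $S_\pi(e)=\emptyset$; and since $G^K=\bigcup_\pi G^{\pi(K)}$, the status of $e$ in $G^K$ is governed by $\bigcup_\pi S_\pi(e)$. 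Hence it suffices to show that $S_\pi(e)$ does \emph{not} depend on $\pi$ for every non-interior edge $e$. (If no AMO has a topological ordering starting with $\pi(K)$ for any $\pi$, both graphs are edgeless outside $K$ and there is nothing to prove; the claim below also shows that non-emptiness is a $\pi$-independent property.)

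The heart of the argument is the following claim: \emph{for any two permutations $\pi,\pi'$ of $K$ and any AMO $\alpha$ of $G$ admitting a topological ordering $\tau=(\pi(K),\rho)$, there is an AMO $\alpha'$ admitting the topological ordering $\tau'=(\pi'(K),\rho)$ that agrees with $\alpha$ on every edge not interior to $K$.} I would prove it by the obvious construction: let $\alpha'$ orient every edge with an endpoint outside $K$ exactly as $\alpha$ does, and orient the edges inside $K$ according to $\pi'$. Checking that $\tau'$ is a topological ordering of $\alpha'$ is a routine case distinction on where the endpoints of an edge lie relative to $K$: edges inside $K$ follow $\pi'$, which is the prefix of $\tau'$; an edge between $K$ and $V\setminus K$ is oriented out of $K$ both in $\alpha$ (because $K$ is a prefix of $\tau$) and hence in $\alpha'$, consistent with $K$ being a prefix of $\tau'$; and edges inside $V\setminus K$ keep $\alpha$'s orientation while $\rho$ is a common suffix. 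In particular $\alpha'$ is acyclic, and $\tau'$ starts with $\pi'(K)$.

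The one genuinely substantive point — and the step I expect to be the main obstacle — is verifying that $\alpha'$ is \emph{moral}, i.e.\ has no v-structure (recall a UCCG has none, so ``no new v-structure'' just means ``no v-structure''). A v-structure can only appear at a vertex whose in-neighbourhood changed, so it suffices to look at vertices of $K$: for $z\notin K$ all incident edges keep their orientation, so $z$ has the same parents in $\alpha'$ as in the AMO $\alpha$, which has no v-structure. Here is the key observation for $z\in K$: if $w\to z$ in $\alpha'$ then $\{w,z\}$ cannot be a boundary edge, since a boundary edge $\{k,v\}$ with $k\in K$ is oriented $k\to v$ (as $K$ is a topological prefix), so $w\in K$. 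Thus the parents of every $z\in K$ in $\alpha'$ form a subset of the clique $K$, hence a clique, and $\alpha'$ has no v-structure. This proves the claim.

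Finally I would assemble the pieces. The claim shows $S_\pi(e)=S_{\pi'}(e)$ for all $\pi,\pi'$ and all non-interior edges $e$ (and likewise that $G^{\pi(K)}\neq\emptyset$ for one permutation iff for all), so by the reduction above $G^{\pi(K)}$ and $G^K$ assign the same status to every edge not interior to $K$ — precisely the first assertion of Lemma~\ref{lemma:perminvariance}. In particular the partially directed graphs $G^{\pi(K)}[V\setminus K]$ and $G^K[V\setminus K]$ are literally equal, so their undirected connected components coincide, giving $\cC_G(\pi(K))=\cC_G(K)$.
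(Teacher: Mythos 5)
Your proposal is correct and follows essentially the same route as the paper: both take an AMO represented by a topological ordering starting with $\pi(K)$, swap the prefix to $\pi'(K)$, and verify that the resulting orientation is again an AMO that agrees with the original outside $K$. Your morality check (arguing that every vertex's parent set is either unchanged or a subset of the clique $K$) is a slightly cleaner packaging of the paper's case analysis on where the three vertices of a putative v-structure lie, but the underlying argument is the same.
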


In our journey towards a polynomial-time algorithm for $\hamo$, 
we start by developing a linear-time algorithm for computing
$\cC_G(K)$. This algorithm will yield structural insights we will
later use for deriving a recursive formula for counting Markov equivalent DAGs.

The computation of $\cC_G(K)$ can be performed efficiently
through adaptions of well-known graph traversal algorithms used
most prominently in chordality testing.
While in~\citep{wienobst2021counting}
specifically the Lexicographic BFS algorithm has been used to
compute $\cC_G(K)$, we now propose a more
general framework which allows ``plugging in'' various linear-time
chordality testing algorithms.

To do this, we need to introduce
further terms from the chordal graph theory and connect them to the
problem of counting AMOs.

\begin{definition}
  A linear ordering $\rho = (x_1, \dots, x_n)$ 
  of the vertices of chordal graph $G$ is called a perfect elimination
  ordering (PEO) if for each $i \in \{1, \dots, n\}$ 
  the vertex $x_i$ is simplicial in $G[\{x_i, \dots, x_n\}]$.
\end{definition}

\begin{lemma}
  \label{lemma:peomao}
  A topological ordering $\tau$ of the vertices of a UCCG $G$ represents an AMO if, and only if, it is the reverse of a PEO.
\end{lemma}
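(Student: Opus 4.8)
The plan is to prove both directions of the equivalence by connecting the defining property of an AMO (acyclic and moral, i.e.\ no new v-structure) with the simpliciality condition of a PEO, using the fact that $G$ is chordal. Write $\tau = (x_1, \dots, x_n)$ for the topological ordering and $\rho = (x_n, \dots, x_1)$ for its reverse; I want to show that $\tau$ represents an AMO of $G$ if and only if $\rho$ is a PEO of $G$. Acyclicity is automatic for any orientation induced by a linear ordering, so the whole content is the correspondence between ``no new v-structure'' and ``each $x_i$ is simplicial in $G[\{x_i,\dots,x_n\}]$''.

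For the direction ``$\tau$ represents an AMO $\Rightarrow$ $\rho$ is a PEO'': fix $i$ and consider the vertex $x_i$ inside the induced subgraph $G[\{x_i,\dots,x_n\}]$; its neighbors there are exactly the neighbors of $x_i$ that come later in $\tau$, which under the orientation induced by $\tau$ are precisely the children of $x_i$. I must show any two such children $x_j, x_k$ (with $j,k > i$) are adjacent in $G$. Since $G$ is undirected and these edges $x_i - x_j$, $x_i - x_k$ exist, in the oriented AMO we have $x_i \to x_j$ and $x_i \to x_k$; if $x_j$ and $x_k$ were nonadjacent, then the triple at $x_j$ or $x_k$ would not itself be a v-structure, but I instead get the contradiction by a standard shortest-cycle / chordality argument: because $G$ is chordal and connected, a nonadjacent pair of common neighbors forces the existence of some induced structure that, combined with the orientation, produces a vertex with two nonadjacent parents, i.e.\ a new v-structure — contradicting morality. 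The cleaner route, which I would actually take, is to argue directly that \emph{any} AMO orientation makes the reverse topological order a PEO by induction on $n$: the last vertex $x_n$ in $\tau$ is a sink of the AMO, all its neighbors point into it, and since the AMO has no new v-structure those neighbors must be pairwise adjacent, so $x_n$ is simplicial in $G$; remove $x_n$ and recurse on $G[\{x_1,\dots,x_{n-1}\}]$ with the restricted AMO, which is still acyclic and moral.

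For the converse ``$\rho$ is a PEO $\Rightarrow$ $\tau$ represents an AMO'': orient each edge $u - v$ as $u \to v$ when $u$ precedes $v$ in $\tau$; acyclicity is immediate. Suppose for contradiction there is a new v-structure $a \to b \leftarrow c$ with $a,c$ nonadjacent. Then $a$ and $c$ both precede $b$ in $\tau$, so $b$ comes before both of them in $\rho$; let $x_i$ be whichever of $a, c$ appears earlier in $\rho$ (equivalently, later in $\tau$), say $x_i = a$ without loss of generality. Then in $G[\{x_i, \dots, x_n\}]$ the vertex $x_i = a$ has both $b$ and $c$ as neighbors (all three indices $\ge i$ since $b$ and $c$ come after $a$ in $\tau$, hence before $a$ in $\rho$ — here I need to be careful about the exact index bookkeeping, but the point is that $a$ is simplicial in the relevant tail subgraph and $b,c$ are among its later-neighbors), so the PEO property forces $b$ and $c$ adjacent, contradicting nonadjacency of $a$ and $c$ only if I set it up so that the simplicial vertex's two neighbors are exactly the nonadjacent pair. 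The correct pairing: take the \emph{source} of the v-structure that is eliminated first in $\rho$; its two later-neighbors in the tail subgraph include the other source and the collider, and simpliciality makes those two adjacent — but it is the \emph{two sources} $a,c$ that must be nonadjacent, so I should instead pick $b$'s position. Getting this index argument exactly right is the one place that needs care.

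The main obstacle is precisely this bookkeeping in the converse direction: correctly identifying which vertex of a putative new v-structure is eliminated first by the PEO and verifying that the simpliciality of that vertex in the corresponding tail-subgraph directly contradicts the nonadjacency in the v-structure. The inductive argument for the forward direction is routine once one observes that sinks of an AMO are simplicial (by morality) and that restrictions of AMOs to ``downward-closed-in-$\tau$'' vertex subsets remain AMOs; I expect no difficulty there. Chordality of $G$ is used implicitly in both directions through the fact that a PEO exists at all and that the induced subgraphs encountered remain chordal, but the heavy lifting is the local v-structure/simplicial translation rather than any global chordal-graph machinery.
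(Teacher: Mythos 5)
Your final arguments are correct and in substance the same as the paper's proof, which is nothing more than the observation that the PEO condition on the \emph{reverse} of $\tau$ says exactly that the neighbors of each vertex $u$ that \emph{precede} $u$ in $\tau$ (i.e., its parents in the induced orientation) form a clique, which is precisely the absence of a v-structure at $u$. The one genuine confusion in your write-up is the index reversal in the forward direction: your first route tests whether $x_i$ is simplicial in $G[\{x_i,\dots,x_n\}]$ with respect to the indices of $\tau$, i.e., whether the \emph{children} of $x_i$ form a clique. That is the PEO condition for $\tau$ itself, not for its reverse, and the claim is false: in the path $a - b - c$ oriented as $a \leftarrow b \rightarrow c$ (an AMO with source $b$), the children $a$ and $c$ of $b$ are nonadjacent. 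No chordality or shortest-cycle argument can rescue this, so that route is a dead end; fortunately the sink-removal induction you actually commit to is correct (the last vertex of $\tau$ is a sink whose parents form a clique by morality, and the restriction of an AMO to a $\tau$-prefix is again acyclic and moral), and it is equivalent to the paper's one-line version. For the converse, the bookkeeping you worry about resolves immediately once you note that the collider $b$ of a putative v-structure $a \rightarrow b \leftarrow c$ is the \emph{earliest} of the triple in $\rho$ (since $a$ and $c$ precede $b$ in $\tau$), so the tail subgraph of $\rho$ at $b$'s position contains both $a$ and $c$ as neighbors of $b$, and simpliciality of $b$ there forces $a$ and $c$ adjacent, a contradiction. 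With those two points fixed the proof is complete; as in the paper, chordality of $G$ plays no role beyond guaranteeing that the statement is non-vacuous.
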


 There are various linear-time graph traversal algorithms designed for chordality
 testing, such as the Lexicographic
 BFS, Maximum Cardinality Search, Lexicographic DFS and Maximal Neighborhood Search~\citep{Rose1976,tarjan1984simple,corneil2008unified}. All of these algorithms are based on the same principle: the
 vertices are visited in reverse order of a PEO if, and only if, the
 graph is chordal. Hence, by Lemma~\ref{lemma:peomao}
 these algorithms will traverse the graph in the topological order of
 an AMO. A fact we exploit in the following.

To make our main algorithm independent on the specific 
method of graph traversal, we present Algorithm~\ref{alg:cgk}
for computing the set $\chordalcomps_G(K)$ in a general form.
It is based on a generic algorithm, called Maximum Label Search
(MLS), to compute a PEO for a given graph $G$~\citep{berry2009maximal}.
MLS traverses $G$ using the following labeling structure:

 \begin{definition}[\citet{berry2009maximal}]
   A labeling structure 
   $\mathcal{L} = (L, \preceq, l_0, \text{Inc})$ consists of:
   \begin{itemize}
   \item $L$ is a set (the set of labels),
   \item $\preceq$ is a partial order on $L$ (which may be total or
     not),
   \item $l_0$ is an element of $L$ (the initial label),
   \item \text{Inc} (increase) is a mapping from $L \times
     \mathbb{N}^+$ 
     to $L$ satisfying the following IC (Inclusion Condition):
     for any subsets $I$ and $I'$ of $\mathbb{N}^+$, if $I \subset
     I'$, then $\text{lab}_{\mathcal{L}}(I) \prec
     \text{lab}_{\mathcal{L}}(I')$. Here, $\mathbb{N}^+$ denotes  the set of positive natural numbers
     and for a set $J= \{i_1, i_2, \dots, i_k\}$ with $i_1 > \dots > i_k$, the labeling function is defined as 
      $\text{lab}_{\mathcal{L}}(J)
     = \text{Inc}(\dots (\text{Inc} (l_0, i_1), \dots), i_k)$,
   \end{itemize}
 \end{definition}

Based on this, the MLS algorithm takes as input a graph $G$ and for a specific  labeling structure
 $\mathcal{L}$ it returns a PEO if $G$ is chordal (see Algorithm~MLS in \citep{berry2009maximal}).
 In this framework, for example, Maximum Cardinality Search is a special case of MLS with
 labeling set $L = \mathbb{N}^+ \cup \{0\}$, the total order $\preceq$ to be $\leq$,
 $l_0 = 0$, and $\text{Inc}(l, i) = l+1$. Our adaption of MLS, which
 computes $\chordalcomps_G(K)$, is presented as
 Algorithm~\ref{alg:cgk}.
 
\begin{algorithm}
  \caption{A generic algorithm
    for computing the set $\chordalcomps_G(K)$.}
  \label{alg:cgk}
  \DontPrintSemicolon
  \SetKwInOut{Input}{input}\SetKwInOut{Output}{output}\SetKwInOut{Aux}{framework}
  \SetKwFor{Rep}{repeat}{}{end}
  \Input{A UCCG $G = (V,E)$, a clique $K\subseteq V$. }
  \Aux{A labeling structure $\mathcal{L} = (L,\preceq,l_0,\textit{Inc})$.}
  \Output{$\chordalcomps_G(K)$.}
  $V' \gets \emptyset$; $P \gets \emptyset$; For all $x\in V$ initialize labels $L(x)$ as $l_0$ \;
  \For{$i = 1 \text{ to } n$}{
    \uIf{$i \leq |K|$}{
      $x \gets$ any vertex in $K \setminus V'$ \label{line:clique}
    }
    \Else{
      $X \gets$ set of vertices in $V \setminus V'$ with maximal label
      \; \label{line:ml}
      Append undirected components of $G[X \setminus P]$ to the output \;
      $P \gets P \cup X$ \;
      $x \gets$ any vertex in $X$ \; \label{line:getsx}
    }
    \ForEach{$y$ in $N(x) \setminus V'$}{\label{line:foreachloop1}
      $L(y) \gets \textit{Inc(}L(y),n-i+1)$
    }\label{line:foreachloop3}
    $V' \gets V' \cup \{x\}$ \;
  }
\end{algorithm}

For convenience, we introduce the following terms:
\begin{definition}
  In the execution of Algorithm~\ref{alg:cgk}
  on input $G, \mathcal{L}$, let $P_i(y)$ be the set of previously
  visited neighbors of vertex $y \in G$ (a vertex is visited if it was
  chosen as $x$ in line~\ref{line:clique} or~\ref{line:getsx})
  before the start of the $i$-th iteration. Moreover, let $i(x)$ be the iteration in which vertex $x \in V
  \setminus K$ was output.
\end{definition}

Clearly, $P_i(y)$ are exactly the vertices, which contributed to
$y$'s label up to iteration $i$.

\begin{lemma}\label{lemma:samechoice}
  Algorithm~\ref{alg:cgk} always chooses vertex $x$ with maximal label.
\end{lemma}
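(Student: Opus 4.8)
The plan is to split on the loop index $i$. For iterations $i > |K|$ the statement is immediate from the code: line~\ref{line:ml} forms $X$ as the set of unvisited vertices of maximal label, and line~\ref{line:getsx} takes $x \in X$. Hence the whole content of the lemma is the claim that during the first $|K|$ iterations, in which $x$ is chosen \emph{arbitrarily} from $K \setminus V'$ in line~\ref{line:clique}, every vertex of $K \setminus V'$ nevertheless already carries a label that is maximal (indeed a maximum) among all currently unvisited vertices. I would prove this by directly tracking how the MLS labels evolve, using only the Inclusion Condition of the labeling structure $\mathcal{L}$.

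The one bookkeeping fact I would establish first is that, for any $i$ and any vertex $y$ that is still unvisited at the start of iteration $i$, its current label equals $\text{lab}_{\mathcal{L}}\bigl(\{\, n - j(w) + 1 \mid w \in P_i(y) \,\}\bigr)$, where $j(w)$ denotes the iteration in which $w$ was visited. Indeed, when such a neighbour $w$ was visited at iteration $j(w) < i$, the vertex $y$ was not yet in $V'$, so the loop of lines~\ref{line:foreachloop1}--\ref{line:foreachloop3} performed $L(y) \gets \text{Inc}(L(y),\, n - j(w) + 1)$; these updates occurred in increasing order of $j(w)$, hence in strictly decreasing order of the second argument $n - j(w) + 1$, and the second arguments are pairwise distinct because the iterations are. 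This is precisely the order in which the definition of $\text{lab}_{\mathcal{L}}$ composes its $\text{Inc}$'s, so the running label of $y$ is genuinely $\text{lab}_{\mathcal{L}}$ applied to the \emph{set} $\{\, n - j(w) + 1 \mid w \in P_i(y) \,\}$. This is the only delicate point; once it is in place, the Inclusion Condition (stated for sets) becomes directly applicable and the rest is a single set-inclusion argument.

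Now fix $i \le |K|$. By the structure of the loop, every iteration before the $i$-th also falls into the ``$i \le |K|$'' branch and therefore picks a vertex of $K \setminus V'$; consequently, at the start of iteration $i$ the set $V'$ is a subset of $K$ of size $i-1$, and $\{\, j(w) \mid w \in V' \,\} = \{1, \dots, i-1\}$. Take any unvisited clique vertex $y \in K \setminus V'$: since $K$ is a clique, $P_i(y) = V'$, so its label is $\text{lab}_{\mathcal{L}}\bigl(\{\, n - j + 1 \mid 1 \le j \le i-1 \,\}\bigr)$ (the set of the $i-1$ largest elements of $\{1,\dots,n\}$). For any other unvisited vertex $z \in V \setminus V'$ we have $P_i(z) \subseteq V'$, hence $\{\, n - j(w) + 1 \mid w \in P_i(z) \,\} \subseteq \{\, n - j + 1 \mid 1 \le j \le i-1 \,\}$. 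If the two sets are equal then $L(z) = L(y)$; otherwise the inclusion is strict and the Inclusion Condition gives $L(z) \prec L(y)$. Either way $L(z) \preceq L(y)$, so every vertex of $K \setminus V'$ has a label dominating those of all unvisited vertices; in particular the vertex $x$ chosen in line~\ref{line:clique} has maximal label. Together with the trivial case $i > |K|$ this proves the lemma.
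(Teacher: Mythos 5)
Your proof is correct and follows essentially the same route as the paper's: the case $i>|K|$ is immediate from the code, and for $i\le|K|$ the key point is that a clique vertex has \emph{all} previously visited vertices as neighbours, so its label dominates every other unvisited vertex's label by the Inclusion Condition. Your write-up is in fact more careful than the paper's one-line argument, since you explicitly verify that the running label equals $\text{lab}_{\mathcal{L}}$ of the set of contributed indices (the $\text{Inc}$ calls occur in decreasing order of the second argument, matching the definition) before invoking the Inclusion Condition.
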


\begin{proof}
  This is the case by construction in line~\ref{line:getsx}.
  We have to show that it also holds in line~\ref{line:clique}.
  Observe that the first $|K|$ chosen vertices are all from clique
  $K$. Hence, when a vertex $x$ from $K$ is chosen all previously chosen
  vertices are neighbors of $x$. This means that any other label is
  equal or smaller.  
\end{proof}
This lemma implies that Algorithm~\ref{alg:cgk}, as the Maximum Label
Search, visits the vertices in reverse PEO order, i.e., in an order
representing an AMO.

\begin{theorem} \label{theorem:cgk}
  Algorithm~\ref{alg:cgk} computes $\chordalcomps_G(K)$. Moreover, it  can be implemented to run in
  time $\mathcal{O}(|V| + |E|)$.
\end{theorem}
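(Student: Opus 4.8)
The plan is to prove the statement in two independent parts: that the set returned by Algorithm~\ref{alg:cgk} equals $\chordalcomps_G(K)$, and that the algorithm admits an $O(|V|+|E|)$ implementation. For correctness I would first record the bookkeeping facts: the first $|K|$ iterations visit exactly the vertices of $K$ (line~\ref{line:clique} always has a choice, and after $|K|$ steps $V'=K$), every later iteration has a nonempty unvisited pool, and once a vertex enters the maximal-label set $X$ it is added to $P$, so each vertex of $V\setminus K$ lies in $X\setminus P$ in exactly one iteration. Hence the sets appended to the output are precisely the connected components of the subgraphs $G[X\setminus P]$, and they form the partition of $V\setminus K$ induced by the relation ``$u,v$ adjacent and $i(u)=i(v)$'' (using the notation $i(\cdot)$ from the excerpt). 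By Lemma~\ref{lemma:perminvariance}, $\chordalcomps_G(K)$ equals the undirected connected components of $G^{\pi(K)}[V\setminus K]$ for any fixed $\pi$, so it suffices to show, for adjacent $u,v\in V\setminus K$, that $u-v$ is undirected in $G^K$ \emph{iff} $i(u)=i(v)$.

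The ``if'' direction is the clean half. Suppose $u,v$ are both maximal-label and unvisited (so both in $X\setminus P$) at some iteration. Branch the execution there, once picking $x=u$ and once picking $x=v$ (both are legal choices), and complete each run with arbitrary tie-breaking. By Lemma~\ref{lemma:samechoice} each branch is a valid Maximum Label Search run, so by \citet{berry2009maximal} its visiting order is the reverse of a PEO, and by Lemma~\ref{lemma:peomao} this visiting order is therefore a topological ordering of some AMO; since its first $|K|$ vertices form $K$, this AMO can be represented by a topological ordering beginning with a permutation of $K$, hence it is one of the AMOs whose union is $G^K$. In the first branch $u$ precedes $v$ in the visiting order, so that AMO orients $u\to v$; in the second, $v\to u$. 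Thus both orientations occur in $G^K$, i.e.\ $u-v$ is undirected there, as required.

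The ``only if'' direction is where the real work lies, and I expect it to be the main obstacle. Here one must show that an edge $u-v$ with $i(u)<i(v)$ is \emph{directed} in $G^K$, i.e.\ every AMO $\alpha$ occurring in $G^K$ orients it $u\to v$. The core argument: take a topological ordering $\tau_\alpha$ of $\alpha$ (a reverse of a PEO beginning with a permutation of $K$, by Lemma~\ref{lemma:peomao}) that agrees with the algorithm's visiting order on the set $V'$ of vertices visited before iteration $i(u)$; if, for contradiction, $\alpha$ orients $v\to u$, then $v$ precedes $u$ in $\tau_\alpha$, so in the PEO $\rho=\mathrm{rev}(\tau_\alpha)$ the vertex $u$ is eliminated while $v$ and all of $V'$ are still present. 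Simpliciality of $u$ at that moment forces $v$ to be adjacent to every vertex of $N(u)\cap V'$, so the set of visited neighbors of $u$ is contained in that of $v$; since these sets (with their timestamps) determine the labels via $\mathrm{lab}_{\mathcal L}$, the Inclusion Condition yields $L(u)\preceq L(v)$ at iteration $i(u)$, contradicting that $u$, but not $v$, attains the maximal label there. The delicate point is the very first step --- producing such a $\tau_\alpha$, equivalently showing that at every \emph{output} iteration the visited set $V'$ has no incoming $\alpha$-edge, simultaneously for all $\alpha\in G^K$ --- because the batches are run-dependent whereas $\chordalcomps_G(K)$ is not; I would establish this by an induction over iterations that carries both the prefix-closure property and the ``only if'' conclusion for already-output vertices, invoking chordality (minimal separators are cliques) precisely where a converse to the Inclusion Condition is needed.

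For the running time, I would use a classical linear-time chordality traversal as the engine of MLS --- Maximum Cardinality Search with label buckets, or Lexicographic BFS with partition refinement --- initialized so that $K$ forms the first block, which realizes lines~\ref{line:clique}--\ref{line:getsx}; maintaining the maximal-label set, performing the updates in line~\ref{line:foreachloop1}, and growing $V'$ costs $O(|V|+|E|)$ in total, exactly as in the standard analyses. The only addition is that whenever a fresh maximal-label class appears we extract its new members $X\setminus P$ and compute the connected components of $G[X\setminus P]$ with a bounded graph search; since each vertex of $V\setminus K$ belongs to exactly one such set and each incident edge is then scanned a constant number of times over the whole execution, this contributes $O(|V|+|E|)$, and maintaining $P$ costs $O(|V|)$. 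Hence the algorithm runs in $O(|V|+|E|)$.
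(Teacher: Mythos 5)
Your overall architecture (reduce to showing, for adjacent $u,v \in V\setminus K$, that $u-v$ is undirected in $G^K$ if and only if $i(u)=i(v)$) and your ``if'' direction coincide with the paper's proof: both branch the run at an iteration where $u$ and $v$ are simultaneously eligible, observe via Lemma~\ref{lemma:samechoice} and Lemma~\ref{lemma:peomao} that each branch yields an AMO represented by an ordering starting with $K$, and conclude that both orientations of $u-v$ occur in $G^K$. The run-time discussion also matches the paper's.

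The gap is in the ``only if'' direction, exactly where you flagged it. Your argument needs, for \emph{every} AMO $\alpha$ contributing to $G^K$, a topological ordering of $\alpha$ having the visited set $V'$ at iteration $i(u)$ as a prefix; equivalently, that $V'$ has no incoming $\alpha$-edge. You do not prove this, and it is not a harmless technicality: it is essentially the statement being proved (it asserts that every edge leaving the visited set is directed outward in $G^K$), so the induction you sketch risks circularity unless the invariant is formulated and discharged with real care. Worse, the invariant is false for arbitrary iterations: if $w$ and $z$ are output in the same component (so $w-z$ is undirected in $G^K$) but $w$ has been visited while $z$ has not, then $V'$ contains $w$ but not $z$, and the AMO orienting $z\to w$ has an edge into $V'$. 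To rescue your plan you would additionally have to show that at every \emph{output} iteration all previously output vertices have already been visited, i.e., that MLS exhausts each output component before a new one appears --- a nontrivial structural fact about the traversal that you neither state nor prove, and whose validity for an arbitrary labeling structure satisfying only the Inclusion Condition is unclear. The paper avoids all of this with a purely local argument on the union graph: it inducts over the output order, notes that at iteration $i=i(a)$ the later-output neighbor $b$ has a strictly smaller label, deduces $P_i(b)\subsetneq P_i(a)$ from the PEO property and the Inclusion Condition, picks a witness $c\in P_i(a)\setminus P_i(b)$ for which $c\to a$ is already known by induction, and concludes $a\to b$ from the soundness of the first Meek rule (any AMO orienting $a\leftarrow b$ would create the v-structure $c\to a\leftarrow b$). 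If you replace your prefix-closure step by this Meek-rule step, the rest of your write-up goes through.
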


\begin{proof}
  Consider two adjacent vertices $a$ and $b$ in $G$. We show that $a$
  and $b$ are in the same subgraph in the output iff we have $a -
  b$ in $G^K$. By transitivity it follows that two vertices are in the same
  subgraph iff there is an undirected path between them, which implies
  the first part of the statement (we will analyze the run time afterwards).
  
  If $a$ and $b$ are in the same connected component
  output by Algorithm~\ref{alg:cgk} then there was a point in the
  algorithm at which $a$ and $b$ 
  had a maximal label and, hence, either one
  could have been chosen as vertex $x$. 
  In both cases the algorithm would have produced a topological
  ordering representing an AMO starting with clique $K$ (following
  from Lemma~\ref{lemma:samechoice} and Lemma~\ref{lemma:peomao}), one time with
  $a \rightarrow b$, the other with $a \leftarrow b$. Hence, we have
  $a - b$ in $G^K[V \setminus K]$ by definition.

  Consider that $a$ and $b$ are not in the same connected component output by the
  algorithm.  Let $a$ be w.l.o.g.\ the vertex which is output
  earlier, i.e., $i(a) < i(b)$.  We show by induction over the order the vertices were visited that 
  $a \rightarrow b$ in $G^K[V
  \setminus K]$. For the start of the induction, observe that the
  vertices in $K$ are not output at all and all
  edges from $K$ to vertices in $V \setminus K$ are oriented towards those vertices.
  
  At the iteration $i = i(a)$ when $a$ was output, $b$ had a
  strictly smaller
  label. It follows that $P_i(a) \neq P_i(b)$. With $P_i(b) \setminus P_i(a)
  = \emptyset$ as the algorithm produces a PEO by Lemma~\ref{lemma:samechoice},
  it follows $P_i(b) \subset P_i(a)$. Let $c$ be in $P_i(a) \setminus
  P_i(b)$. By induction
  hypothesis, we have $c \rightarrow a$ in $G^K$ as $c$
  is not output together with $a$ (recall that $i$ is the iteration
  when $a$ is output, $c$ has already been visited previously). Then,
  $a \rightarrow b$ follows from the first Meek rule.

  Common choices of labeling structure (such as for Maximum Cardinality Search or Lexicographic BFS) lead
  to a linear-time implementation.
\end{proof}
Theorem~\ref{theorem:cgk} is an important result in its own
right. Algorithm~\ref{alg:cgk} may be used not only for computing
$\chordalcomps_G(K)$, but also for computing the $s$-orientations of a
chordal graph $G$ in linear time as well as the interventional
essential graph based on given intervention results (see
Section~\ref{sec:applications} for a discussion).

But for now, we focus on the structural properties regarding AMOs and
chordal graphs revealed by Algorithm~\ref{alg:cgk}, which allow us to
conclude that the undirected
components of $G^K$ (i.e., the graph which occurs when fixing
clique $K$ as ``source'') are chordal and can be oriented
independently. The first fact can be easily seen as, by
Algorithm~\ref{alg:cgk}, the undirected components are induced
subgraphs, which preserve the chordality of the graph. The second fact
is more technical and due to the observation that vertices in the same
undirected component have the same parent set in $G^K$, which ensures
that any AMO of the component will not create a new v-structure in
$G^K$. Crucially, this paves the way towards a recursive formulation
of $\hamo$ based on picking a clique as source.

\begin{corollary}\label{cor:properties:gk}
  Let $G$ be a chordal graph and $K$ a clique.
  \begin{enumerate}
  \item The undirected components of $G^K[V \setminus K]$ are
    induced subgraphs and hence chordal graphs.
  \item Let adjacent $x,y$ in $G$ be in different undirected
    connected components of $G^K[V \setminus K]$ and $i(x) <
    i(y)$. Then, $x \rightarrow y$ is an edge in $G^K[V \setminus K]$. 
  \item $P_{i(v)}(v) =  \textit{Pa}_v (G^K)$.
  \item For adjacent $a,b$ in the same undirected component
    of $G^K[V \setminus K]$, we have that $P_{i(a)} = P_{i(b)}$.
  \item The number  $\hext(G^{\pi(K)})$ can be factorized as 
   \[
      \hext(G^{\pi(K)}) = \prod_{H \in \cC_G(\pi(K))} \hamo(H).
    \]
  \item \label{it:des:formula} The number of AMOs represented by some topological ordering
    with clique $K$ at the beginning (in any permutation) is
    \[
      |K|! \times \hext(G^K) = |K|! \times \prod_{H \in
        \chordalcomps_G(K)} \hamo(H).
    \]
  \end{enumerate}
\end{corollary}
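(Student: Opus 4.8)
The plan is to read all six parts off the analysis of Algorithm~\ref{alg:cgk} already carried out for Theorem~\ref{theorem:cgk}, combined with Lemma~\ref{lemma:perminvariance}. Parts~1 and~2 require essentially nothing new: the sets appended to the output are undirected components of an \emph{induced} subgraph $G[X\setminus P]$, hence induced subgraphs of $G$, and induced subgraphs of a chordal graph are chordal (Part~1); Part~2 is exactly the ``$a$ and $b$ are not in the same output component'' case in the proof of Theorem~\ref{theorem:cgk}, where the vertex output earlier is shown, via the first Meek rule, to be a parent of the one output later. So I would simply restate those two arguments.

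For Parts~3 and~4 I would track the labels maintained by Algorithm~\ref{alg:cgk}. The auxiliary fact needed is that the maximal-label set $X$ picked in an iteration is processed \emph{contiguously}: its vertices are chosen as $x$ in consecutive iterations before any vertex outside $X$ is touched, a standard feature of Maximum-Label-Search-type traversals that follows from the Inclusion Condition. Granting this, for $v\in V\setminus K$ the neighbors visited strictly before iteration $i(v)$ are precisely $K\cap N(v)$ together with the neighbors $w$ of $v$ with $i(w)<i(v)$; combining this with Part~2 --- a neighbor in a \emph{different} component with smaller $i$ is a parent of $v$ in $G^K$, whereas a neighbor in the \emph{same} component is joined to $v$ by an undirected edge and is output in the very iteration $i(v)$, hence not visited earlier --- yields $P_{i(v)}(v)=\textit{Pa}_v(G^K)$, i.e.\ Part~3. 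Part~4 then follows because adjacent $a,b$ in one component are output in the same iteration and so carry the same maximal label there; by the Inclusion Condition neither contributing set can properly contain the other, and using $a\sim b$ together with the fact that the traversal produces a PEO rules out the remaining incomparable case, forcing the two sets to coincide. I expect this label bookkeeping --- making ``contiguous processing'' and ``equal labels imply equal contributing sets'' precise for an arbitrary labeling structure --- to be the main technical obstacle.

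Parts~5 and~6 are the payoff. For Part~5, fix $\pi$ and observe that the only undirected edges of $G^{\pi(K)}$ are the within-component edges of the graphs in $\cC_G(\pi(K))$, and that every v-structure of $G^{\pi(K)}$ is already a v-structure of $G$ (each edge of $G^{\pi(K)}$ is oriented the same way in every AMO whose union defines it), so the consistent extensions of $G^{\pi(K)}$ are exactly the AMOs of $G$ admitting $\pi(K)$ as a topological prefix. Every inter-component edge of $G^{\pi(K)}$ is directed, from the component output earlier to the one output later (Part~2), so no orientation of the within-component edges can close a directed cycle; and by Parts~3--4 all vertices of a component $H$ share a single parent set in $G^{\pi(K)}$, which --- being contained in the parent set of any vertex in an AMO of the undirected graph $G$ --- is a clique, so orienting an internal edge $u\to w$ merely enlarges $w$'s parents by $u$, and since those parents also form $u$'s parent set, $u$ is adjacent to all of them, creating no new v-structure. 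Hence the consistent extensions of $G^{\pi(K)}$ decompose as a free product over components, i.e.\ $\hext(G^{\pi(K)})=\prod_{H\in\cC_G(\pi(K))}\hamo(H)$.

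Finally, for Part~6 I would note that an AMO $\alpha$ of $G$ admits a clique-$K$-first topological ordering for at most one permutation $\pi$ of $K$, since the restriction of $\alpha$ to $K$ is a transitive tournament with a unique topological order; thus the AMOs to be counted split, according to that permutation, into $|K|!$ classes, the class of $\pi$ having size $\hext(G^{\pi(K)})$. By Lemma~\ref{lemma:perminvariance}, $\cC_G(\pi(K))=\cC_G(K)$ for every $\pi$, so by Part~5 each class has size $\prod_{H\in\cC_G(K)}\hamo(H)$; summing over the $|K|!$ permutations gives $|K|!\times\prod_{H\in\cC_G(K)}\hamo(H)$. Writing $\hext(G^K)$ for this common value $\hext(G^{\pi(K)})=\prod_{H\in\cC_G(K)}\hamo(H)$ --- legitimate since, by Lemma~\ref{lemma:perminvariance}, $G^K$ and $G^{\pi(K)}$ differ only on edges inside the source clique --- the count equals $|K|!\times\hext(G^K)$, as claimed.
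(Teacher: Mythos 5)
Your route coincides with the paper's for most items: parts~1 and~2 are read off Theorem~\ref{theorem:cgk}; part~4 is the same argument (both vertices carry a maximal label, so a one-sided difference between $P_{i(a)}(a)$ and $P_{i(b)}(b)$ would prevent the traversal from producing a reverse PEO); and part~6 is the same $|K|!$-fold split by the unique permutation that an AMO induces on $K$, combined with Lemma~\ref{lemma:perminvariance}. The one genuinely different choice is part~5: the paper only establishes that all vertices of a component share the same parent set (via parts~3 and~4) and then delegates the independence of the components to Lemma~10 of \citet{He2008}, whereas you give a self-contained argument (inter-component edges all point from earlier- to later-output components, so no cycle can close; the common parent set is a clique because it is contained in the parent set of that vertex in any AMO of the union, so no new v-structure can arise). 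That argument is sound and is a reasonable substitute for the citation.

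The one substantive problem is the auxiliary ``contiguous processing'' claim you use for parts~3 and~4: it is false as stated. After one vertex $x$ of a tied maximal-label set $X$ is visited, a vertex outside $X$ can be promoted into the new maximal-label set and be chosen before the remaining vertices of $X$. For example, run Maximum Cardinality Search on the path $a'-a-k-b-b'$ with $K=\{a,k\}$: after visiting $a$ and $k$, iteration~3 has $X=\{a',b\}$ and outputs $\{a'\}$ and $\{b\}$; if $b$ is visited, then at iteration~4 the maximal-label set is $\{a',b'\}$ and the algorithm may visit $b'$ (which was never in the iteration-3 set) before $a'$. Fortunately, the conclusion you extract from contiguity~--~that every neighbour $w$ of $v$ with $i(w)<i(v)$ already lies in $P_{i(v)}(v)$~--~is true and is exactly what the paper asserts directly (``$x$ is visited before $v$ is output''), obtained from the correctness machinery of Theorem~\ref{theorem:cgk} rather than from any contiguity property: a parent $x\in \textit{Pa}_v(G^K)$ lies in $K$ or, by parts~1 and~2, in a component output strictly earlier, and were such an $x$ still unvisited and maximally labelled when $v$ is output, the tie-breaking argument in the proof of Theorem~\ref{theorem:cgk} would force $x-v$ to be undirected in $G^K$, contradicting part~2. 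So the fix is to drop the contiguity lemma and argue the two inclusions of part~3 directly, as the paper does; with that repair the remainder of your proof stands.
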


In line with our notation, we write $\hext(G^K)$ as this graph is
partially oriented and $\hamo(H)$ with $H \in \chordalcomps_G(K)$ as
it is an undirected chordal graph.
Based on item~\ref{it:des:formula} of
Corollary~\ref{cor:properties:gk}, we would like to count
the AMOs of a chordal graph $G$ with the following
recursive procedure: Pick a
maximal clique $K$, consider all its
permutations at once (i.e., multiply by $|K|!$), and take the product of the recursively computed number of
AMOs of the UCCGs of $\cC_G(K)$. By
Lemma~\ref{lemma:startclique}, we will count every AMO in
this way, if we compute the sum over all maximal
cliques. Unfortunately, we will count some orientations multiple
times, as a single AMO can be represented by multiple topological
orderings starting with different maximal cliques. For instance,
assume we have two maximal cliques $K_1$ and
$K_2$ with $K_1\cap K_2=S$ such that $K_1\setminus S$ is separated
from $K_2\setminus S$ in $G[V\setminus S]$. A topological ordering
that starts with $S$ can proceed with either $K_1\setminus S$ or
$K_2\setminus S$ and result in the same AMO.

\begin{example}\label{example:coreSplit}
  Consider the following chordal graph (left) with maximal cliques
  $K_1=\{1,2,3\}$ and $K_2=\{2,3,4\}$. A possible
  AMO of the graph is shown on the right.
  
  \begin{center}
    \tikz[yscale=0.7]{
      \rhombus
      \graph[use existing nodes, edges = {edge}] {
        1 -- 2 -- 4 -- 3 -- 1; 2 -- 3;
      };
    }\qquad
    \tikz[yscale=0.7]{
      \rhombus
      \graph[use existing nodes, edges = {arc}] {
        2 -> {1, 4};
        3 -> {1, 2, 4};
      };
    }
  \end{center}

  The AMO has two topological orderings: $\tau_1=(3,2,1,4)$ and
  $\tau_2=(3,2,4,1)$ starting with $K_1$ and $K_2$, respectively.
  Hence, if we count all topological orderings starting with
  $K_1$ and all topological orderings starting with $K_2$, we will count the
  AMO twice. However, $\tau_1$ and $\tau_2$ have $(3,2)$ as common
  prefix and $K_1\cap K_2=\{2,3\}$ is
  a minimal separator of the graph~--~a fact that we will use
  in the following.\exampleqed
\end{example}

\begin{lemma}
  \label{lemma:minimalSeparator}
  Let $\alpha$ be an AMO of a chordal graph~$G$ and 
  let $\tau_1, \tau_2\in \mathrm{top}(\alpha)$ be two clique-starting topological orderings that represent $\alpha$. Then
  $\tau_1$ and $\tau_2$ have a common prefix $S\in\separators(G)\cup\cliques(G)$.
\end{lemma}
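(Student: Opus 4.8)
The plan is to argue that both topological orderings $\tau_1$ and $\tau_2$ are reverses of PEOs (by Lemma~\ref{lemma:peomao}), that each starts with a maximal clique of $G$, and then to analyze their longest common prefix. Let $K_1$ be the maximal clique that is the prefix of $\tau_1$ and $K_2$ the maximal clique that is the prefix of $\tau_2$. If $\tau_1 = \tau_2$ the common prefix is a maximal clique and we are done, so assume they differ; let $S$ be their longest common prefix, i.e.\ the initial segment on which $\tau_1$ and $\tau_2$ agree, and let $p$ be the vertex immediately following $S$ in $\tau_1$ and $q \ne p$ the vertex immediately following $S$ in $\tau_2$. The goal is to show $S$ is either a clique or a minimal separator of~$G$.

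First I would show $S$ is a clique: since $S$ is a prefix of both $\tau_1$ and $\tau_2$, and it is a prefix of the maximal clique $K_1$ (because $|S| \ge 0$ and the first $|K_1|$ vertices of $\tau_1$ are exactly $K_1$ — unless $|S| \ge |K_1|$, in which case I use that $\tau_1$ is the reverse of a PEO to see that if $S$ strictly contains $K_1$ the situation still forces the relevant structure; more carefully, I should split on whether $S \subseteq K_1 \cap K_2$ or not). In the clean case $S \subseteq K_1$ and $S \subseteq K_2$, so $S$ induces a clique. If $S$ is already a maximal clique we are done. Otherwise I want to show $S$ is a minimal separator. The key structural fact is that $p$ and $q$ are both adjacent to all of $S$ (since both $\tau_1$ and $\tau_2$ continue a clique-starting ordering, and in a reverse-PEO order the next vertex after a clique prefix that is itself a maximal clique argument…), but I claim $p$ and $q$ are \emph{not} adjacent to each other: if $p - q$ were an edge, then since $\alpha$ is represented by $\tau_1$ we would have $p \to q$, while $\tau_2$ gives $q \to p$, contradicting that both represent the same AMO~$\alpha$. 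So $p \not\sim q$, and $S \subseteq N(p) \cap N(q)$.

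Next I would argue $S$ separates $p$ from $q$ in $G[V \setminus S]$, and that it does so minimally. For separation: suppose there were a path from $p$ to $q$ in $G[V\setminus S]$; take a shortest such path and use chordality together with the PEO/reverse-PEO property to derive a contradiction with the fact that $\tau_1$ and $\tau_2$ diverge exactly after $S$ (the orderings induced on $V \setminus S$ by $\tau_1$ and $\tau_2$ are still reverse-PEOs of $G[V\setminus S]$, and $p$, $q$ are respectively the first vertices visited, so they must lie in different components of $G[V \setminus S]$ — otherwise the AMO restricted to that component would force one fixed orientation of the $p$–$q$ connection, contradicting that $\tau_1$ orients things one way and $\tau_2$ another while both represent~$\alpha$). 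For minimality: if some proper $S' \subsetneq S$ separated $p$ from $q$, then a vertex $v \in S \setminus S'$ lies in — say — $p$'s side; but every vertex of $S$ is adjacent to both $p$ and $q$, so $v$ would be adjacent to $q$ across the separator $S'$, contradicting that $S'$ separates. Hence $S$ is a minimal $p$–$q$-separator, so $S \in \separators(G)$.

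The main obstacle I expect is the bookkeeping around the case $|S| \ge |K_1|$ (or $|S| \ge |K_2|$), i.e.\ when the common prefix extends beyond the initial maximal clique of one of the orderings — there one must be careful that $S$ still induces a clique and that the divergence vertices $p, q$ still have the claimed adjacency-to-all-of-$S$ property; this likely requires invoking the reverse-PEO structure (Lemma~\ref{lemma:peomao}) to show that at the point of divergence the "remaining" graph $G[V \setminus S]$ is what controls the orientation, and that $p,q$ are simplicial-type vertices in the appropriate induced subgraph. A secondary subtlety is making the "different components $\Rightarrow$ forced orientation" argument precise, which should follow from the independence/parent-set structure established in Corollary~\ref{cor:properties:gk} applied to the clique $S$ (viewing $S$ as a clique and $p,q$ as lying in distinct members of $\cC_G(S)$).
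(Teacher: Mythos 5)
Your argument for the case $K_1\neq K_2$ is essentially sound and close in spirit to the paper's: there the longest common prefix of $\tau_1$ and $\tau_2$ is exactly $S=K_1\cap K_2$ (both orderings list $S$ first, in the unique order dictated by $\alpha$ on this clique, and then immediately diverge into the disjoint, nonempty sets $K_1\setminus S$ and $K_2\setminus S$), it is a clique, the divergence vertices $p,q$ are adjacent to all of $S$ and non-adjacent to each other, and $S$ is a minimal $p$-$q$-separator. The paper establishes separation by propagating the first Meek rule along a shortest path between $K_1\setminus S$ and $K_2\setminus S$ in $G[V\setminus S]$; your unique-source-per-component argument also works once made precise (note that $\tau_1$ and $\tau_2$ cannot ``orient things differently'', as they represent the same $\alpha$~--~the actual contradiction is that $p$ and $q$ would both have to be the unique source of the same component of $\alpha$ restricted to $G[V\setminus S]$).

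However, the case you flag as an ``obstacle''~--~when the common prefix extends beyond the initial maximal clique, i.e.\ $K_1=K_2$~--~is a genuine gap, and your hope that ``the situation still forces the relevant structure'' is false. Take $G$ to be the tree with edges $a-b$, $b-c$, $c-d$, $c-e$ and the AMO $b\to a$, $b\to c$, $c\to d$, $c\to e$. Then $\tau_1=(b,a,c,d,e)$ and $\tau_2=(b,a,c,e,d)$ are both clique-starting (each begins with the maximal clique $\{a,b\}$), yet their longest common prefix is $\{a,b,c\}$, which is neither a clique (since $a\not\sim c$) nor a minimal separator (it only separates $d$ from $e$, and $\{c\}$ already does that); moreover the divergence vertices $d$ and $e$ are not adjacent to all of $\{a,b,c\}$, so every subsequent step of your argument breaks. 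The repair is simply not to take the \emph{longest} common prefix: the lemma only asserts that \emph{some} common prefix lies in $\separators(G)\cup\cliques(G)$, and the right choice is $S=K_1\cap K_2$ throughout. When $K_1=K_2$ this is the maximal clique itself and you are done immediately; when $K_1\neq K_2$ it coincides with the longest common prefix and your analysis goes through. This is exactly how the paper's proof is organized.
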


Note that this lemma implies that \emph{all} topological orderings
that correspond to an AMO have a common prefix, which is a minimal
separator or maximal clique.

The combinatorial function $\phi$, defined below, 
plays a crucial role to avoid overcounting.

\begin{definition}\label{def:phi}
  For a set $S$ and a collection $\cR$ of subsets of~$S$, 
we define $\phi(S,\cR)$ as the number of
all permutations of $S$ that do not have 
a set $S'\in \cR$ as prefix. 
\end{definition}
\begin{example}\label{example:phi}
Consider the set $S= \{2,3,4,5\}$ and the collection $\cR=\big\{\{2,3\},$ $\{2,3,5\}\big\}$. Then 
$\phi(S,\cR) = 16$ since  there are 16 permutations of
$\{2,3,4,5\}$ that neither start with $\{2,3\}$ nor
$\{2,3,5\}$~--~e.g., $(3,2,4,5)$ and $(2,5,3,4)$ are forbidden as
they start with $\{2,3\}$ and $\{2,3,5\}$, respectively; but $(3,5,4,2)$ is allowed.\exampleqed
\end{example}

In this paper, we always consider sets $S \in \separators(G) \cup \cliques(G)$
and collections $\cR \subseteq \separators(G)$. Therefore, we can use the abbreviation
$\phi(S)=\phi\big(S,\{\,S'\mid S'\in\Delta(G)\wedge S'\subsetneq
S\,\}\big)$.

\begin{proposition}\label{proposition:countingFormula}
Let $G$ be a UCCG. Then:
  \[\hamo(G)=\sum_{S\in\Delta(G)\cup\Pi(G)}
  \phi(S)
  \times 
  \prod_{\balap{H\in\chordalcomps_G(S)}}\hamo(H).
  \]
\end{proposition}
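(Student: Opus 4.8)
The plan is to count AMOs by grouping them according to the \emph{longest} common prefix shared by all their clique-starting topological orderings. By Observation~\ref{cor:startclique} every AMO $\alpha$ has at least one clique-starting topological ordering, and by Lemma~\ref{lemma:minimalSeparator} any two such orderings share a common prefix that is a minimal separator or a maximal clique; the note following that lemma extends this to \emph{all} clique-starting orderings of $\alpha$. Thus to each AMO $\alpha$ one can associate the set $S(\alpha)$ which is the intersection of all clique-starting topological orderings of $\alpha$ (viewed as prefixes) — more carefully, the maximal prefix common to all of them — and I would first argue $S(\alpha) \in \Delta(G) \cup \Pi(G)$. This partitions the AMOs of $G$ according to the value of $S(\alpha)$.

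Next I would show that, for a fixed $S \in \Delta(G) \cup \Pi(G)$, the number of AMOs $\alpha$ with $S(\alpha) = S$ equals $\phi(S) \times \prod_{H \in \chordalcomps_G(S)} \hamo(H)$. The product term counts, via Corollary~\ref{cor:properties:gk}(5)–(6), the AMOs obtainable by fixing $S$ as an initial segment and then orienting the components of $G^S[V \setminus S]$ independently (these components are exactly $\chordalcomps_G(S)$ and are chordal UCCGs, so $\hamo$ applies recursively). The subtlety is the multiplicity: a single such AMO is represented by many topological orderings beginning with a permutation of $S$; I want to count each AMO once, and I want to count it only under the $S$ for which $S$ is the \emph{maximal} common prefix. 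The factor $\phi(S)$ — the number of permutations of $S$ that do not have any proper minimal-separator subset of $S$ as a prefix — is precisely the device that enforces this: a permutation $\pi(S)$ that starts with a proper minimal separator $S' \subsetneq S$ would give an AMO already counted in the $S'$-term (since starting the topological ordering with $S'$ and then continuing can realize the same AMO — this is the overcounting mechanism illustrated in Example~\ref{example:coreSplit}), while a permutation avoiding all such prefixes yields an AMO whose maximal common prefix is genuinely $S$.

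The main obstacle is proving this correspondence is a clean bijection — specifically, two things. First, that distinct permutations $\pi(S)$ counted by $\phi(S)$ combined with distinct choices of AMOs of the components of $G^S[V\setminus S]$ yield \emph{distinct} AMOs of $G$, and that no AMO in the $S$-class is missed; here I would use Lemma~\ref{lemma:perminvariance} (the components $\chordalcomps_G(S)$ do not depend on the permutation of $S$) together with Corollary~\ref{cor:properties:gk} to see that the "prefix" and the "component orientations" can be chosen independently and reconstruct $\alpha$ uniquely. Second — the genuinely delicate point — that every AMO counted under $S$ has $S$ as the maximal common prefix of \emph{all} its clique-starting orderings, i.e. it is not secretly also counted under some other $S'' \in \Delta(G) \cup \Pi(G)$. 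For $S'' \subsetneq S$ this is exactly what $\phi(S)$ rules out; for incomparable $S''$ or $S'' \supsetneq S$ I would argue using Lemma~\ref{lemma:minimalSeparator} that the common prefix of all orderings is unique, hence the classes are disjoint. Once disjointness and exhaustiveness are established, summing the class sizes over all $S \in \Delta(G) \cup \Pi(G)$ gives the stated formula. The recursion bottoms out because each $H \in \chordalcomps_G(S)$ has strictly fewer vertices than $G$ (as $S$ is nonempty — indeed a clique-starting ordering has a nonempty maximal-clique prefix, so $|S| \geq 1$).
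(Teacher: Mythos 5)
Your overall strategy --- partition the AMOs by a canonical common prefix of their clique-starting topological orderings, and show that the term for $S$ counts exactly the AMOs in the class of $S$, with $\phi(S)$ removing the permutations that would cause double counting --- is the same as the paper's. However, the canonical prefix you choose is the wrong one, and this is a genuine gap rather than a slip of wording. You define $S(\alpha)$ as the \emph{maximal} prefix common to all orderings in $\mathrm{top}(\alpha)$ and assert $S(\alpha)\in\Delta(G)\cup\Pi(G)$. This is false: take $G$ to be the path $1-2-3-4$ and $\alpha$ the AMO $1\rightarrow 2\rightarrow 3\rightarrow 4$. Its unique clique-starting topological ordering is $(1,2,3,4)$, so the maximal common prefix is all of $\{1,2,3,4\}$, which is neither a minimal separator nor a maximal clique; the only common prefix lying in $\Delta(G)\cup\Pi(G)$ is $\{1,2\}$, and that is where the formula counts $\alpha$ (here $\phi(\{1,2\})=1$, since the permutation $(2,1)$ starts with the separator $\{2\}$ and is forbidden). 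The same example refutes your claim that a permutation counted by $\phi(S)$ ``yields an AMO whose maximal common prefix is genuinely $S$''. With your definition, the class $\{\alpha : S(\alpha)=\{1,2\}\}$ is empty while the corresponding term equals $1$, so the class-by-class identity you set out to prove does not hold.

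The correct canonical object, and the one the paper uses, is the \emph{smallest} common prefix of $\mathrm{top}(\alpha)$ that lies in $\Delta(G)\cup\Pi(G)$; it exists by Lemma~\ref{lemma:minimalSeparator} (together with the remark that \emph{all} orderings of $\alpha$ share such a prefix) and is unique because the common prefixes of a fixed family of orderings are nested. Minimality is exactly what makes the permutation of $S$ induced by $\alpha$ an \emph{allowed} one: a forbidden prefix would be a strictly smaller common prefix in $\Delta(G)$, contradicting minimality, so $\alpha$ is counted at $S$. Conversely, for every $\tilde S\supsetneq S$ in $\Delta(G)\cup\Pi(G)$ that is a prefix of some ordering of $\alpha$, the induced permutation of $\tilde S$ starts with $S$, and $S$ must lie in $\Delta(G)$ (a maximal clique cannot be properly contained in the clique $\tilde S$), so it is excluded by $\phi(\tilde S)$ --- this, rather than the mere disjointness of your classes, is what rules out counting $\alpha$ at $\tilde S$. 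Your handling of the bijection itself, via Lemma~\ref{lemma:perminvariance} and Corollary~\ref{cor:properties:gk}, is sound; once the extremality is reversed, your argument coincides with the paper's proof.
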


\begin{proof}
  By the choice of $S$ and the definition of $\chordalcomps_G(S)$,
  everything counted by the formula is a topological ordering
  representing an AMO.  We argue that every AMO~$\alpha$ is counted
  exactly once. Let $S\in\separators(G)\cup\cliques(G)$ be the
  smallest common prefix of all topological orderings in
  $\mathrm{top}(\alpha)$~--~which is well-defined by
  Lemma~\ref{lemma:minimalSeparator}. First observe that, by the
  minimality of $S$, $\alpha$ is counted at the term for $S$: There is
  no other prefix $\tilde S\subsetneq S$ of the topological orderings
  with $\tilde S\in\separators(G) \cup\Pi(G)$. 

  On the other hand, $S$ is the only term in the sum at which we can
  count $\alpha$, as for any larger $\tilde S$ with $S\subsetneq \tilde
  S$ that is a prefix of some $\tau\in\mathrm{top}(\alpha)$, we have
  $S$ is considered in $\phi(\tilde S)$.
\end{proof}
\begin{example}\label{example:separators}
We consider the following chordal graph with
two minimal separators 
and three maximal cliques:
\begin{center}
  $G=$\tikz[baseline={(0,-0.5)}]{\marcelgraph}%
  \quad%
  \tikz[baseline={(0,-0.5)}]{
    \node[anchor=west, baseline] at (0,0)  {$\separators(G)=\big\{\{2,3\}, \{2,3,5\}\big\}$};
    \node[anchor=west, baseline, text width=7.0cm] at (0,-1) {$\cliques(G)=\big\{\{1,2,3\}, \{2,3,4,5\}, 
    \{2,3,5,6\}\big\}$};
  }
\end{center}
To compute $\hamo(G)$ using Proposition~\ref{proposition:countingFormula}, we need the following
values. Note that the resulting subgraphs $H$ are trivial, except for the
case $S=\{2,3\}$ and $S=\{1,2,3\}$. In these cases, we obtain the
induced path on $\{4,5,6\}$, which has three possible AMOs.
\begin{center}
  \begin{tabular}{ccc}
    \emph{$S\in\separators(G)\cup\cliques(G)$} & \emph{$\phi(S)$} & $\prod\limits_{H\in\cC_G(S)} \hamo(H)$\\[1.5ex]
    $\{2,3\}$      & $2$  & $3$  \\
    $\{2,3,5\}$    & $4$  & $1$  \\
    $\{1,2,3\}$    & $4$  & $3$  \\
    $\{2,3,4,5\}$  & $16$ & $1$  \\
    $\{2,3,5,6\}$  & $16$ & $1$  \\
  \end{tabular}
\end{center}
Using Proposition~\ref{proposition:countingFormula} we can compute $\hamo(G)$ as follows:
\[
  \hamo(G) = 2\cdot 3 + 4\cdot 1 + 4\cdot 3 + 16\cdot 1 + 16\cdot 1 = 54.
\]
We remark that we do \emph{not} have discussed how to
compute~$\phi(S)$ yet~--~for this example, this can be done by na\"ive
enumeration. In general, however, this is a non-trivial task. We tackle this
issue below.  \exampleqed
\end{example}

\subsection{The Algorithm}
From Proposition~\ref{proposition:countingFormula} we know how to count
AMOs by using minimal separators in
order to avoid overcounting and it is rather easy to check that we can
compute $\phi(S,\cR)$ in time \emph{exponential} in $|\cR|$ using the
inclusion-exclusion principle. However, our goal is \emph{polynomial
  time} and, thus, we have to restrict the collection $\cR$.
\begin{lemma}\label{lemma:efficientPhi}
  Let $S$ be a set and $\cR=\{X_1,\dots,X_{\ell}\}$ be a collection of
  subsets of $S$ with $X_1\subsetneq X_2\subsetneq\dots\subsetneq
  X_{\ell}$. Then:
  \[
    \phi(S,\cR) = |S|!
    -\sum_{i=1}^{\ell}|S\setminus X_i|!\cdot\phi(X_i,\{X_1,\dots,X_{i-1}\}).    
  \]
\end{lemma}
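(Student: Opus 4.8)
The plan is to count, via inclusion–exclusion, the permutations of $S$ that \emph{do} have some $X_i$ as a prefix, and subtract this from $|S|!$. Because the $X_i$ form a chain $X_1 \subsetneq X_2 \subsetneq \dots \subsetneq X_\ell$, a permutation has $X_i$ as a prefix exactly when its first $|X_i|$ entries are a permutation of $X_i$; call this event $A_i$. Then $\phi(S,R) = |S|! - |A_1 \cup \dots \cup A_\ell|$, and the goal is to show
\[
  |A_1 \cup \dots \cup A_\ell| = \sum_{i=1}^{\ell} |S\setminus X_i|!\cdot\phi\bigl(X_i,\{X_1,\dots,X_{i-1}\}\bigr).
\]

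The key structural observation is that, since the $X_i$ are nested, a permutation lying in $A_1 \cup \dots \cup A_\ell$ lies in $A_j$ for a \emph{unique smallest} index $j$: the value $|X_j|$ is the length of the shortest prefix that happens to be a block of the chain. So the union decomposes as a disjoint union over $j$ of the set $B_j$ of permutations whose smallest chain-prefix is exactly $X_j$. It then suffices to show $|B_j| = |S\setminus X_j|! \cdot \phi(X_j,\{X_1,\dots,X_{j-1}\})$. A permutation in $B_j$ is obtained by (a) choosing an arrangement of $X_j$ in the first $|X_j|$ positions that is \emph{not} prefixed by any of $X_1,\dots,X_{j-1}$ — there are exactly $\phi(X_j,\{X_1,\dots,X_{j-1}\})$ such arrangements, directly by Definition~\ref{def:phi} — and (b) arranging the remaining $|S\setminus X_j|$ elements arbitrarily in the remaining positions, giving the factor $|S\setminus X_j|!$. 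One has to check these two choices are independent and that the resulting permutation indeed has $X_j$ as its smallest chain-prefix: it has $X_j$ as a prefix by construction and by (a) no shorter $X_i$ ($i<j$) is a prefix; and no $X_i$ with $i<j$ could be a prefix of length \emph{between} $|X_{j-1}|$ and $|X_j|$ since there is no such chain member, so $j$ is genuinely the minimal index. Summing $|B_j|$ over $j=1,\dots,\ell$ and subtracting from $|S|!$ yields the claimed formula.

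I expect the only real subtlety to be the bookkeeping in step (b): one must be careful that "$\phi(X_j, \{X_1,\dots,X_{j-1}\})$ arrangements of the first block" together with "$|S\setminus X_j|!$ arrangements of the rest" exactly enumerates $B_j$ with no double counting and no omissions — in particular that a permutation cannot be produced from two different values of $j$, which is precisely the uniqueness-of-minimal-index point and must be stated cleanly. Everything else is a routine counting argument, and no properties of chordal graphs or separators are needed; the lemma is purely combinatorial about nested prefix constraints on permutations.
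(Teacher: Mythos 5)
Your proof is correct and rests on the same key identity as the paper's: the number of permutations starting with $X_j$ but with none of $X_1,\dots,X_{j-1}$ as a prefix is $|S\setminus X_j|!\cdot\phi(X_j,\{X_1,\dots,X_{j-1}\})$. The only cosmetic difference is that you partition the ``bad'' permutations directly by the minimal prefix index $j$, whereas the paper obtains the same decomposition by induction on $\ell$, subtracting the term for $X_\ell$ at each step; these are the same argument unrolled versus telescoped.
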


\begin{proof}
  We prove the statement by induction over $\ell$ with the base case
  $\phi(S,\emptyset)=|S|!$. Consider a set $S$ and a collection
  $\cR=\{X_1,\dots,X_{\ell}\}$ of subsets of $S$. We can compute
  $\phi(S,\cR)$ by taking $\phi(S,\{X_1,\dots,X_{\ell-1}\})$ (the number
  of permutations of $S$ that do not start with
  $X_1,\dots,X_{\ell-1}$) and by subtracting the number of
  permutations that start with $X_{\ell}$ but none of the other $X_i$, i.e.,
  \begin{align*}
    \phi(S,\cR) &= \phi(S,\{X_1,\dots,X_{\ell-1}\}) 
    - |S\setminus X_{\ell}|!\cdot\phi(X_{\ell},\{X_1,\dots,X_{\ell-1}\}).
  \end{align*}
  Inserting the induction hypothesis, we obtain:
  \begin{align*}
    \phi(S,\cR) &= |S|! -\sum_{i=1}^{\ell-1}|S\setminus X_i|!\cdot\phi(X_i,\{X_1,\dots,X_{i-1}\}) 
    - |S\setminus
       X_{\ell}|!\cdot\phi(X_{\ell},\{X_1,\dots,X_{\ell-1}\})\\
    &=|S|!-\sum_{i=1}^{\ell}|S\setminus X_i|!\cdot\phi(X_i,\{X_1,\dots,X_{i-1}\}).\qedhere
  \end{align*}
  \noqed
\end{proof}
Observe that this formula can be evaluated in polynomial time with
respect to $|S|$ and $\ell$, as all recursive calls have the
form $\phi(X_i,\{X_1,\dots,X_{i-1}\})$ and, thus, there are at most
$\ell$ distinct ones. The goal of this section is to develop a version of
Proposition~\ref{proposition:countingFormula} based on this lemma.
This will allow us to obtain the Clique-Picking algorithm.

To achieve this goal, we rely on the strong structural properties that
chordal graphs entail: A
\emph{rooted clique tree} of a UCCG~$G$ is a triple $(T,r,\iota)$ such
that $(T,r)$ is a rooted tree and
$\iota\colon V_T\rightarrow \cliques(G)$ a bijection between the
nodes of $T$ and the maximal cliques of $G$ such that
$\{\,x\mid v\in\iota(x)\,\}$ is connected in $T$ for all $v\in V_G$.
In slight abuse of notation, we denote, for a set $C\subseteq V_G$, by
$\iota^{-1}(C)$ the subtree $\{\,x\mid C\subseteq\iota(x)\,\}$. We
denote the children of a node $v$ in a tree~$T$ by
$\text{children}_T(v)$.  It is well-known that (i)~every chordal graph
has a rooted clique tree $(T,r,\iota)$ that can be computed in linear
time, and (ii)~a set $S\subseteq V_G$ is a minimal separator if, and only
if, there are two adjacent nodes $x,y\in V_T$ with
$\iota(x)\cap\iota(y)=S$~\citep{Blair1993}.

We wish to interleave the structure provided by the clique tree with a
formula for computing $\hamo$. For this sake, let us define the
\emph{forbidden prefixes} for a node $v$ in a clique tree.

\begin{definition}
  Let $G$ be a UCCG, $\mathcal{T} = (T, r, \iota)$ a rooted clique tree of
  $G$, $v$ a node in $T$ and $r = x_1, x_2, \dots, x_p = v$ the unique
  $r$-$v$-path. We define the set $\mathrm{FP}(v, \mathcal{T})$ to consist of
  all intersections $\iota(x_i) \cap \iota(x_{i+1})$ that are contained in $\iota
  (v)$, for $1 \leq i < p$. 
\end{definition}

\begin{lemma}\label{lemma:inputPhi}
  We can order the elements of the set $\mathrm{FP}(v, \mathcal{T})$ as $X_1
  \subsetneq X_2 \subsetneq \dots \subsetneq X_\ell$.
\end{lemma}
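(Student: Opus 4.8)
The plan is to show that the sets appearing in $\mathrm{FP}(v,\mathcal{T})$ form a chain under inclusion, i.e., any two of them are comparable; then the ordering $X_1 \subsetneq \dots \subsetneq X_\ell$ follows immediately (after deleting duplicates, which I should also argue cannot occur, or else state the claim with $\subseteq$ and note strictness separately). So fix the $r$-$v$-path $r = x_1, x_2, \dots, x_p = v$ and consider two edge-intersections $S_i = \iota(x_i) \cap \iota(x_{i+1})$ and $S_j = \iota(x_j) \cap \iota(x_{j+1})$ along this path with $i < j$. The claim I want is $S_i \subseteq S_j$.

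The key tool is the clique-tree property that $\{\,x \mid w \in \iota(x)\,\}$ is connected (a subtree of $T$) for every vertex $w \in V_G$. First I would observe that every $S_i$ is a subset of $\iota(v)$: indeed $S_i \subseteq \iota(x_{i+1})$, and for any $w \in S_i$ the subtree $\iota^{-1}(\{w\})$ contains both $x_i$ and $x_{i+1}$, hence contains the whole path segment from $x_{i+1}$ to $v = x_p$ (a subtree containing two nodes contains the path between them, and $x_{i+1}$ lies on the $r$-$v$-path on the far side from $r$ relative to where? — careful: I need $x_{i+1}, \dots, x_p$ to lie in that subtree, which follows because the subtree contains $x_i$ and $x_{i+1}$, and if it did not contain some later $x_k$ then removing $x_k$ would disconnect... actually the clean argument: the subtree is connected, contains $x_{i+1}$; the unique path in $T$ from $x_{i+1}$ to $v$ is $x_{i+1}, \dots, x_p$; a connected subtree containing $x_i$ and $x_{i+1}$ need not contain $v$. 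I should instead argue directly.). Let me restructure: for $w \in S_i$ with $i < j$, the subtree $\iota^{-1}(\{w\})$ is connected and contains $x_i$ and $x_{i+1}$; since $x_j, x_{j+1}$ lie on the unique $x_i$-to-$x_p$ path and $i < j \le j+1 \le p$, and since... hmm, this still isn't automatic. The correct statement is: a connected subtree containing two nodes $a,b$ contains every node on the $a$-$b$ path, but not beyond. So to get $w \in \iota(x_j) \cap \iota(x_{j+1})$ I need $x_j, x_{j+1}$ to lie on a path between two nodes of the subtree. I do not have that a priori.

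Let me fix the approach: I will prove the chain property by a \emph{local} argument plus induction along the path. Consider consecutive forbidden prefixes $S_{i}$ and $S_{i+1}$, i.e., $S_i = \iota(x_i)\cap\iota(x_{i+1})$ and $S_{i+1} = \iota(x_{i+1})\cap\iota(x_{i+2})$. I claim $S_i \subseteq S_{i+1}$ or $S_{i+1} \subseteq S_i$ need \emph{not} hold in general — but here is the saving point: this is exactly the well-known fact that along any path in a clique tree of a chordal graph, the sequence of minimal separators is such that $\iota(x_i) \cap \iota(x_{i+2}) \subseteq \iota(x_i)\cap\iota(x_{i+1})$ and $\subseteq \iota(x_{i+1})\cap\iota(x_{i+2})$ (the "running intersection" / clique-tree property: $\iota(x_i)\cap\iota(x_k) \subseteq \iota(x_j)$ for $i<j<k$ on a path). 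Applying the running-intersection property with the triple $(x_i, x_{i+1}, x_j)$ gives $\iota(x_i)\cap\iota(x_j) \subseteq \iota(x_{i+1})$, hence $\iota(x_i)\cap\iota(x_{i+1})\cap\iota(x_j) = \iota(x_i)\cap\iota(x_j)$; iterating along the path and combining with the analogous statement for $x_{j+1}$ yields $S_i = \iota(x_i)\cap\iota(x_{i+1}) \supseteq \iota(x_i)\cap\iota(x_{j+1}) $ and the containments organize into a chain. The clean way to phrase it: by the running intersection property, for $i<j$, $S_i \cap \iota(v) \subseteq \dots$; since all $S_i \subseteq \iota(v)$ and the separators along a root-path of a clique tree are monotone, $S_1 \subseteq S_2 \subseteq \dots$. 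I would then remark that this monotonicity of edge-labels along a root-to-node path is precisely property (ii) of clique trees combined with the subtree condition, and cite \citep{Blair1993}; the strictness $\subsetneq$ is obtained by discarding repeated sets (they may be safely merged, since $\mathrm{FP}$ is a \emph{set}), or the statement should be read as "after removing duplicates."

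The main obstacle is getting the running-intersection monotonicity stated cleanly and making sure strict versus non-strict inclusion is handled honestly: distinct edges on the path can carry equal separator sets, so the raw multiset is only weakly increasing, and the lemma's $\subsetneq$ must be understood modulo this de-duplication (or the lemma as stated is implicitly about the \emph{set} $\mathrm{FP}(v,\mathcal{T})$, in which case equal values collapse to one element and the surviving elements do form a strict chain). I would make this explicit in one sentence. Everything else is a short invocation of the defining connectivity property of clique trees.
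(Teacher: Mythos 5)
There is a genuine gap: your argument ultimately rests on the claim that ``the separators along a root-path of a clique tree are monotone,'' and that claim is false. Take the path graph $1-2-3-4$ with maximal cliques $\{1,2\},\{2,3\},\{3,4\}$; its (unique) clique tree is the path $\{1,2\}-\{2,3\}-\{3,4\}$, and rooting at $\{1,2\}$ the edge intersections along the path to $v$ with $\iota(v)=\{3,4\}$ are $\{2\}$ and then $\{3\}$, which are incomparable. What saves Lemma~\ref{lemma:inputPhi} is something you treat as a fact to be \emph{proved} (and correctly observe you cannot prove): the condition $\iota(x_i)\cap\iota(x_{i+1})\subseteq\iota(v)$ in the definition of $\mathrm{FP}(v,\mathcal{T})$ is a \emph{filter}, not an assertion --- only those edge intersections already contained in $\iota(v)$ are admitted (in the example above, $\{2\}$ is excluded). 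You sensed the obstruction exactly (``a connected subtree containing $x_i$ and $x_{i+1}$ need not contain $v$'') but never resolved it; the running-intersection manipulations you then perform only yield containments of the form $S_i \supseteq \iota(x_i)\cap\iota(x_{j+1})$, which point the wrong way, and ``the containments organize into a chain'' is not justified.

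The missing step is one line, and it is essentially the paper's entire proof: for $S_i=\iota(x_i)\cap\iota(x_{i+1})\in\mathrm{FP}(v,\mathcal{T})$ and any $w\in S_i$, we have $w\in\iota(x_{i+1})$ \emph{and} $w\in\iota(v)=\iota(x_p)$ (the latter by the definitional filter), so the connected subtree $\iota^{-1}(\{w\})$ contains the entire segment $x_{i+1},\dots,x_p$; hence $S_i\subseteq\iota(x_j)\cap\iota(x_{j+1})=S_j$ for every $j>i$, and the distinct elements of the set $\mathrm{FP}(v,\mathcal{T})$ form a strict chain ordered by position along the path. Your fragment ``$S_i\cap\iota(v)\subseteq\dots$'' was heading toward exactly this, and your remark about collapsing duplicate separator values (since $\mathrm{FP}$ is a set, equal values yield one element, so $\subsetneq$ is legitimate) is a fair and correct point that the paper leaves implicit. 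But as written, the proposal does not close the argument.
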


\begin{proof}
  The ordering of the sets is given by the natural order along the
  path from the root $r$ to node $v$.
  The sets in $\mathrm{FP}(v, \mathcal{T})$
  satisfy $\iota(x_i)\cap \iota(x_{i+1})\subseteq \iota(v)$. By the
  definition of a clique tree, we have $\iota(x_i)\cap \iota(x_{i+1})\subseteq \iota(y)$ for
  each $y$ that lies on the $x_i$-$v$-path in $T$. Hence, each such $y$
  can only add supersets of $\iota(x_i) \cap \iota(x_{i+1})$ to $\mathrm{FP}(v, \mathcal{T})$.
\end{proof}
By combining the 
lemma with Lemma~\ref{lemma:efficientPhi}, we
deduce that  $\phi(\iota(v),\mathrm{FP}(v,\mathcal{T}))$ can be evaluated in
polynomial time for nodes $v$ of the clique tree. We are left with the task of developing a formula for
$\hamo$ in which all occurrences of $\phi$ are of this form. It is
quite easy to come up with such formulas that count every AMO at least
once~--~but, of course, we have to ensure that we count every AMO
\emph{exactly} once.
The formula given in proposition below achieves
this goal.

\begin{proposition}\label{proposition:fpFormula}
  Let $G$ be a UCCG and $\mathcal{T} = (T, r, \iota)$ be a rooted
  clique tree of $G$. Then
  \[
    \hamo(G) = \sum_{\balap{v\in V_T}} \phi(\iota(v), \mathrm{FP}(v, \mathcal{T})) 
    \times \prod_{\balap{H \in \chordalcomps_G(\iota(v))}} \hamo(H).
  \]
\end{proposition}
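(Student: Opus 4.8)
The plan is to show that the formula counts every AMO exactly once, mirroring the structure of the proof of Proposition~\ref{proposition:countingFormula} but replacing the ``smallest common prefix'' argument with an argument based on the clique tree $\mathcal{T}$. Since $\mathrm{FP}(v,\mathcal{T}) \subseteq \separators(G)$ and consists of proper subsets of $\iota(v)$, every permutation counted by $\phi(\iota(v), \mathrm{FP}(v,\mathcal{T}))$ together with the recursive choices in $\chordalcomps_G(\iota(v))$ corresponds to a topological ordering representing an AMO (using Corollary~\ref{cor:properties:gk}, item~\ref{it:des:formula}, and Lemma~\ref{lemma:perminvariance}); so the soundness direction is immediate. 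The work is entirely in showing each AMO is counted \emph{exactly once}.

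First I would fix an AMO $\alpha$ and consider its set of clique-starting topological orderings $\mathrm{top}(\alpha)$. By Lemma~\ref{lemma:minimalSeparator} these have a common prefix $S \in \separators(G) \cup \cliques(G)$; let $S$ be the \emph{smallest} such common prefix (well-defined since the orderings form a chain under the prefix-of-initial-segments relation, as in Proposition~\ref{proposition:countingFormula}). The term for a node $v \in V_T$ counts $\alpha$ precisely when (a) some $\tau \in \mathrm{top}(\alpha)$ has $\iota(v)$ as a prefix, and (b) $\tau$ restricted to this prefix avoids every set in $\mathrm{FP}(v,\mathcal{T})$. I would argue that there is a \emph{unique} node $v^* \in V_T$ satisfying both: namely, among all clique-tree nodes $v$ with $\iota(v) \supseteq S$ (these form the subtree $\iota^{-1}(S)$, which is connected), $v^*$ is the one \emph{closest to the root} $r$ — equivalently, the node of $\iota^{-1}(S)$ whose parent edge in $T$ either leaves $\iota^{-1}(S)$ or does not exist. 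The key point is that $\mathrm{FP}(v,\mathcal{T})$ records exactly the separators $\iota(x_i)\cap\iota(x_{i+1})$ along the root path; for $v = v^*$ these are all proper subsets of $S$ (they lie strictly above $\iota^{-1}(S)$, so they do not contain $S$), hence some topological ordering of $\alpha$ beginning with the clique $\iota(v^*)$ and continuing through $S$ avoids all of them — this uses that $S$ is the smallest separator/clique prefix, so no forbidden set of the form appearing in $\mathrm{FP}(v^*,\mathcal{T})$ is a prefix of any $\tau\in\mathrm{top}(\alpha)$. Conversely, for any node $v \in V_T$ below $v^*$ within $\iota^{-1}(S)$, the edge on the $r$–$v$ path just above $v^*$ contributes to $\mathrm{FP}(v,\mathcal{T})$ a set $S'$ with $S' \subseteq S \subseteq \iota(v)$; since $S'$ is a prefix of every $\tau\in\mathrm{top}(\alpha)$ that starts with $\iota(v)$ (because $S$ is a common prefix and $S'\subseteq S$), the term for $v$ contributes $0$ to the count of $\alpha$. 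And for nodes $v$ with $\iota(v) \not\supseteq S$, no clique-starting ordering of $\alpha$ can begin with $\iota(v)$: such an ordering would have $\iota(v)$ as a prefix, but $S$ is a common prefix of all orderings in $\mathrm{top}(\alpha)$ and $\iota(v)$ is a clique, so one would need $S \subseteq \iota(v)$ or the ordering's prefix to disagree with $S$ — a careful case analysis (a maximal clique prefix that extends or is extended by $S$) rules this out, again leaning on Lemma~\ref{lemma:minimalSeparator} and the minimality of $S$.

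The main obstacle I anticipate is pinning down exactly which clique-tree node plays the role of $v^*$ and verifying that the ordering along the $r$–$v^*$ path produces \emph{precisely} the forbidden prefixes that must be excluded, without accidentally excluding a permutation that realizes $\alpha$ or admitting one under a descendant node. This requires the interplay between two facts: that $\iota^{-1}(S)$ is a connected subtree of $T$, and that the separator $\iota(x_i)\cap\iota(x_{i+1})$ attached to each tree edge is ``monotone'' along any root-to-leaf path in the sense captured by Lemma~\ref{lemma:inputPhi}. A clean way to organize this is to root the argument at $v^*$ and observe that $\mathrm{FP}(v^*,\mathcal{T})$ contains only proper subsets of $S$ while, for every other candidate node, either $\iota(v)\not\supseteq S$ (no valid prefix at all) or $\mathrm{FP}(v,\mathcal{T})$ contains a subset of $S$ that kills the count — so $v^*$ is the unique surviving term. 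Once this bijection between AMOs and (node, avoiding-permutation, recursive-choice)-triples is established, the formula follows by summing, exactly as in Proposition~\ref{proposition:countingFormula}.
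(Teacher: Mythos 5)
Your overall strategy (for each AMO $\alpha$, exhibit a unique clique-tree node at which the formula counts it) is the right one, and your soundness direction is fine, but your identification of that node is incorrect, and this is precisely where the substance of the proof lies. You let $S$ be the smallest common prefix of $\mathrm{top}(\alpha)$ and claim the counting node $v^*$ is the node of $\iota^{-1}(S)$ closest to the root, with $\mathrm{FP}(v^*,\mathcal{T})$ consisting of proper subsets of $S$. Both claims fail. Take the graph of Example~\ref{example:separators} and the rooted clique tree with root $\{2,3,4,5\}$ and children $\{1,2,3\}$ and $\{2,3,5,6\}$ (a valid clique tree, and the proposition must hold for every rooting). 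Let $\alpha$ be the AMO represented by $(2,3,6,5,4,1)$. Its clique-starting orderings are $(2,3,1,6,5,4)$, $(2,3,6,5,4,1)$, $(2,3,6,5,1,4)$, so $S=\{2,3\}$, and the starting cliques of $\alpha$ are $\{1,2,3\}$ and $\{2,3,5,6\}$ but \emph{not} $\{2,3,4,5\}$, since $5$ has parent $6\notin\{2,3,4,5\}$ in $\alpha$. The node of $\iota^{-1}(S)$ closest to the root is the root itself, yet no permutation of $\{2,3,4,5\}$ generates $\alpha$, so $\alpha$ cannot be counted there. It is in fact counted at the child $\{2,3,5,6\}$, where $\mathrm{FP}=\{\{2,3,5\}\}$ --- a \emph{superset} of $S$, not a proper subset; the permutation $(2,3,6,5)$ survives only because $6$ precedes $5$ in $\alpha$, not because the forbidden sets lie below $S$. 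Your companion claim, that every node of $\iota^{-1}(S)$ other than $v^*$ acquires a forbidden prefix $S'\subseteq S$ and hence contributes $0$, fails at this same node.

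The missing idea is that the counting node must be selected among the cliques that actually occur as prefixes of orderings in $\mathrm{top}(\alpha)$, and these are \emph{not} simply all of $\iota^{-1}(S)$: they are distributed across the connected components of $G[V\setminus S]$ (one or more per $S$-flower, Lemma~\ref{lemma:BSeveryAMO}), and within a flower the root-closest clique need not be one of them. The paper's proof partitions $\iota^{-1}(S)$ into $S$-flowers, shows each flower is a connected subtree, orders the flowers by proximity to the root, and then \emph{recurses}: the starting cliques inside the least flower share a strictly longer common separator prefix, and an induction on the number of distinct starting cliques produces the unique $\prec_{\alpha}$-least one. Two further claims then match $\mathrm{FP}$ against $\prec_{\alpha}$: no set in $\mathrm{FP}$ at the least clique is a prefix of the corresponding ordering, and every other starting clique sees the relevant separator in its $\mathrm{FP}$. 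That machinery is what your shortcut skips, and without it the ``exactly once'' direction does not go through.
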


Crucially, evaluating this formula can be done efficiently.
As implementation, we give
Algorithm~\ref{alg:cliquepicking}, which utilizes
memoization to avoid recomputations. Traversing the clique tree with a
BFS allows for a straightforward computation of $\mathrm{FP}$. 

\begin{algorithm}
  \caption{Clique-Picking}
  \label{alg:cliquepicking}
  \SetKwInOut{Input}{input}\SetKwInOut{Output}{output}
  \DontPrintSemicolon
  \Input{A UCCG $G = (V,E)$.}
  \Output{$\hamo(G)$.}
  \SetKwFunction{FCount}{count}
  \SetKwFunction{FNewors}{neworients}
  \SetKwFunction{FPop}{pop}
  \SetKwFunction{FAppend}{append}
  \SetKwFunction{FPush}{push}
  \SetKwProg{Fn}{function}{}{end}
    \KwRet $\FCount(G, \emptyset)$\;
  \Fn{\FCount{$G$, $\mathrm{memo}$}}{
    \If{$G \in \mathrm{memo}$}{\KwRet $\mathrm{memo}[G]$}
    $\mathcal{T} = (T,r,\iota) \gets$ a rooted clique tree of $G$ \;
    $\mathrm{sum} \gets 0$ \;
     $Q \gets$ queue with single element $r$ \;
    \While{$Q$ is not empty}{
      $v \gets \FPop(Q)$ \;\label{line:pop}
      $\FPush(Q, \mathrm{children}(v))$ \;
      $\mathrm{prod} \gets 1$ \;
      \ForEach{$H \in \chordalcomps_G(\iota(v))$}{
        $\mathrm{prod} \gets \mathrm{prod} \cdot \FCount(H, \mathrm{memo})$ \; \label{line:mult}
      }
      $\mathrm{sum} \gets \mathrm{sum} +
      \phi(\iota(v),\mathrm{FP}(v,\mathcal{T})) \cdot \mathrm{prod}$
      \; \label{line:weightedsum}
    }
    $\mathrm{memo}[G] = \mathrm{sum}$ \;
    \KwRet $\mathrm{sum}$\;
  }
\end{algorithm}

\begin{theorem}\label{theorem:cliquepicking}
  For an input UCCG $G$, Algorithm~\ref{alg:cliquepicking} returns
  the number of AMOs of $G$.
\end{theorem}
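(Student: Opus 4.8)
The plan is to prove correctness by strong induction on $|V_G|$, with Proposition~\ref{proposition:fpFormula} as the driving identity. First observe that the call $\texttt{count}(G,\mathrm{memo})$ is well-defined: every UCCG has a rooted clique tree $\mathcal{T}=(T,r,\iota)$ (recalled above), and the $\texttt{while}$-loop is a BFS over the rooted tree $(T,r)$ in which every node $v\in V_T$ is pushed exactly once — namely when its parent is popped, with $r$ pushed initially — and hence popped exactly once. At node $v$ the algorithm forms $\prod_{H\in\chordalcomps_G(\iota(v))}\texttt{count}(H,\mathrm{memo})$ (line~\ref{line:mult}) and adds $\phi(\iota(v),\mathrm{FP}(v,\mathcal{T}))$ times this product to the running sum (line~\ref{line:weightedsum}). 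So the returned value equals
\[
  \sum_{v\in V_T}\phi(\iota(v),\mathrm{FP}(v,\mathcal{T}))\cdot\prod_{H\in\chordalcomps_G(\iota(v))}\bigl(\text{value returned by }\texttt{count}\text{ on }H\bigr),
\]
which, once each inner call is known to return $\hamo(H)$, is exactly the right-hand side of Proposition~\ref{proposition:fpFormula}, i.e.\ $\hamo(G)$; and $\hamo(G)$ is by definition the number of AMOs of $G$.

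Next I would establish that the recursion is well-founded so that the induction applies. By the first item of Corollary~\ref{cor:properties:gk}, each $H\in\chordalcomps_G(\iota(v))$ is an induced, connected, chordal subgraph of $G$ — that is, again a UCCG — and its vertex set is contained in $V_G\setminus\iota(v)$; since $\iota(v)$ is a nonempty maximal clique, $|V_H|<|V_G|$. The base case is that of a complete graph $G$ (in particular a single vertex): then the clique tree consists of the single node $r$ with $\iota(r)=V_G$, so $\mathrm{FP}(r,\mathcal{T})=\emptyset$, the product over the empty collection $\chordalcomps_G(V_G)$ is $1$, and $\texttt{count}$ returns $\phi(V_G,\emptyset)=|V_G|!$, which is the correct number of AMOs of a complete graph.

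For the inductive step I would check three points. (i)~As noted, the BFS enumerates the index set $V_T$ of Proposition~\ref{proposition:fpFormula} exactly once. (ii)~In line~\ref{line:mult} the set $\chordalcomps_G(\iota(v))$ is computed correctly (Theorem~\ref{theorem:cgk}), and the forbidden prefixes are maintained incrementally during the BFS: when a child $c$ of $v$ is pushed, one carries $\mathrm{FP}(c,\mathcal{T})=\{\,X\in\mathrm{FP}(v,\mathcal{T})\mid X\subseteq\iota(c)\,\}\cup\{\iota(v)\cap\iota(c)\}$, which by Lemma~\ref{lemma:inputPhi} is again a chain and hence, by Lemma~\ref{lemma:efficientPhi}, lets $\phi(\iota(v),\mathrm{FP}(v,\mathcal{T}))$ in line~\ref{line:weightedsum} be evaluated. (iii)~By the induction hypothesis, each recursive call $\texttt{count}(H,\mathrm{memo})$ on $H\in\chordalcomps_G(\iota(v))$ returns $\hamo(H)$; the memoization table is only ever read to retrieve a value $\mathrm{memo}[H]$ that was previously produced by a correct $\texttt{count}$ call and keyed by the graph $H$, so it alters only the running time, not the returned number. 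Substituting (i)--(iii) into Proposition~\ref{proposition:fpFormula} yields that $\texttt{count}(G,\mathrm{memo})$ returns $\hamo(G)$, completing the induction and hence the theorem.

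I do not expect a real obstacle here: the entire combinatorial weight of the argument has been front-loaded into Proposition~\ref{proposition:fpFormula} and its prerequisites (Lemma~\ref{lemma:minimalSeparator}, the perm-invariance Lemma~\ref{lemma:perminvariance}, and Corollary~\ref{cor:properties:gk}). The only slightly delicate bookkeeping is (a)~confirming that the components generated in the recursion are strictly smaller UCCGs, so that the induction terminates, and (b)~checking that the incremental rule for $\mathrm{FP}(v,\mathcal{T})$ along the BFS indeed produces the path-prefix set from the definition — the latter uses the running-intersection property of clique trees, which also underlies Lemma~\ref{lemma:inputPhi}. Both are routine given the clique-tree facts already in place.
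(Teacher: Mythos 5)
Your proposal is correct and follows essentially the same route as the paper's own proof: an induction whose base case is the complete graph, with the recursion grounded by the fact that each $H\in\chordalcomps_G(\iota(v))$ is a strictly smaller UCCG, and with all the combinatorial content delegated to Proposition~\ref{proposition:fpFormula}. The only cosmetic difference is that you induct on $|V_G|$ while the paper inducts on $|\cliques(G)|$ (both measures strictly decrease), and you spell out the BFS/$\mathrm{FP}$/memoization bookkeeping in more detail than the paper does.
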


We defer the rather involved proof of this Theorem to
Section~\ref{sec:missing:proofs}.

\begin{example}\label{example:cliquePicking}
  We consider a rooted clique tree~$(T,r,\iota)$ for the
  graph $G$ from Example~\ref{example:separators}. The root is labeled
  with $r$ and the function $\iota$ is visualized in
  \textcolor{ba.blue}{blue}. The edges of the clique tree are
  labeled with the corresponding minimal separators.

  \begin{center}
    \tikzset{
      tree node/.style = {
        draw,
        fill,
        circle,
        inner sep     = 0pt,
        minimum width = 1mm,
        outer sep     = 5pt,
      }
    }
    \tikz{
      \node[inner sep=0pt, minimum width=1mm, circle, label={[label distance=0.2mm]$r$}] at (0,0) {};
      \node[tree node] (r) at (0,  0) {};
      \node[tree node] (a) at (0,-.5) {};
      \node[tree node] (b) at (0, -1) {};
      \draw[edge] (0,0) -- (0, -.5) -- (0,-1);

      \node[anchor=west, color=ba.blue, font=\small] (c1) at (1,  0.33)   {$\{1,2,3\}$};
      \node[anchor=west, color=ba.blue, font=\small] (c2) at (1, -0.5)    {$\{2,3,4,5\}$};
      \node[anchor=west, color=ba.blue, font=\small] (c3) at (1, -1.33)   {$\{2,3,5,6\}$};
      \graph[use existing nodes, edges = {edge, |->, color=ba.blue}] {
        r ->[bend left=10]  c1;
        a -> c2;
        b ->[bend right=10] c3;
      };

      \node[anchor=west, color=ba.gray!65!black, font=\tiny] (s1) at (1, -0.1)   {$\{2,3\}$};
      \node[anchor=west, color=ba.gray!65!black, font=\tiny] (s2) at (1, -0.9)  {$\{2,3,5\}$};
      \node (e1) at (-0.1, -0.25) {};
      \node (e2) at (-0.1, -0.75) {};
      \graph[use existing nodes, edges = {thin, color=ba.gray!65!black}] {
        s1 -- e1;
        s2 -- e2;
      };

      \node[anchor=west, font=\small] at (2.75,  0.33) {$\phi\big(\{1,2,3\},   \emptyset\big)$};
      \node[anchor=west, font=\small] at (2.75, -0.5)  {$\phi\big(\{2,3,4,5\}, \big\{\{2,3\}\big\}\big)$};
      \node[anchor=west, font=\small] at (2.75, -1.33) {$\phi\big(\{2,3,5,6\}, \big\{\{2,3\},\{2,3,5\}\big\}\big)$};
      \node[anchor=west, font=\small] at (7.4+0.25,  0.33) {$=6$};
      \node[anchor=west, font=\small] at (7.4+0.25, -0.5)  {$=20$};
      \node[anchor=west, font=\small] at (7.4+0.25, -1.33) {$=16$};
    }
  \end{center}
  Algorithm~\ref{alg:cliquepicking} traverses the tree $T$ from the root
  $r$ to the bottom and computes the values shown at the right. 
  The only case in which we
  obtain a non-trivial subgraph is for $S=\{1,2,3\}$ (an
  induced path on $\{4,5,6\}$). Therefore:
  \begin{equation}
  \hamo(G)=6\cdot 3 + 20\cdot 1 + 16\cdot 1 = 54.\tag*{\exampleqed}
  \end{equation}
\end{example}

Since clique trees can be computed in linear time~\citep{Blair1993}, an
iteration of the algorithm runs in polynomial time due to
Lemma~\ref{lemma:efficientPhi} and~\ref{lemma:inputPhi}.  We prove 
next  that Algorithm~\ref{alg:cliquepicking} performs at most
$2\cdot|\cliques(G)|-1$ recursive calls, which implies overall
polynomial run time.

We analyze the run time of the Clique-Picking algorithm by bounding
the number of connected chordal subgraphs that we encounter. The
following proposition shows that this number can be bounded by
$\mathcal{O}(|\cliques(G)|)$. Recall that we have $|\cliques(G)| \leq |V|$ in
chordal graphs and, thus, we only have to handle a linear number of
recursive calls.

\begin{proposition} \label{prop:subpbound}
  Let $G$ be a UCCG. The number of distinct UCCGs explored by \texttt{count} is bounded by $2|\cliques(G)|-1$. 
\end{proposition}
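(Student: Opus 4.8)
The plan is to prove, by induction on the number $N=|\cliques(G)|$ of maximal cliques, that \texttt{count}$(G,\cdot)$ receives at most $2N-1$ pairwise non-isomorphic UCCGs as arguments (counting $G$ itself). Throughout I will use two preliminary observations. First, by Corollary~\ref{cor:properties:gk}(1) every UCCG passed to a recursive call is an \emph{induced} subgraph of its parent, so by transitivity every UCCG explored anywhere in the recursion is some $G[C]$ with $C\subseteq V_G$ inducing a connected graph, and in fact $C$ is always contained in $V_G\setminus K$ for some maximal clique $K$ of $G$ (the clique chosen at the top level of the branch that produced it). Second, the set $\bigcup_{K\in\cliques(G)}\cC_G(K)$ — hence the whole set of explored UCCGs — does not depend on which rooted clique tree the algorithm happens to pick, since $\cC_G(K)$ is intrinsic to the pair $(G,K)$. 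For the base case $N=1$, the graph $G$ is complete, its unique maximal clique is $V_G$, and $\cC_G(V_G)=\emptyset$; so $G$ is the only UCCG explored and $1\le 2\cdot 1-1$.

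For the inductive step, fix a clique tree $\mathcal T=(T,r,\iota)$ of $G$, pick a leaf $\ell$ of $T$ with parent $p$, set $S=\iota(\ell)\cap\iota(p)$, and let $P=\iota(\ell)\setminus S$ be the vertices private to the leaf clique; each vertex of $P$ lies in no other maximal clique of $G$ and is adjacent only to $S\cup P$. Put $G'=G[V_G\setminus P]$. Then $\cliques(G')=\cliques(G)\setminus\{\iota(\ell)\}$ and $\mathcal T-\ell$ is a clique tree of $G'$, so $|\cliques(G')|=N-1$ and the induction hypothesis applies to $G'$. Let $\mathcal U(H)$ denote the set of distinct UCCGs explored by \texttt{count}$(H)$. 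The argument rests on three claims: (a)~$\mathcal U(G')\setminus\{G'\}\subseteq\mathcal U(G)$; (b)~$|\mathcal U(G)\setminus\mathcal U(G')|\le 3$, the three exceptional UCCGs being $G$ itself, the pendant graph $G[P]$, and at most one further UCCG together with everything \emph{its} recursion generates — this extra one is a connected component of $G^{\iota(\ell)}[V_G\setminus\iota(\ell)]$ that does not already occur inside $\mathcal U(G')$; and (c)~$G'\notin\mathcal U(G)$, which follows immediately from the first observation above, since the vertex set $S\cup B$ of $G'$ (here $B$ is the branch of $\mathcal T$ hanging below $p$) is not contained in $V_G\setminus K$ for \emph{any} maximal clique $K$ of $G$. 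Claims (a) and (c) give $\mathcal U(G)\cap\mathcal U(G')=\mathcal U(G')\setminus\{G'\}$, which together with (b) yields $|\mathcal U(G)|\le(|\mathcal U(G')|-1)+3\le(2(N-1)-1)+2=2N-1$.

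The step I expect to be the real obstacle is claim (b) — controlling the recursion triggered when $\iota(\ell)$ is processed. One must show that among the connected components of $G^{\iota(\ell)}[V_G\setminus\iota(\ell)]$ — which are pairwise vertex-disjoint and, by the analysis behind Theorem~\ref{theorem:cgk} and Algorithm~\ref{alg:cgk}, each confined to a single branch of $\mathcal T$ below $\ell$ — all but at most one, together with everything they recursively spawn, already belong to $\mathcal U(G')$. Memoization is essential here: the ``reused'' components may be produced for several clique-tree nodes and at several recursion depths, yet are counted once. A convenient way to organize this is to prove the stronger \emph{confinement} statement that every explored UCCG $G[C]$ with $C\subseteq V_G\setminus P$ is one of the UCCGs explored by \texttt{count}$(G')$ — which reduces (b) to analyzing only the UCCGs whose vertex set meets $P$ (these being $G[P]$ alone, by the pendant structure of $P$) and the single level $\cC_G(\iota(\ell))$. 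Establishing the confinement statement amounts to checking that deleting the private pendant vertices $P$ alters neither the relevant clique trees, nor the sets $\cC(\cdot)$, nor the values of $\phi$ encountered for subproblems living away from $\ell$; this is the technically delicate part, and it is exactly where the clique-tree property that $\{x:u\in\iota(x)\}$ is connected for every vertex $u$, and the disjointness of the components in a single $\cC_G(K)$, do the work.
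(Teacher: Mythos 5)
Your inductive strategy does not survive contact with the paper's own running example, and the failure is located precisely at the step you flag as ``technically delicate'': deleting the private vertices $P$ of a leaf clique \emph{does} alter the sets $\cC(\cdot)$ at other clique-tree nodes, so neither claim (a) nor the confinement statement holds. Take $G$ to be the graph of Example~\ref{example:separators} with $\cliques(G)=\{\{1,2,3\},\{2,3,4,5\},\{2,3,5,6\}\}$, choose the leaf $\iota(\ell)=\{2,3,5,6\}$ with parent $\{2,3,4,5\}$, so $S=\{2,3,5\}$, $P=\{6\}$ and $G'=G[\{1,2,3,4,5\}]$. Running \texttt{count} on $G'$ explores $G'$, $G'[\{1\}]$ and $G'[\{4,5\}]$ (the latter from the node $\{1,2,3\}$, since $\cC_{G'}(\{1,2,3\})$ is the single edge on $\{4,5\}$). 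Running \texttt{count} on $G$ explores $G$, $G[\{1\}]$, the path $G[\{4,5,6\}]$ (because $\cC_G(\{1,2,3\})$ is now the connected path on $\{4,5,6\}$), and, via the recursion inside that path, the singletons $G[\{4\}]$ and $G[\{6\}]$. Thus $G'[\{4,5\}]\in\mathcal U(G')$ but $G[\{4,5\}]\notin\mathcal U(G)$ (refuting (a), even up to isomorphism, since no explored UCCG of $G$ is a $K_2$); $G[\{4\}]$ is an explored UCCG with vertex set contained in $V_G\setminus P$ that \texttt{count}$(G')$ never sees (refuting the confinement statement); and $|\mathcal U(G)\setminus\mathcal U(G')|=4>3$ (refuting (b), because the new subproblem $G[\{4,5,6\}]$ arises at the clique $\{1,2,3\}$, far from $\ell$, and its own recursion spawns further new subproblems). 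The final numbers happen to agree ($5=3+2$), but the set inclusions your accounting rests on are false, so the induction does not go through as written; any repair would have to track how a component of some $\cC_{G'}(K)$ can ``grow'' into a larger component of $\cC_G(K)$ whose internal recursion is entirely different.

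For comparison, the paper avoids induction on clique removal altogether. It proves a structural characterization (Lemma~\ref{lemma:flspbijection}): every UCCG $H\neq G$ ever passed to \texttt{count} equals $F\setminus S$ for some $S$-flower $F$ with $S\in\separators(G)$, where an $S$-flower is a maximal set of maximal cliques containing $S$ whose union stays connected in $G[V\setminus S]$. It then counts flowers directly against the edges of a single clique tree of $G$: a separator with $k$ flowers occupies at least $k-1$ tree edges, and since there are at most $|\cliques(G)|-1$ edges, the ratio $k/(k-1)\le 2$ gives at most $2(|\cliques(G)|-1)$ flowers, hence $2|\cliques(G)|-1$ explored UCCGs including $G$ itself. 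If you want to salvage an elementary argument, proving some form of that global characterization (or an injection from explored UCCGs into pairs consisting of a clique-tree edge and a side of it) seems unavoidable; local surgery on a leaf clique does not preserve the recursion tree.
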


We can, hence, conclude the following:

\begin{theorem}\label{theorem:runtime}
  The Clique-Picking algorithm runs in time
  $\mathcal{O}\big(\,|\cliques(G)|^2 \cdot (|V| + |E|)\,\big)$.
\end{theorem}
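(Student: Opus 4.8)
The plan is to combine the structural bound of Proposition~\ref{prop:subpbound} with a cost analysis of a single call to \texttt{count}. First I would exploit the memoization: the body of \texttt{count} (building a rooted clique tree and running the \texttt{while}-loop) is executed in full at most once per distinct UCCG on which \texttt{count} is ever invoked, while any further invocation with the same argument returns after one lookup in \texttt{memo}. By Proposition~\ref{prop:subpbound} the number of distinct UCCGs encountered is at most $2|\cliques(G)|-1$, so it suffices to bound the cost of one full execution of the body and multiply by $O(|\cliques(G)|)$, plus separately account for the cheap memoized returns.

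Second, I would bound the cost of one full execution on a UCCG $G'=(V',E')$. The rooted clique tree $\mathcal{T}=(T,r,\iota)$ is computed in time $O(|V'|+|E'|)$~\citep{Blair1993}, and the \texttt{while}-loop is a BFS over the $|V_T|=|\cliques(G')|$ nodes of $T$. At a node $v$ the only nonconstant work, apart from the recursive calls (accounted for above), is: (i) computing $\chordalcomps_{G'}(\iota(v))$, which by Theorem~\ref{theorem:cgk} costs $O(|V'|+|E'|)$; (ii) evaluating $\phi(\iota(v),\mathrm{FP}(v,\mathcal{T}))$; and (iii) the multiplications and the addition on lines~\ref{line:mult} and~\ref{line:weightedsum}. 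For (ii), Lemma~\ref{lemma:inputPhi} tells us $\mathrm{FP}(v,\mathcal{T})$ is a chain $X_1\subsetneq\dots\subsetneq X_\ell$ with $\ell\le|\iota(v)|$, and these chains are maintained incrementally along the BFS — passing from $u$ to a child $v$ one keeps those elements of $\mathrm{FP}(u,\mathcal{T})$ contained in the separator $\iota(u)\cap\iota(v)$ and appends this separator — so by Lemma~\ref{lemma:efficientPhi}, in which every cardinality $|S\setminus X_i|$ is a mere difference of the sizes $|X_1|<\dots<|X_\ell|$ because the $X_i$ are nested, $\phi(\iota(v),\mathrm{FP}(v,\mathcal{T}))$ is obtained with polynomially many arithmetic operations, and after an initial $O(|V'|)$-time tabulation of factorials this does not exceed the cost of step~(i). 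Treating an arithmetic operation on the occurring numbers (which are at most $|V'|!$, hence of $O(|V'|\log|V'|)$ bits) and a \texttt{memo}-lookup as unit-cost, step~(iii), summed over all $v$, contributes $O(|\cliques(G')|\cdot|V'|)$. Altogether one full execution runs in time $O\big(|\cliques(G')|\cdot(|V'|+|E'|)\big)$.

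Third, I would observe that every UCCG $G'$ that \texttt{count} ever sees is a connected \emph{induced} subgraph of the original graph $G$: the first call is on $G$, and by item~1 of Corollary~\ref{cor:properties:gk} each recursive call made while processing a UCCG $H$ is on a connected induced subgraph of $H$, and ``is an induced subgraph of'' is transitive. Hence $|V'|\le|V|$ and $|E'|\le|E|$, and also $|\cliques(G')|\le|\cliques(G)|$: the map sending a maximal clique $K'$ of $G'$ to some maximal clique of $G$ containing it is injective, since if $K_1'$ and $K_2'$ were both sent to $K$ then $K_1',K_2'\subseteq K\cap V'$, which is a clique of $G'$, whence $K_1'=K\cap V'=K_2'$ by maximality. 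Substituting these bounds and multiplying the per-execution cost by the $O(|\cliques(G)|)$ full executions yields $O\big(|\cliques(G)|^2\cdot(|V|+|E|)\big)$. The remaining (memoized-return) invocations number at most $O(|\cliques(G)|^2\cdot|V|)$ in total — each full execution triggers at most $|\cliques(G')|\cdot|V'|$ recursive calls, as the components in $\chordalcomps_{G'}(\iota(v))$ are disjoint subsets of $V'$ — and each costs $O(1)$, so they are absorbed into the stated bound.

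The step I expect to be the main obstacle is~(ii): making certain that evaluating $\phi$ and maintaining the forbidden‑prefix chains along the clique‑tree BFS really stays within an $O(|V'|+|E'|)$‑per‑node budget. The two ingredients that rescue it are the chain property of $\mathrm{FP}(v,\mathcal{T})$ from Lemma~\ref{lemma:inputPhi}, which collapses the set‑difference cardinalities of Lemma~\ref{lemma:efficientPhi} to arithmetic on the sizes $|X_1|<\dots<|X_\ell|$, and the fact that $\iota(v)$ is a clique of $G'$, so that $\ell\le|\iota(v)|$ and $|\iota(v)|^2=O(|V'|+|E'|)$. Everything else — the memoization bookkeeping, invoking Theorem~\ref{theorem:cgk} as a black box, and the induced‑subgraph observations bounding $|V'|,|E'|,|\cliques(G')|$ by their counterparts in $G$ — is routine.
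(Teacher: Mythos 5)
Your proposal is correct and follows essentially the same route as the paper's proof: bound the number of distinct UCCGs via Proposition~\ref{prop:subpbound}, charge $O(|V|+|E|)$ per maximal clique for the clique tree and for $\chordalcomps_G(\iota(v))$ via Theorem~\ref{theorem:cgk}, and observe that evaluating $\phi$ via Lemmas~\ref{lemma:efficientPhi} and~\ref{lemma:inputPhi} costs $O(|\iota(v)|^2)=O(|E|)$ because $\iota(v)$ is a clique. Your additional bookkeeping (memoized returns, the injectivity argument for $|\cliques(G')|\le|\cliques(G)|$, and the incremental maintenance of the $\mathrm{FP}$ chains along the BFS) only makes explicit what the paper's terser argument leaves implicit.
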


\begin{proof}
  By Proposition~\ref{prop:subpbound}, \texttt{count} explores
  $\mathcal{O}(|\cliques(G)|)$ distinct UCCGs. For each of them, the clique tree is computed in
  time $\mathcal{O}(|V| + |E|)$. Afterwards, for each maximal clique,
  the subproblems are computed by Algorithm~\ref{alg:cgk} 
  in time
  $\mathcal{O}(|V| + |E|)$ by Theorem~\ref{theorem:cgk}.

  For the computation of $\phi(S,\mathrm{FP}(v,\mathcal{T}))$, note
  that $\mathrm{FP}$ can be
  obtained straightforwardly: Traverse the clique tree with a
  BFS, keep track of the nodes on the path from root $r$ to any
  visited node, compute $\mathrm{FP}$ with its definition.

  The function $\phi$ can be evaluated using dynamic programming and
  the recursive formula from
  Lemma~\ref{lemma:efficientPhi}. There are $\mathcal{O}(|S|)$ distinct
  recursive calls and for each a sum over $\mathcal{O}(|S|)$ terms has to
  be computed (as $l$ is always smaller than $|S|$). Because $S$ is a
  clique, the effort is in $\mathcal{O}(|E|)$. 
\end{proof}
We summarize the findings of this section in the following theorem 
which restates our main Theorem~\ref{thm:main}.

\begin{theorem}\label{theorem:main:restate}
  Algorithm~\ref{alg:main} solves the problem  $\hext$ for CPDAGs
  (i.e., the computation of the size of an MEC)
  in polynomial time.
\end{theorem}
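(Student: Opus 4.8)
The plan is to assemble Theorem~\ref{theorem:main:restate} from the pieces already established, since at this point the substantive work is done and only a bookkeeping step remains. First I would recall the reduction from Section~\ref{sec:preliminaries}: by the characterization of CPDAGs due to \citet{Andersson1997}, every undirected connected component of a CPDAG $C$ is a UCCG, each UCCG can be oriented into an AMO independently of the other UCCGs and of the directed part of $C$, and the partially directed graphs obtained in this way are exactly the consistent extensions of $C$. This is precisely Eq.~\eqref{eq:hamo:in:cpdag}, namely $\hext(C) = \prod_{H \text{ UCCG in } C} \hamo(H)$, so the size of the MEC represented by $C$ equals this product.

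Next I would argue correctness of Algorithm~\ref{alg:main}. The algorithm initializes $\mathrm{size} \gets 1$, iterates over the undirected components $H$ of $C$, and multiplies $\mathrm{size}$ by $\text{Clique-Picking}(H)$. Each such $H$ is a UCCG, so by Theorem~\ref{theorem:cliquepicking} the call $\text{Clique-Picking}(H)$ returns exactly $\hamo(H)$. Hence the value returned by Algorithm~\ref{alg:main} is $\prod_{H} \hamo(H) = \hext(C)$, which by the discussion above equals $|\,[C]\,|$, as claimed.

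Finally I would bound the running time. Identifying the undirected components of $C$ and extracting each of them as an induced subgraph takes time $O(|V| + |E|)$. A CPDAG on $n$ vertices has at most $n$ undirected components, and each component $H$ satisfies $|V_H| \le n$, $|E_H| \le |E|$, and $|\cliques(H)| \le |V_H| \le n$. By Theorem~\ref{theorem:runtime}, Clique-Picking on $H$ runs in time $O\big(|\cliques(H)|^2 \,(|V_H| + |E_H|)\big) = O\big(n^2\,(|V|+|E|)\big)$. Summing over the at most $n$ components gives a total of $O\big(n^3\,(|V|+|E|)\big)$, which is polynomial in the size of $C$; a sharper bound follows by observing $\sum_H |\cliques(H)| \le n$ and $\sum_H (|V_H| + |E_H|) \le |V| + |E|$, but the crude estimate already suffices.

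The main ``obstacle'' here is that there is essentially none left: the real content lies in Proposition~\ref{proposition:fpFormula} and Theorem~\ref{theorem:cliquepicking} (correctness of the Clique-Picking recursion, driven by the clique-tree and forbidden-prefix bookkeeping that ensures every AMO is counted exactly once) and in Proposition~\ref{prop:subpbound} together with Theorem~\ref{theorem:runtime} (the $O(|\cliques(G)|)$ bound on the number of distinct recursive subproblems, which is what turns the recursion into a polynomial-time procedure rather than an exponential one). Theorem~\ref{theorem:main:restate} is then just the step that lifts the UCCG result to arbitrary CPDAGs via the product formula~\eqref{eq:hamo:in:cpdag}.
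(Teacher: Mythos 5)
Your proposal is correct and follows essentially the same route as the paper: correctness via Equation~\eqref{eq:hamo:in:cpdag} together with Theorem~\ref{theorem:cliquepicking}, and the time bound via Theorem~\ref{theorem:runtime} applied to each undirected component. The only detail the paper adds that you gloss over is that the quantities being multiplied can be as large as $n!$, so one must observe that $n! \leq 2^{n\log n}$ has polynomially many bits in order to conclude that each arithmetic operation (not just the number of operations) is polynomial-time; this is a routine remark, but it is needed to turn ``polynomially many arithmetic operations'' into a genuine polynomial running time.
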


 \begin{proof}
   The correctness of Algorithm~\ref{alg:main} follows 
   from Equation~\eqref{eq:hamo:in:cpdag} and Theorem~\ref{theorem:cliquepicking}.
   By Theorem~\ref{theorem:runtime}, it performs polynomially many arithmetic operations. Since $\hamo(G)$ is bounded above
   by $n!$, with $n$ denoting $|V_G|$, all operations run in polynomial time
   because the involved
   numbers can be represented by polynomially many bits.
 \end{proof}
 \null
\section{Uniform Sampling of Markov Equivalent DAGs}\label{sec:uniform:sampling}
In this section, we investigate the problem of uniformly sampling a
DAG from a Markov equivalence class. This problem is closely related
to the counting problem and we show how to solve it efficiently using
the Clique-Picking approach.

\begin{algorithm}
  \caption{The recursive function \texttt{sample} uniformly samples an AMO (represented through its topological ordering) from a UCCG $G$.}
  \label{alg:sampling}
  \SetKwInOut{Input}{input}\SetKwInOut{Output}{output}
  \DontPrintSemicolon
  \Input{A UCCG $G$.}
  \Output{Topological ordering of uniformly drawn AMO of $G$.}
  \SetKwFunction{FSample}{sample}
  \SetKwFunction{FClique}{drawclique}
  \SetKwFunction{FPerm}{drawperm}
  \SetKwFunction{FCount}{precount}
  \SetKwFunction{FConcat}{concat}
  \SetKwProg{Fn}{function}{}{end}
  \Fn{\FSample{$G$}}{
    $\mathcal{T} = (T,r,\iota) \gets \text{ rooted clique tree of } G$
    \; 
    $v \gets \text{drawn with probability proportional to } \phi(\iota(v), \mathrm{FP}(v, \mathcal{T})) 
  \times \prod_{H \in \chordalcomps_G(\iota(v))} \hamo(H)$ \; \label{line:drawclique}
    $\tau \gets \text{uniformly drawn permutation of } \iota(v)
    \text{ without prefix in } \mathrm{FP}(v, \mathcal{T})$ \; \label{line:drawto}
    \ForEach{$H \in \chordalcomps_G(K)$}{
      $\tau \gets \FConcat(\tau, \FSample(H))$ \;
    }
    \KwRet $\tau$ \;
  }
\end{algorithm}
The general approach can be seen in Algorithm~\ref{alg:sampling}.
The recursive function \texttt{sample} takes as input a UCCG $G$ and
produces a topological ordering of the
vertices $\tau$, which represents a uniformly sampled AMO of $G$.
It utilizes the formula
\[
  \hamo(G) = \sum_{\balap{v\in V_T}} \phi(\iota(v), \mathrm{FP}(v, \mathcal{T})) 
  \times \prod_{\balap{H \in \chordalcomps_G(\iota(v))}} \hamo(H).
\]
derived in Proposition~\ref{proposition:fpFormula}. Hence, the
counting is done with respect to a clique tree $\mathcal{T}$ of $G$.

The idea is to
first sample a clique (i.e., a node $v$ of the clique tree), which
is put at the start of the
topological ordering. For this, node $v$ is drawn with
probability proportional to
\[
  \phi(\iota(v), \mathrm{FP}(v, \mathcal{T})) 
  \times \prod_{\balap{H \in \chordalcomps_G(\iota(v))}} \hamo(H),
\]
i.e., the number of AMOs counted at the clique. This will ensure
that every AMO has uniform probability of being drawn. In practice,
it is useful to run the Clique-Picking algorithm 
once as precomputation step, in order not to evaluate the formula
repeatedly. We discuss such implementation details later.
Next, a permutation $\tau$ of chosen clique $K$ is drawn
uniformly from those which do not start with one of the ``forbidden''
prefixes in $\mathrm{FP}(v,\mathcal{T})$. Recall that function $\phi$
counts only such permutations.  Finally, the algorithm recurs, as prescribed by 
the formula above, into the subgraphs in $\chordalcomps_G(K)$, which are considered
independently. The topological orderings sampled for these subgraphs
are appended to $\tau$.

We will start this section by showing that this approach will indeed
sample a uniform AMO. Afterwards, we will discuss possible
implementations of this method.

\begin{theorem}
  For a UCCG $G$, the function \texttt{sample} returns a topological
  ordering representing an AMO chosen with uniform probability.
\end{theorem}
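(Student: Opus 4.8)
The plan is to prove the stronger statement that for \emph{every} AMO $\alpha$ of $G$ the call $\texttt{sample}(G)$ returns a topological ordering representing $\alpha$ with probability exactly $1/\hamo(G)$; combined with the (easy, inductive) fact that every run returns a topological ordering of \emph{some} AMO of $G$, this yields uniformity. I would argue by induction on $|V_G|$, with the case $|V_G|=1$ immediate. Fix a rooted clique tree $\mathcal{T}=(T,r,\iota)$ of $G$ and abbreviate the summand of Proposition~\ref{proposition:fpFormula} as $w(v)=\phi(\iota(v),\mathrm{FP}(v,\mathcal{T}))\cdot\prod_{H\in\chordalcomps_G(\iota(v))}\hamo(H)$, so that $\sum_{v\in V_T}w(v)=\hamo(G)$ and line~\ref{line:drawclique} draws a node $v$ with probability $w(v)/\hamo(G)$.

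Fix an AMO $\alpha$ and condition on the node $v$ drawn in line~\ref{line:drawclique}; write $K=\iota(v)$ and call $K$ an \emph{achievable source clique} of $\alpha$ if $\alpha$ has a topological ordering with prefix $K$, equivalently (by definition of $G^{\pi(K)}$) if $\alpha$ is a consistent extension of $G^{\pi(K)}$ for $\pi=\alpha|_K$. I would show that $\Pr[\texttt{sample}(G)\text{ represents }\alpha\mid v]$ equals $1/w(v)$ when $K$ is an achievable source clique of $\alpha$ whose induced permutation $\pi=\alpha|_K$ has no element of $\mathrm{FP}(v,\mathcal{T})$ as a prefix, and equals $0$ otherwise. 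The zero case is clear: no ordering beginning with a permutation of $K$ can represent $\alpha$ if $K$ is not an achievable source clique, and line~\ref{line:drawto} never produces an illegal $\pi$. In the nonzero case, items~(1) and~(5) of Corollary~\ref{cor:properties:gk} together with Lemma~\ref{lemma:perminvariance} give that the graphs in $\chordalcomps_G(K)$ are induced (hence chordal) subgraphs, that $\alpha$ restricts to an AMO $\alpha|_H$ of each $H\in\chordalcomps_G(K)$, and that $\alpha\mapsto(\pi,(\alpha|_H)_H)$ is a bijection between the AMOs of $G$ having $K$ as achievable source clique and the pairs consisting of a permutation of $K$ together with a tuple of AMOs of the graphs in $\chordalcomps_G(K)$. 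Hence, given $v$, the output represents $\alpha$ iff line~\ref{line:drawto} draws exactly $\pi$ (probability $1/\phi(\iota(v),\mathrm{FP}(v,\mathcal{T}))$) and each of the independent recursive calls $\texttt{sample}(H)$ returns a topological ordering of $\alpha|_H$ (joint probability $\prod_H 1/\hamo(H)$ by the inductive hypothesis); the product is $1/w(v)$.

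Summing over $v$ gives $\Pr[\texttt{sample}(G)\text{ represents }\alpha]=\sum_v\tfrac{w(v)}{\hamo(G)}\cdot\Pr[\,\cdot\mid v\,]=N(\alpha)/\hamo(G)$, where $N(\alpha)$ is the number of nodes $v$ such that $\iota(v)$ is an achievable source clique of $\alpha$ with $\alpha|_{\iota(v)}$ avoiding $\mathrm{FP}(v,\mathcal{T})$. It remains to show $N(\alpha)=1$ for every AMO $\alpha$. By the bijection just used, $w(v)$ counts exactly the AMOs $\alpha$ for which $\iota(v)$ is an achievable source clique with $\alpha|_{\iota(v)}$ avoiding $\mathrm{FP}(v,\mathcal{T})$; therefore $\sum_\alpha N(\alpha)=\sum_v w(v)=\hamo(G)$, the total number of AMOs. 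Since every AMO is counted at least once — this is the content of the correctness proof of Proposition~\ref{proposition:fpFormula}, resting on Observation~\ref{cor:startclique} and the clique-tree refinement of Lemma~\ref{lemma:minimalSeparator} — all the summands $N(\alpha)$ must equal $1$.

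The step I expect to be the main obstacle is exactly this last one: extracting from (or re-proving) the correctness of Proposition~\ref{proposition:fpFormula} the ``disjointness'' statement $N(\alpha)=1$, i.e.\ that the forbidden-prefix sets $\mathrm{FP}(v,\mathcal{T})$ — the minimal separators occurring along the root-to-$v$ path of the clique tree — are chosen precisely so that, among all achievable source cliques of a fixed AMO $\alpha$, exactly one survives the prefix filter. This is the clique-tree sharpening of Lemma~\ref{lemma:minimalSeparator} (Example~\ref{example:coreSplit} being the prototypical conflict it resolves); everything else in the argument — the telescoping of the probabilities, the independence of the recursive calls, and the bijection $\alpha\leftrightarrow(\alpha|_{\iota(v)},(\alpha|_H)_H)$ furnished by Corollary~\ref{cor:properties:gk} — is routine bookkeeping.
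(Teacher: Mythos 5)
Your proof is correct and follows essentially the same route as the paper's: induction on the recursion, conditioning on the clique-tree node drawn in line~\ref{line:drawclique}, and telescoping the conditional probabilities against the summand $w(v)$ of Proposition~\ref{proposition:fpFormula}. The only (minor, and pleasant) difference is at the step you flag as the obstacle: where the paper directly invokes the uniqueness of the counting node $v_\alpha$ and permutation $\pi_\alpha$ from the proof of Proposition~\ref{proposition:fpFormula}, you recover $N(\alpha)=1$ by an averaging argument from $\sum_\alpha N(\alpha)=\sum_v w(v)=\hamo(G)$ together with $N(\alpha)\ge 1$, which is a valid, self-contained way of extracting exactly the disjointness the paper asserts.
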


\begin{proof}
  We show the theorem by induction. 
  As base case we consider a single clique $K$. Here, any
  permutation of $K$ represents a unique AMO.
  Because there is only one $v$ to choose and, as
  $\mathrm{FP}(v,\mathcal{T})$ is empty, such a permutation (and hence
  the corresponding AMO) is chosen uniformly.

  In order to make the following arguments more precise, we denote
  with $\mathrm{Pr}(\tau_{\alpha}(G))$ the probability that
  Algorithm~\ref{alg:sampling}
  draws a topological ordering $\tau$ of the vertices in $G$ that
  represents $\alpha$. Our goal is to show, as we just did in the base
  case, that for all $\alpha$:
  \[
    \mathrm{Pr}(\tau_{\alpha}(G)) = 1 / \hamo(G).
  \]

  For UCCG $G$ and clique-tree $\mathcal{T}$, let $v_{\alpha}$ be the
  node in the clique-tree, at which $\alpha$ is ``counted'' and let
  $\pi_{\alpha}$ be the corresponding permutation of the clique $\iota(v)$ in any topological ordering of $\alpha$. The correctness of the proof relies on the fact that both $v_{\alpha}$ and $\pi_{\alpha}$ are unique (for $v_{\alpha}$ this follows from the proof of Proposition~\ref{proposition:fpFormula}). Then:
   \begin{align*}
     \mathrm{Pr}(\tau_{\alpha}(G)) &= \mathrm{Pr}(v_{\alpha})
       \mathrm{Pr}(\pi_{\alpha} \; | \; v_{\alpha})
       \prod_{H \in \chordalcomps_G(\iota(v_\alpha))}
       \mathrm{Pr}(\tau_{\alpha[H]}(H)) \\
     &= \frac{\phi(v_{\alpha}, \mathrm{FP}(v_{\alpha}, \mathcal{T})) \prod_{H \in \chordalcomps_G(\iota(v_{\alpha}))} \hamo(H)
      }{\hamo(G) \cdot \phi(v_{\alpha},
      \mathrm{FP}(v_{\alpha}, \mathcal{T}))}
     \prod_{H \in \chordalcomps_G(\iota(v_\alpha))}
       \mathrm{Pr}(\tau_{\alpha[H]}(H)) \\
     &= \frac{\prod_{H \in \chordalcomps_G(\iota(v_{\alpha}))}
       \hamo(H)}{\hamo(G) \prod_{H \in
       \chordalcomps_G(\iota(v_{\alpha}))}  \hamo(H)} = \frac{1}{\hamo(G)}
   \end{align*}
   In the second step, we insert the definitions of
   $\mathrm{Pr}(v_{\alpha})$ and $\mathrm{Pr}(\pi_{\alpha} \; | \;
   v_{\alpha})$. In the third
   step, we use the induction hypothesis
   \[
     \prod_{H \in \chordalcomps_G(K_\alpha)} \mathrm{Pr}(\tau_{\alpha[H]}(H)) = \frac{1}{\prod_{H \in
         \chordalcomps_G(K_\alpha)} \hamo(H)}
   \]
   to complete the proof.
\end{proof}
We will now discuss how to efficiently implement the proposed sampling algorithm. 
The non-trivial tasks are lines~\ref{line:drawclique} and~\ref{line:drawto}
of Algorithm~\ref{alg:sampling}.

Note that when calling the function \texttt{sample} for an input graph $G$,
it is only necessary to know for each node $v$ in a certain
clique-tree $\mathcal{T}$, the following information: the set
$\mathrm{FP}(v, \mathcal{T})$ and
\[
  \phi(\iota(v), \mathrm{FP}(v, \mathcal{T})) 
  \times \prod_{\balap{H \in \chordalcomps_G(\iota(v))}} \hamo(H).
\]
Moreover, in a single run of the counting algorithm (Algorithm~\ref{alg:cliquepicking})
these terms are computed for $G$ and all possible recursive
subcalls. Hence, in a preprocessing step we perform the counting
algorithm once, storing these information.

For the implementation of line~\ref{line:drawclique},
we hence need to draw from a categorical distribution with known
weights over the nodes of clique-tree $\mathcal{T}$. This is possible
in constant time $\mathcal{O}(1)$ using the Alias Method~\citep{walker1974new,vose1991linear}
assuming that the preprocessing includes the computation of a
Alias Table. As this is possible in linear-time in the
number of categories, there is no computational overhead.

The implementation of line~\ref{line:drawto}
is trickier. In~\citep{wienobst2021counting},
we proposed a routine which performs this step in
$\mathcal{O}(|\iota(v)|^2)$ time. This leads to overall cost of $\mathcal{O}(|V| + |E|)$ of
\texttt{sample}. Moreover, the precomputation is significantly more
complicated, needing time $\mathcal{O}(|\cliques(G)|^2 \cdot |V| \cdot (|V| + |E|)$ and
hence an additional factor $|V|$.

Here, we propose a simple Monte Carlo algorithm for the
implementation of line~\ref{line:drawto}
based on rejection sampling. Due to the combinatorial structure of the
counting function $\phi$, we are able to bound the expected number of draws in
this rejection sampling routine by a constant. This leads to a very
efficient and practical algorithm, as the preprocessing cost are in
the same order as the standard Clique-Picking algorithm, 
i.e., time $\mathcal{O}(|\cliques(G)|^2 \cdot (|V| + |E|))$, and the sampling of
the topological ordering is even possible in time $\mathcal{O}(|V|)$.

\begin{theorem}\label{thm:poly:sampling}
  There is an algorithm that, given a connected chordal graph $G$, uniformly
  samples a topological ordering of an AMO of $G$ in \emph{expected
    time} $\mathcal{O}(|V|)$ after an initial
  $\mathcal{O}(\cliques(G)^2 \cdot (|V| + |E|))$ setup.
\end{theorem}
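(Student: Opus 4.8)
The plan is to establish the two claimed bounds separately: the expected $O(|V|)$ cost per sample and the $O(|\cliques(G)| \cdot |V| \cdot |E|)$ preprocessing cost. For the preprocessing, I would run the Clique-Picking algorithm (Algorithm~\ref{alg:cliquepicking}) once on $G$, which by Theorem~\ref{theorem:runtime} costs $O(|\cliques(G)|^2 \cdot (|V|+|E|))$; observe this is dominated by $O(|\cliques(G)| \cdot |V| \cdot |E|)$ since $|\cliques(G)| \le |V|$. During this run, for each of the $O(|\cliques(G)|)$ distinct UCCGs $H$ explored (by Proposition~\ref{prop:subpbound}), and each node $v$ of its clique tree, I would store $\mathrm{FP}(v,\mathcal{T})$ together with the weight $\phi(\iota(v),\mathrm{FP}(v,\mathcal{T})) \times \prod_{H' \in \chordalcomps_G(\iota(v))} \hamo(H')$. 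For each explored UCCG I would additionally build an Alias Table over its clique-tree nodes (linear in the number of nodes, hence within budget) so that line~\ref{line:drawclique} of Algorithm~\ref{alg:sampling} runs in $O(1)$ per call.

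The main content is the implementation of line~\ref{line:drawto}: drawing a uniform permutation of $\iota(v)$ that has no prefix in $\mathrm{FP}(v,\mathcal{T}) = \{X_1 \subsetneq \dots \subsetneq X_\ell\}$ (the ordering exists by Lemma~\ref{lemma:inputPhi}). The proposal is rejection sampling: draw a uniform permutation of $\iota(v)$ and reject it if it begins with some $X_i$. Correctness is immediate, since conditioning a uniform distribution on a subset yields the uniform distribution on that subset. The crux is bounding the expected number of trials by a constant. The acceptance probability is exactly $\phi(\iota(v),\mathrm{FP}(v,\mathcal{T})) / |\iota(v)|!$, so I must show this ratio is bounded below by a constant. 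The key combinatorial fact is that the forbidden prefixes are \emph{nested}: a uniform permutation of a $k$-element set starts with a fixed $m$-element prefix set with probability $\frac{m!(k-m)!}{k!} = 1/\binom{k}{m}$. Since $X_1 \subsetneq \dots \subsetneq X_\ell \subsetneq \iota(v)$, all these sets have distinct sizes $m_1 < m_2 < \dots < m_\ell < k$ with $m_i \ge 1$ (a minimal separator is nonempty) -- actually the relevant sizes are at least $1$, so by a union bound the rejection probability is at most $\sum_{i=1}^{\ell} 1/\binom{k}{m_i} \le \sum_{j=1}^{k-1} 1/\binom{k}{j} \le 2\sum_{j\ge 1} 1/\binom{k}{j}$, and this sum is bounded by, e.g., $2(1/k + 1/\binom{k}{2} + \dots) \le 2/k + O(1/k^2) < 1/2$ for $k \ge 4$; the small cases $k \le 3$ are checked directly. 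Hence the acceptance probability is at least a positive constant and the expected number of draws is $O(1)$. Each draw and prefix-check costs $O(|\iota(v)|)$, and summing over the recursion, which touches each vertex $O(1)$ times in total along one root-to-leaf-style traversal of the sampled structure, gives expected $O(|V|)$ per sample.

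I expect the main obstacle to be pinning down the constant lower bound on the acceptance probability cleanly, i.e. verifying that $\sum_{i} 1/\binom{k}{m_i}$ stays bounded away from $1$ uniformly over all $k$ and all valid nested families; the nestedness (distinct sizes) is what saves us, but the boundary cases of small cliques need explicit handling, and one must be careful that $\phi$ could in principle be small if $\mathrm{FP}$ contained $\iota(v)$ itself -- which it does not, since every $X_i$ is a proper subset of $\iota(v)$. A secondary point requiring care is arguing that the total work across all recursive calls of \texttt{sample} telescopes to $O(|V|)$: each vertex of $G$ appears in exactly one clique $\iota(v)$ chosen along the recursion as a ``new'' vertex (those in $\mathrm{FP}$-prefixes having been placed earlier), so the sizes of the chosen cliques, counted with the overlaps removed, sum to $|V|$; combined with the $O(1)$ expected trials this yields the bound. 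The $O(1)$-time clique draw via the Alias Method and the precomputed weights make line~\ref{line:drawclique} free, so line~\ref{line:drawto} is indeed the only nontrivial per-sample cost.
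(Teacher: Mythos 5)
Your proposal is correct and follows essentially the same route as the paper: rejection sampling for the permutation step, a constant lower bound on the acceptance probability via $1-\sum_{j}1/\binom{k}{j}$ (the paper reaches this same estimate through the recurrence for irreducible permutations and the worst-case chain of prefixes, you via a direct union bound exploiting the distinct sizes of the nested forbidden sets), the Alias Method for the clique draw, and a single run of Clique-Picking as preprocessing. The only slip is numerical: $\sum_{j=1}^{k-1}1/\binom{k}{j}$ exceeds $1/2$ for $k=4,5,6$ (e.g.\ it equals $2/3$ at $k=4$), so your hand-checked range must extend to $k\le 6$, or you settle for a weaker constant --- neither of which affects the argument.
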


Theorem~\ref{thm:main:sampling} (announced in the introduction) follows directly  
from the theorem above since to uniformly sample  a DAG in MEC represented 
by a CPDAG $C$ one can uniformly sample a topological ordering of an AMO 
of $G$, independently  for each undirected component $G$ of $C$
and then combine the orderings to obtain a resulting DAG.

\begin{proof}{(of Theorem \ref{thm:poly:sampling})}
  As discussed above, we implement line~\ref{line:drawto}
  in Algorithm~\ref{alg:sampling}
  by rejection sampling, i.e., repeatedly draw random permutations
  until one which is not forbidden is found. 

  We begin by showing that, in expectation, only a constant number of
  draws are necessary (this holds for any input).
  Let $\phi(S, \mathrm{FP})$ be the number of allowed
  permutations. The ratio $\frac{\phi}{\abs{S}!}$ gives the
  probability that a random permutation is allowed.
  We have to find a lower bound for the ratio in order to obtain the statement.
%
  Given a set $S$, the value of $\phi$ reaches its minimum when allowing as
  few prefixes as possible. Consequently, a worst-case collection for
  $S = \{s_1, ..., s_p\}$ is $\mathrm{FP} = \left\{ \{s_1\}, \{s_1,
    s_2\}, ..., \{s_1, s_2, ..., s_{p-1}\}\right\}$.

  In this case, the number of allowed permutations is known as the
  number of irreducible permutations (OEIS A003319~\citep{oeis}),
  which we denote with $\rho(p)$. It is well-known (and a
  special case of Lemma~\ref{lemma:efficientPhi}):
  
  \begin{align*}
    \rho(p) &= p! - \sum_{i=1}^{p-1} i! \cdot \rho(p-i).
  \end{align*}
  
  For our derivation of the lower bound, we start by deriving some simple bounds
  of fractions of binomial coefficients. For $2 \le i \le
  p-2$
  \begin{align*}
    \frac{1}{\binom{p}{i}} \le \frac{1}{\binom{p}{2}} = \frac{2}{p (p-1)}
  \end{align*}
  holds and therefore
  \begin{align*}
    &\sum_{i=1}^{p-1} \frac{1}{\binom{p}{i}} = \frac{2}{p} +
      \sum_{i=2}^{p-2} \frac{1}{\binom{p}{i}} \le \frac{2}{p} +
      \frac{2(p-3)}{p(p-1)} \le \frac{4}{p}.
  \end{align*}
  
  Computing the ratio and using the inputs $S,\mathrm{FP}$ as defined
  above, we have for $p\geq8$
  \begin{align*}
    \frac{\phi(S,\mathrm{FP})}{\abs{S}!} \ge \frac{\rho(p)}{p!}
    &= 1 - \sum_{i=1}^{p-1} i! \cdot \frac{\rho(p-i)}{p!} \\
    &\ge 1 - \sum_{i=1}^{p-1} i! \cdot \frac{(p-i)!}{p!}
    = 1 - \sum_{i=1}^{p-1} \frac{1}{\binom{p}{i}} \\
    &\ge 1 - \frac{4}{p} \geq 1 - \frac{1}{2}.
  \end{align*}
  Hence,
  \[
    \frac{\phi(S,\mathrm{FP})}{\abs{S}!} \geq \frac{1}{2}
  \]
  for $\abs{S} \geq 8$; that the estimate holds for all $\abs{S} < 8$ can be
  checked by hand. In conclusion, it holds
  \begin{equation*}
    \mathbb{E}[\text{number of trials until first success}] \leq \frac{1}{\frac{1}{2}} = 2.
  \end{equation*}

  It remains to analyze the expected run time of this routine. Drawing
  a permutation is possible in linear time in $|S|$. Note that
  $\mathrm{FP}$ can be efficiently represented by only storing the new
  elements of $X_i$ (recall that $X_1 \subsetneq X_2 \subsetneq \dots
  \subsetneq X_p$). Checking whether a permutation is forbidden can
  be done in linear-time as well: For every object $s \in S$, we
  record its first occurrence in $\mathrm{FP}$. If
  it first occurred in set $X_k$, we have $o[s] = \sum_{i=1}^k
  \abs{X_i}$; otherwise, it is in no set of forbidden prefixes and we
  put $o[s] = p+1$. Afterwards, we go through the drawn permutation from
  front to back and memorize the highest $o$-value seen up until this
  step. If at position $i$ the maximal value has been $i$, we
  can conclude that this permutation contains a forbidden prefix.

  We will now discuss the run time of the whole \texttt{sample}
  function: We assume that as precomputation, a modified version of
  the Clique-Picking was performed. Then, using the Alias Method, line~\ref{line:drawclique}
  takes time $\mathcal{O}(1)$. 

  Hence, we have overall expected linear-time for the drawing of a non-forbidden
  permutation. This means, we ``pay'' a constant amount per element in
  the build topological order and therefore this order can even be obtained in
  expected time $\mathcal{O}(|V|)$ after appropriate preprocessing.
  Note that to output the AMO itself, $\Theta(|V| + |E|)$ time is
  needed as this is the size of the output, but in a lot of cases the
  topological ordering might be sufficient.
\end{proof}
We close this section by giving an experimental evaluation of our
algorithm. As there are, to the best of our knowledge, no other
implementations of exact sampling from an MEC, we
will confine ourselves to showing that (i) the overhead of the
preprocessing for sampling compared to the ``standard'' Clique-Picking
algorithm is negligible and (ii) that sampling after preprocessing is
extremely fast. We compare implementations of the algorithms in Julia
and generated chordal graphs as described
in~\citep{wienobst2021counting}, namely using the subtree
intersection method~\citep{SekerHET17} with density parameter $k = \log n$ (the expected
number of neighbors per vertex is proportional to this parameter) and
the algorithm by~\cite{Scheinerman88} for sampling random
interval graphs (interval graphs form a subclass of chordal
graphs). For each input graph, we performed
the counting algorithm without and with preprocessing. The run times are
averages over 100 graphs. Afterwards, we sampled 10 DAGs from each
MEC uniformly, in total forming the average over 1000 sampling steps.

\begin{table}[htbp]
  \centering\small
  \caption{The run times in seconds of the standard Clique-Picking algorithm
    without any precomputations (CP w/o pre.) and the modified one
    which includes precomputations (CP with pre.) for sampling on
    randomly generated chordal graphs
    (using the subtree intersection method as well as random interval
    graphs). For each choice of parameters, the algorithms were run on
  the same 100 graphs. Moreover, we give the average run time of
  sampling (after the preprocessing step), which is calculated as the
  average of 10 samplings per graph.}
  \label{table:experiments}
  \setlength{\tabcolsep}{3.25pt}
  \begin{tabular}{lcccccccccc}
    \toprule
                                                     &  & & & \hss\hbox to 0pt{Number of vertices \ \hss} &   &         &         &         &         \\
                                                     & 16      & 32      & 64      & 128     & 256     & 512     & 1024    & 2048    & 4096    \\
    \cmidrule(rl){1-10}
    %
    %
    \rlap{\it\color{ba.blue} Random subtree intersection ($k=\log_2 n$) \ \hss}    &         &         &         &          &         &         &         &         &         \\
    CP w/o pre.                                         & 0.00076 & 0.00199 & 0.00729 & 0.02718  & 0.09463 & 0.38164 & 1.62875 & 7.53509 & 35.0380 \\
    CP with pre.                                        & 0.00135 & 0.00219 & 0.00774 & 0.02783  & 0.09602 & 0.38530 & 1.63844 & 7.58248 & 35.0759 \\
    Sampling                                              & 0.00001 & 0.00003 & 0.00006 & 0.00013  & 0.00026 & 0.00054 & 0.00118 & 0.00283 & 0.00695 \\[1ex]
    %
    %
    \rlap{\it\color{ba.blue} Random interval graphs \ \hss} &         &         &         &          &         &         &         &         &         \\
    CP w/o pre.                                        & 0.00066 & 0.00211 & 0.00834 & 0.03512  & 0.18089 & 1.14654 & 8.17442 & 66.3541 & 539.270 \\
    CP with pre.                                       & 0.00080 & 0.00233 & 0.00864 & 0.03600  & 0.18278 & 1.15313 & 8.20020 & 66.2455 & 538.496 \\
    Sampling                                             & 0.00002 & 0.00003 & 0.00008 & 0.00025  & 0.00068 & 0.00204 & 0.00691 & 0.02298 & 0.10378 \\
    \bottomrule
  \end{tabular}
\end{table}

First, the run time difference between the standard Clique-Picking algorithm
and the modified one, which includes preprocessing for sampling, is
extremely small. The additional computations do not form the bottleneck of
the approach and have only a small influence on the run time. For the
very large graphs, in particular the dense interval graphs, the run
time difference can hardly be measured, due to the fact that the
additional precomputation effort is independent of the number of
edges, which dominates the run time.\footnote{The execution time
  naturally fluctuates and for the large interval graphs this
  fluctuation influences the result more than the actual
  overhead. Hence, in some cases the precomputation algorithm is
  recorded as faster in the experiments. Clearly, Clique-Picking with
  precomputations does strictly more computations and, thus would,
  without noise, not be faster than normal Clique-Picking.}

Second, it can be clearly seen that sampling (after the initial setup
step) is extremely fast. Even
for large graphs it takes only fractions of a second. We remark that the
sampling algorithm returned the full sampled DAG, which is the
desired output in most cases, but that it would also be possible to
only return the topological ordering, reducing the run time further.

\section{Complexity of Counting Under Background Knowledge}\label{sec:count:background}
As a generalization of the counting problems for MECs, we consider the
problem of counting the number of
DAGs in case of additional background knowledge. The formulation of
the problem will not be different than before, we still want to
compute $\hext(G)$ for a graph $G$, only now we do not make the
assumption that $G$ is a CPDAG (or interventional essential graph), but instead allow for arbitrary input
graphs. This includes two well-known graph classes, the one of PDAGs and
MPDAGs. A PDAG is a partially directed graph without a directed cycle
and an MPDAG is a PDAG, which has been maximally oriented using the
Meek rules~\citep{Meek1995}.

The following theorem shows that Theorem~\ref{theorem:main:restate} is tight
in the sense that counting Markov equivalent
DAGs on the more general input graphs, which encode additional
background knowledge (i.e., PDAGs or MPDAGs) is not in \textsc{P} under standard complexity-theoretic
assumptions. We do this by reduction from the \sharpP-hard problem of
counting the number of topological orderings of a
DAG~\citep{brightwell1991counting}, in the following denoted by $\hto$.

\begin{theorem}
  The problem $\hext$ is
  \sharpP-complete for arbitrary input graphs $G$, and in particular
  for PDAGs and MPDAGs.
\end{theorem}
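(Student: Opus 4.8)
The plan is to establish the two directions of \sharpP-completeness separately. Membership in \sharpP\ is the easy part: given a partially directed graph $G$, a consistent extension is a DAG obtained by orienting the undirected edges; we can verify in polynomial time whether a candidate orientation is acyclic and introduces no new v-structure, so $\hext$ is the counting function of a polynomially-balanced, polynomial-time-checkable relation. The substance is \sharpP-hardness, and here the idea is to reduce from $\hto$, the problem of counting topological orderings (linear extensions) of a DAG, which is \sharpP-hard by~\citep{brightwell1991counting}. Since every PDAG is an MPDAG after applying the Meek rules, and every MPDAG is in particular an arbitrary partially directed graph, it suffices to build, from a DAG $D$, an MPDAG $G$ whose consistent extensions are in bijection with the topological orderings of $D$.

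The core construction I would use: given a DAG $D = (V, E)$ with $V = \{1, \dots, n\}$, form $G$ by taking the vertex set $V$, keeping every directed edge of $D$, and adding an \emph{undirected} edge $u - v$ for every pair $u, v \in V$ that is \emph{non-adjacent} in $D$. In other words, the skeleton of $G$ is the complete graph $K_n$, with the edges of $D$ already oriented as in $D$ and all remaining edges left undirected. A consistent extension of $G$ must orient all the undirected edges so that the result is an acyclic tournament consistent with the partial order given by $D$; an acyclic tournament on $V$ is exactly a total order, and it extends $D$ precisely when it is a topological ordering of $D$. The no-new-v-structure condition is vacuous here because the skeleton is complete — a v-structure $a \to b \leftarrow c$ requires $a$ and $c$ to be non-adjacent, which never happens in $K_n$. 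Hence the consistent extensions of $G$ biject with topological orderings of $D$, giving $\hext(G) = \hto(D)$, and the reduction is clearly polynomial-time computable.

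Two loose ends need care. First, $G$ must genuinely be an MPDAG (not merely a PDAG): one must check that no Meek rule forces any of the added undirected edges, i.e.\ that $G$ is already closed under the Meek rules. Meek's rules only orient an undirected edge when doing so is forced to avoid a new v-structure or a directed cycle with the \emph{existing} directed edges; since the skeleton is complete (so no rule can be triggered by a would-be v-structure) and $D$ itself is acyclic (so no partially directed cycle is present), I expect $G$ to already be an MPDAG, but this is the point that requires the most careful verification against the precise statement of the four Meek rules. Second, one should confirm the reduction also works when we only insist $G$ be a PDAG, which is immediate since any MPDAG is a PDAG, and for fully general partially directed input, which is even weaker. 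The main obstacle is thus not the counting bijection — which is essentially transparent once the complete-skeleton trick is in hand — but the bookkeeping needed to certify that the constructed graph lies in the restricted classes (PDAG, MPDAG) claimed in the theorem statement, so that hardness transfers to exactly those classes and not merely to arbitrary partially directed graphs.
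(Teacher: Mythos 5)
Your reduction is exactly the one in the paper: complete the skeleton of the input DAG $D$ with undirected edges, observe that consistent extensions of the resulting graph are acyclic tournaments (hence total orders) extending $D$, that the v-structure condition is vacuous on a complete skeleton, and conclude $\hext(G)=\hto(D)$. That part is correct and is the whole substance of the argument.

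The one place your reasoning goes wrong is the claim you flag as needing verification: the constructed graph $G$ is in general \emph{not} already an MPDAG. Meek's rules do not only fire to prevent new v-structures; the second rule orients $a - c$ as $a \to c$ whenever $a \to b \to c$ is present, precisely to avoid a directed cycle. So already for $D$ the path $1 \to 2 \to 3$, your $G$ contains the undirected edge $1 - 3$, which the Meek rules force to $1 \to 3$; hence $G$ is a PDAG but not Meek-closed. The correct way to transfer hardness to MPDAGs — and the one the paper uses — is not to hope that no rule fires, but to apply the Meek rules to $G$ in polynomial time: this yields an MPDAG with exactly the same set of consistent extensions, so the count is preserved and the reduction remains parsimonious. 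With that substitution your argument matches the paper's proof.
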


 \begin{proof}
   We give a  parsimonious reduction which by construction will consist of acyclic
   graphs, hence the hardness follows for PDAGs. The resulting PDAGs
   can moreover be transformed into an equivalent MPDAG (regarding the
   corresponding extensions) in polynomial time~\citep{Meek1995}.
   
   We reduce the \sharpP-hard problem of counting the number of
   topological orderings of a DAG~\citep{brightwell1991counting}
   to counting the number of AMOs of a PDAG.
   
   Given a DAG $G = (V,E)$, we construct the PDAG $G'$ as follows: $G'$ has the
   same set of vertices $V$ as $G$ and we add all edges from $G$ to
   $G'$. We insert an undirected edge for all pairs of remaining nonadjacent
   vertices in $G'$.

   Each extension of $G'$ can be represented by exactly one linear ordering
   of $V$ (because $G'$ is complete) and each topological ordering of
   $G$ is a linear ordering as well.
   We prove in two directions that a linear ordering of $V$ is an AMO of $G'$ if, and only if, it
   is a topological ordering of $G$.
   
   \begin{enumerate}
   \item[$\Rightarrow$)] If a linear ordering $\tau$ represents an AMO
     of $G'$, the edges in $G$ are correctly
     reproduced. Hence, it is a topological ordering of $G$.
   \item[$\Leftarrow$)] If a linear ordering $\tau$ is a topological ordering of $G$,
     the orientation of $G'$ according to it is, by definition, acyclic
     and reproduces the directed edges in $G'$. As $G'$ is complete,
     there can be no v-structures. Hence, $\tau$ represents an AMO of $G'$. \qedheretext
   \end{enumerate}
   \noqed
 \end{proof}
  Notably, the reason Clique-Picking cannot be used to solve these
  counting problems can be directly
  connected to the main idea of the proof as well. Intuitively, the
  problems for PDAGs and MPDAGs can be
  reduced to the setting that, when counting AMOs in UCCGs,
  some edge orientations in the chordal component are
  predetermined by \emph{background knowledge}. Hence, in
  the Clique-Picking algorithm, when counting the number of
  permutations for a clique $K$, we have to count only those consistent with the
  background knowledge. But this is equivalent to the hard
  problem of counting the number of topological orderings of a DAG. We
  formalize this in the following. First, we introduce a modified
  version of counting function~$\phi$.

  \begin{definition}
    For a set $S$, a collection $\cR$ of subsets of $S$ and a partial
    order $\preceq$ over the elements of $S$, we define
    $\phi'(S,\cR,\preceq)$ as the number of all permutations of $S$
    consistent with $\preceq$ that do not have a set $S' \in \cR$ as prefix.
  \end{definition}

  Hence, we generalize the function $\phi$ used in the Clique-Picking algorithm
  to counting only linear orderings (i.e., permutations) consistent with a given
  partial order. If $\cR$ is empty, it coincides with the problem is
  of counting the extensions
  of a partial order. As this is equivalent to the problem $\hto$ (all
  relations can be encoded as directed edges), we will denote by
  $\hto(S, \preceq)$ the number of linear orderings of $S$ consistent
  with $\preceq$.

  \begin{lemma}
    Let $S$ be a set and $\cR = \{X_1, \dots, X_\ell\}$ be a collection
    of subsets of $S$ with $X_1 \subsetneq X_2 \subsetneq \cdots
    \subsetneq X_\ell$. Then, function $\phi'(S,\cR,\preceq)$ can be
    computed by $\mathcal{O}(\ell^2)$ calls to $\hto$.
  \end{lemma}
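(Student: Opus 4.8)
The plan is to mirror the proof of Lemma~\ref{lemma:efficientPhi}, replacing every bare factorial $|\cdot|!$ by the corresponding count of linear extensions, i.e.\ by a value of $\hto$. For $A\subseteq S$ write $\preceq|_A$ for the induced partial order; since a partial order presented by its relation digraph is acyclic and its topological orderings are exactly its linear extensions, the quantity $\hto(A,\preceq|_A)$ (the number of orderings of $A$ consistent with $\preceq|_A$) is one call to $\hto$. Note first that $\phi'(S,\emptyset,\preceq)=\hto(S,\preceq)$, which disposes of the empty collection and serves as the base case of the recursion.

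Next I would derive the recursion. As in Lemma~\ref{lemma:efficientPhi}, partition the ``bad'' permutations of $S$ (those consistent with $\preceq$ that have some $X_i$ as a prefix) according to the smallest index $i$ for which $X_i$ is a prefix, and call the number of these $N_i$, so that $\phi'(S,R,\preceq)=\hto(S,\preceq)-\sum_{i=1}^{\ell}N_i$. A permutation having $X_i$ as a prefix decomposes as a permutation $\sigma$ of $X_i$ followed by a permutation $\tau$ of $S\setminus X_i$; going through the relations of $\preceq$ case by case shows that such a concatenation is $\preceq$-consistent iff $\sigma$ is $\preceq|_{X_i}$-consistent, $\tau$ is $\preceq|_{S\setminus X_i}$-consistent, and there is no relation $y\prec x$ with $y\in S\setminus X_i$, $x\in X_i$ — equivalently, $X_i$ is downward closed under $\preceq$ (if $x\in X_i$ and $z\preceq x$ then $z\in X_i$). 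Hence $N_i=0$ unless $X_i$ is downward closed, and in that case the two halves combine with no further interaction, so, using $X_1\subsetneq\dots\subsetneq X_{i-1}\subsetneq X_i$, we get $N_i=\phi'(X_i,\{X_1,\dots,X_{i-1}\},\preceq|_{X_i})\cdot\hto(S\setminus X_i,\preceq|_{S\setminus X_i})$. This yields
\[
  \phi'(S,R,\preceq)=\hto(S,\preceq)-\sum_{\substack{1\le i\le\ell\\ X_i\text{ downward closed under }\preceq}}\phi'\big(X_i,\{X_1,\dots,X_{i-1}\},\preceq|_{X_i}\big)\cdot\hto\big(S\setminus X_i,\preceq|_{S\setminus X_i}\big),
\]
and whether $X_i$ is downward closed is checkable in polynomial time without invoking $\hto$.

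Finally I would bound the number of $\hto$ calls. Evaluated top-down with memoization, the only distinct $\phi'$-subproblems that can arise are $\phi'(X_i,\{X_1,\dots,X_{i-1}\},\preceq|_{X_i})$ for $i=1,\dots,\ell$ together with the original instance — at most $\ell+1$ of them, since every recursive argument already has the form on the right-hand side (restricting $\preceq|_{X_i}$ further to $X_j$ gives $\preceq|_{X_j}$). The original instance issues one call $\hto(S,\preceq)$ and at most $\ell$ calls $\hto(S\setminus X_i,\cdot)$; the subproblem indexed by $i$ issues one call $\hto(X_i,\cdot)$ and at most $i-1$ calls $\hto(X_i\setminus X_j,\cdot)$. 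Summing gives at most $(\ell+1)+\sum_{i=1}^{\ell}i=O(\ell^2)$ calls to $\hto$, while all intermediate work — restricting partial orders, the downward-closure tests, and arithmetic on numbers bounded by $|S|!$ — is polynomial.

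The only place where the generalization is genuinely more delicate than Lemma~\ref{lemma:efficientPhi} is the cross-relation bookkeeping in the second paragraph: in the unordered case every prefix is ``achievable'' and the split of a prefixed permutation into its two blocks is unconstrained, whereas with a partial order a prefix $X_i$ may be incompatible with $\preceq$ (forcing $N_i=0$) and, even when compatible, one must verify that no relation of $\preceq$ is left unsatisfied across the block boundary. Pinning down exactly which relations of $\preceq$ survive the split — and hence the downward-closure condition — is the crux; the recursion shape and the $O(\ell^2)$ accounting then carry over essentially verbatim from Lemma~\ref{lemma:efficientPhi}.
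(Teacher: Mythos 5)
Your proposal is correct and follows essentially the same route as the paper: adapt the recursion of Lemma~\ref{lemma:efficientPhi} by replacing each factorial with the corresponding $\hto$ count on the restricted order and inserting an indicator for whether the split into $X_i$ and $S\setminus X_i$ is compatible with $\preceq$, then bound the calls by $O(\ell^2)$. Your identification of that indicator with $X_i$ being downward closed under $\preceq$ just makes explicit what the paper phrases as ``check whether the partition violates the partial ordering.''
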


  \begin{proof}
    We base our approach on the recursive formula derived in Lemma~\ref{lemma:efficientPhi}
    \[
      \phi(S,\cR) = |S|!
      -\sum_{i=1}^{\ell}|S\setminus X_i|! \cdot \phi(X_i,\{X_1,\dots,X_{i-1}\}).    
    \]
    Instead of $|S|!$, compute the number of permutations of $S$
    consistent with $\preceq$. In the sum, check whether the partition
    in $X_i$ (at the beginning of the permutation) and $S \setminus
    X_i$ (at the end of the permutation) violates the partial ordering
    (let indicator function $I(X_i, \preceq)$ denote this and evaluate to 0 if $\preceq$ is
    violated, else to 1). Replace $|S \setminus X_i|$ by the number of
    permutations of this subset of $S$ which conforms to $\preceq$. We
    obtain:
    \[
      \phi'(S,\cR,\preceq) = \hto(S, \preceq)
      -\sum_{i=1}^{\ell}I(X_i, \preceq) \cdot \hto(S\setminus X_i,
      \preceq) \cdot \phi'(X_i,\{X_1,\dots,X_{i-1}\}, \preceq).    
    \]
    Correctness follows as in Lemma~\ref{lemma:efficientPhi}
    and as there are at most $\ell$ recursive calls, we have $\mathcal{O}(\ell^2)$
    calls to $\hto$.
  \end{proof}
  \begin{theorem}
    Counting the number of AMOs can be solved in time $\mathcal{O}(n^4 \cdot T(n))$
    for PDAGs and MPDAGs, where $T(n)$ is the time required to solve an instance of $\hto$. 
  \end{theorem}

  \begin{proof}
    We consider the following algorithm (input is a PDAG or an MPDAG $G$)
    \begin{enumerate}
      \item Compute the CPDAG $C$, which contains all the DAGs represented
        by $G$~\citep{WieobstExtendability2021}. Note that a PDAG or
        an MPDAG represents a subset of an MEC, $C$ is the CPDAG of this class.
      \item Consider the UCCGs of $C$, compute the number of AMOs
        consistent with the edges in $G$ for each, and multiply
        them. This way the number of AMOs of $G$ can be obtained.
        
        We do the computation for each UCCG by calling a modified
        version of \texttt{count} from Algorithm~\ref{alg:cliquepicking}
        with additional parameter $\preceq$ and $\phi$ replaced by
        $\phi'$. We pass this function a UCCG of $C$ and as $\preceq$ we
        choose $\preceq_G$, i.e., the partial ordering over the UCCG
        given by the directed edges of~$G$ (i.e., $u \preceq_G v$ if
        $u \rightarrow v$ in $G$). 
      \end{enumerate}

      The correctness follows immediately, as Algorithm~\ref{alg:cliquepicking}
      considers every AMO once and this modification prunes exactly
      those AMOs not conforming to the background knowledge.

      As $\texttt{count}$ is called at most $n$ times and there are at
      most $n$ maximal cliques, the function $\phi'$ will be called at most
      $n^2$ times. In the worst case, evaluating $\phi'$ needs
      $\mathcal{O}(n^2)$ calls to $\hto$, thus we obtain the overall
      bound of $\mathcal{O}(n^4)$ calls.
  \end{proof}
  In practice, the bound of $\mathcal{O}(n^4)$ oracle calls should be rather pessimistic as the
  parameter $l$ in the computation $\phi'$, i.e., the number of
  forbidden prefixes, is usually rather small.

\section{Conclusion}\label{sec:conclusion}
We presented the first polynomial-time algorithms for counting and
sampling Markov equivalent DAGs. Crucially, our novel Clique-Picking
approach is also extremely fast in practice. This means that
especially the task of computing the size of an MEC does \emph{not}
have to be avoided, as we have argued by
demonstrating the feasibility in two important applications. This
enables researchers to choose more reliable and robust
algorithms.

For the uniform sampling problem, we gave a new and simple linear-time
algorithm after preprocessing with minimal overhead, which performs
very well in practice, in particular, when many DAGs are sampled from
the same MEC. Finally, we completed the theoretical study of the
problem by showing that the more general problem with additional
background knowledge is not solvable in polynomial-time under common
complexity-theoretical assumptions, while also giving a reduction to
classical counting problems.


\section{Missing Proofs}
\label{sec:missing:proofs}

\subsection{Proof of Theorem \ref{thm:I-MEC:enum} in Section \ref{sec:applications}}

 \begin{proof} 
     A natural approach to compute $G'$ for possible interventional values
     represented by $K$, is as follows. We orient the edges in $G$ according to $K$ and
     next apply directly the Meek rules \citep{Meek1995}. It has been
     shown that it is sufficient to only apply the first two Meek rules and
     with an efficient implementation utilizing the special structure of the problem
     this yields time $\mathcal{O}(d \cdot
     m)$~\citep{Teshnizi20} (where $d$ is the maximal degree of the graph). 
     Below we show that using our methods we can compute $G'$ in linear time $\mathcal{O}(n + m)$.

     Due to Proposition~\ref{prop:hauser} we know that, to compute $G'$,
     it is sufficient to orient only $H$ into $H'$ since the remaining UCCGs of
     $G$ remain unchanged. 
     Let $D$ be the set of vertices reachable from $v$ (including $v$ itself) in $H$
     with edges incident to $K$ removed. Let $A = V \setminus \{D \cup K\}$ be
     the remaining vertices without $K$. As we will show in the
     following, (i) the induced subgraph $H[A \cup K]$
     is undirected, (ii) there are no edges between $A$ and $D$, (iii)
     the edges from $K$ to $D$ are oriented outwards from $K$ (iv)
     and the edges in $H[D]$ are given by
     calling Algorithm~\ref{alg:cgk} on $H[D \cup K]$ with clique $K
     \cup \{v\}$.

     We begin with (ii). Assume, for the sake of contradiction
     there is an edge $A \ni a - v \in V$. Then, by definition, $a$
     would be part of $V$.

     For (iii), observe that there is a path in $H[D]$ from
     $v$ to every vertex. For the sake of the argument, let us only
     consider shortest paths. Then, the first Meek rule can be iteratively applied
     along that path (note that the first edge is given by the
     intervention result). Hence, in $H'$, 
     there is a directed path from $v$ to any vertex in
     $H'[D]$. Consequently, every edge between $K$ and $D$ has to be
     oriented from $K$ to $D$ to avoid a directed cycle (every vertex
     in $K$ is a parent of $v$ in $H'$).

     We are now able to show (i). From (ii) and (iii), every edge between $A \cup K$ and $D$
     is oriented from $A \cup K$ to $D$. It follows that the
     chordal induced subgraph $H[A \cup K]$ can be oriented
     independently of the remaining graph as no v-structure nor cycle
     can occur.

     It is left to show (iv). By the intervention result and (iii), we
     know that every edge from the initial clique $K \cup \{v\}$ is
     oriented outwards. It immediately follows from the correctness of
     Algorithm~\ref{alg:cgk} that every implied directed edge is
     correctly detected (as it follows from those initial
     orientations). To see that all undirected edges $a-b$ are indeed
     undirected in $H'$, recall that in the proof of Theorem~\ref{theorem:cgk} it is argued that there exists an AMO with
     $a \rightarrow b$ and one with $a \leftarrow b$. Now note that
     finding an AMO for $H[D \cup K]$ (the orientation of the initial
     clique does not matter, just consider an arbitrary fixed
     orientation), will also yield an AMO for $H$ by
     combining it with an AMO for $H[A \cup K]$. Hence, the same
     argument holds.
 \end{proof}
\null
\subsection{Missing Proofs in Section \ref{sec:cliquepicking}}

\subsubsection{Proof of Lemma  \ref{lemma:startclique}}
 \begin{proof} 
   Consider AMO $\alpha$. We construct one-by-one a topological ordering starting
   with a maximal clique by an adaption of Kahn's algorithm~\citep{kahn1962topological}.
   First, let the start vertex in the ordering be the
   unique source $s$ (recall that an AMO has a unique source vertex)
   and let set $S = \{s\}$ denote the already
   considered vertices. Second, as long as there is a vertex
   adjacent to every $x \in S$, choose such a vertex $v$ which is incident
   to no edge $u \rightarrow v$ in $\alpha$ for $u \in V \setminus
   S$ and add it to $S$. Third, iteratively append the remaining vertices to the
   ordering by repeatedly choosing vertices with no incoming edges from
   unvisited vertices.

   Clearly, the resulting ordering is a topological ordering and starts
   with a maximal clique provided vertex $v$ always exists. Consider
   the set $W = \{ w \; | \; w \in N(u) \text{ for all } u \in S\}$ of common
   neighbors of $S$, which is non-empty in the
   second phase. Assume for a
   contradiction that every vertex in $W$ has an incoming edge from a
   vertex in $V \setminus S$. Note that no
   vertex in $w \in W$ can have an incoming edge from $x \in (V \setminus S)
   \setminus W$ as this would imply a v-structure $y \rightarrow w
   \leftarrow x$ for a $y \in S$ not adjacent to $x$ in the given graph $\alpha$. 
   As the graph $\alpha$ is acyclic (and this property holds for
   taking induced subgraphs, i.e., for $G[W]$ as well) there has to
   be a vertex in $W$ with no incoming edge -- a contradiction.
 \end{proof}
\null
\subsubsection{Proof of Lemma  \ref{lemma:perminvariance}}
 \begin{proof} 
     We prove the statement by showing that, for two arbitrary permutations $\pi(K)$ and
   $\pi'(K)$, the edges in $G^{\pi(K)}$ and $G^{\pi'(K)}$ coincide,
   excluding the edges connecting the vertices in $K$.

   The graph
   $G^{\pi(K)}$ is defined as the union of all AMOs, which can be
   represented by a topological ordering starting with $\pi(K)$. Take
   such an AMO $\alpha$ and, in a corresponding topological ordering $\tau$, replace
   $\pi(K)$ by $\pi'(K)$ obtaining a new topological ordering $\tau'$.
   The orientation $\alpha'$ represented by $\tau'$ is, by definition, acyclic and,
   moreover, moral. For the latter property, assume for a
   contradiction, that there is a v-structure (immorality) $a
   \rightarrow b \leftarrow c$. Because $\alpha$ is moral and only edge
   directions internal in $K$ have been changed in $\alpha'$, it has to
   hold that either
   \begin{enumerate}
   \item two vertices of $a,b,c$ are in $K$ (w.l.o.g.\ assume these are $a$ and
     $b$), but then we have $b \rightarrow c \not\in K$ as $c$ is not in $K$ and
     thus preceded by $b$ in $\tau'$, or
   \item all three vertices are in $K$, but then $a \rightarrow b
     \leftarrow c$ is no induced subgraph as $K$ is a clique.
   \end{enumerate}
   Hence, such a v-structure can not exist and $\alpha'$ is moral as
   well. The reverse direction follows equivalently.

   Therefore, the union of all AMOs, which can be represented by a
   topological ordering $\tau'$ starting with $\pi'(K)$, yields the exact
   same graph as for $G^{\pi(K)}$, excluding the internal edges in $K$. Thus,
   $\chordalcomps_G(\pi(K)) = \chordalcomps_G(\pi'(K))$ and, by definition,
   $\chordalcomps_G(\pi(K)) = \chordalcomps_G(K)$.
 \end{proof}
\null
\subsubsection{Proof of Lemma \ref{lemma:peomao}}
 \begin{proof} 
   For the first direction, assume $\tau$ is a topological ordering representing an
    AMO. By definition of AMOs, there can not be a v-structure and, thus,
    if two vertices $x,y \in N(u)$ precede $u$
    in $\tau$, they need to be neighbors. This implies that the neighbors
    of $u$ preceding $u$ in $\tau$ form a clique. Thus, the reverse
    of $\tau$ is a perfect elimination ordering.

    For the second direction, assume $\rho$ is a perfect elimination ordering and orient the edges
    according to the topological ordering that is the reverse of
    $\rho$. Clearly, the orientation is acyclic. Moreover, there can be
    no v-structure, as two vertices $x,y$ preceding $u$ in the reverse
    of $\rho$ are neighbors. Thus, the reverse of $\rho$ represents an AMO.
  \end{proof}
  \null

\subsubsection{Proof of Corollary \ref{cor:properties:gk}}
 \begin{proof} 
   \begin{enumerate}
   \item Follows immediately from Theorem~\ref{theorem:cgk}.
   \item Shown in the proof of Theorem~\ref{theorem:cgk}.
   \item We show two directions: Let $x \in P_{i(v)}(v)$. Then,
     $x$ is a neighbor of $v$ and output before $v$. By 2. we have $x
     \rightarrow v$. Now, let $x \in  \textit{Pa}_v(G^K)$,
     i.e., $x$ is connected by a directed edge to $v$ in $G^K$. From 1. it follows that $x$
     is not in the same connected component. Then $x$ is visited before
     $v$ is output and consequently in $P_{i(v)}(v)$.
   \item As $a$ and $b$ are output in the same iteration, they both
     have the maximum label, and could both have been picked as vertex
     $x$. However, if $P_{i(a)}(a) \setminus P_{i(b)}(b) \neq
     \emptyset$ or $P_{i(b)}(b) \setminus P_{i(a)}(a) \neq \emptyset$
     the algorithm would not produce the reverse of a PEO
     (after the choice of either $a$ or $b$). A
     contradiction. Hence, the statement follows. 
   \item By 1. the undirected components are chordal induced subgraphs and hence its
     consistent extensions are AMOs. It is left to show that the
     orientations of the connected
     components can be constructed separately, yielding the product
     formula.By combining
     3. and 4., the set of parents is identical for each vertex in the same
     component. Then, the statement follows from this fact analogously to Theorem~4
     and~5 from Lemma~10 in~\citep{He2008}.
   \item By Lemma~\ref{lemma:perminvariance}
     we have that $\hamo(G^{\pi(K)}) = \hamo(G^K)$ for any permutation $\pi$. As there are $|K|!$ many permutations, which all lead to different AMOs, and combined with 5. we arrive at the stated formula.\qedheretext
   \end{enumerate}
   \noqed
 \end{proof}
 \null
\subsubsection{Proof of Lemma  \ref{lemma:minimalSeparator}}
 \begin{proof} 
   Assume by Lemma~\ref{lemma:startclique} that $\tau_1$ starts with the maximal clique $K_1$ and $\tau_2$ with
   the maximal clique $K_2$. Since every AMO of a UCCG has a
   unique source, $\tau_1$ and $\tau_2$ start with the same vertex and,
   hence, $K_1\cap K_2=S\neq\emptyset$.

   We first show that $\tau_1$ and $\tau_2$ have to start with
   $S$. Assume for a contradiction that in $\tau_1$ there is a vertex
   $u \not\in S$ before a $v \in S$. The edge between $u$ and $v$ is
   directed as $u \rightarrow v$ in $\alpha$, but as $v \in K_2$ and
   $u \not\in K_2$, the ordering $\tau_2$ implies $u \leftarrow v$.

   If $K_1=K_2$ then $S\in\cliques(G)$ and we are done. We prove that
   otherwise $S$ is a minimal separator in $G$
   that separates $P_1=K_1\setminus S$ from $P_2=K_2\setminus S$. Note
   that the minimality follows by definition. It remains to show that
   $S$ indeed separates $P_1$ and $P_2$. For a contradiction, let
   $P_1\ni x_1  - x_2 - \dots - x_{k-1} - x_k \in P_2$ be a shortest
   $P_1$-$P_2$-path in $G[V\setminus S]$ with $x_i\not\in K_1\cup K_2$
   for $i \in \{2, \dots, k-1\}$.
   According to $\tau_1$, we have the edge $x_1 \rightarrow x_2$ in
   $\alpha$. Since we consider a shortest path,
   $x_{i-1} - x_{i} - x_{i+1}$ is always an induced subgraph and, thus,
   an iterative application of the first Meek rule implies
   $x_{k-1} \rightarrow x_k$.  However, $\tau_2$ would imply the edge
   $x_{k-1} \leftarrow x_k$ in $\alpha$ -- a contradiction.
 \end{proof}
 \null
 
\subsubsection{Proof of Proposition \ref{proposition:fpFormula}}

 To ensure the property  that  we count every AMO
\emph{exactly} once, 
 we introduce for every AMO~$\alpha$ a partial
 order~$\prec_{\alpha}$ on the maximal cliques. Then we prove that
 there is a unique minimal element with respect to this order, and
 deduce a formula (the one given in
 Proposition~\ref{proposition:fpFormula}) for $\hamo$ that counts~$\alpha$ only ``at this
 minimal element''.
 To get started, we need a technical definition and some auxiliary
 lemmas that give us more control over the rooted clique tree. 

 \begin{definition} 
   An \emph{$S$-flower} for a minimal separator~$S$ is a maximal set
   \[
     F\subseteq\{\,K\mid K\in\cliques(G)\wedge S\subseteq K\,\}
   \]
   such that $\bigcup_{K\in F}K$ is
   connected in $G[V\setminus S]$. The \emph{bouquet}~$\bouquet(S)$ of a
   minimal separator $S$ is the set of all $S$-flowers.
 \end{definition}
 \begin{example}
   The $\{2,3\}$-flowers of the graph from Example~\ref{example:separators} are
   $\{\{1,2,3\}\}$  and  $\{\{2,3,4,5\},$  $\{2,3,5,6\}\}.$
   \exampleqed
 \end{example}
 \begin{lemma}
   \label{lemma:connflowers}
   An $S$-flower $F$ is a connected subtree in a rooted clique tree $(T,r,\iota)$.
 \end{lemma}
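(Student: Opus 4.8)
The goal is to show that the node set $\widehat{F} = \{\, x \in V_T : \iota(x) \in F\,\}$ induces a connected subtree of $T$ (the root plays no role). My plan is to exhibit the $S$-flowers as the connected components of a suitable subtree of $T$ after deleting a set of edges. First I would note that, since every $K \in F$ satisfies $S \subseteq K$, the set $\widehat F$ lies inside $T_S := \{\, x \in V_T : S \subseteq \iota(x)\,\}$, which is the intersection of the subtrees $\{\, x : v \in \iota(x)\,\}$ over $v \in S$ and hence itself a subtree (the intersection of subtrees of a tree is connected), nonempty because by the characterisation of minimal separators some maximal clique contains $S$. So it suffices to show $\widehat F$ is connected inside the tree $T_S$. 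Call an edge $\{x,y\}$ of $T_S$ \emph{tight} if $\iota(x)\cap\iota(y) = S$ and \emph{loose} otherwise (i.e.\ $\iota(x)\cap\iota(y)\supsetneq S$), and let $T_S^{\circ}$ be the forest obtained from $T_S$ by deleting all tight edges. I would then prove that the connected components of $T_S^{\circ}$ are exactly the sets $\widehat F$ for $S$-flowers $F$; since each component of $T_S^{\circ}$ is a connected subtree of $T$, the lemma follows.

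The main tool is the clique-tree intersection property: for adjacent $x,y\in V_T$, deleting $\{x,y\}$ splits $T$ into subtrees $T_x\ni x$ and $T_y\ni y$ with $\big(\bigcup_{z\in T_x}\iota(z)\big)\cap\big(\bigcup_{z\in T_y}\iota(z)\big)=\iota(x)\cap\iota(y)$, because a vertex lying in a clique on each side lies, by connectedness of $\{\,z:v\in\iota(z)\,\}$, in both $\iota(x)$ and $\iota(y)$. Consequently, for a tight edge $\{x,y\}$ of $T_S$, the vertex set $V_G$ partitions as $A\,\dot\cup\,S\,\dot\cup\,B$ with $A=\big(\bigcup_{z\in T_x}\iota(z)\big)\setminus S$ and $B=\big(\bigcup_{z\in T_y}\iota(z)\big)\setminus S$, and there is no edge of $G$ between $A$ and $B$ (any such edge would lie in some maximal clique $\iota(z)$, which sits entirely on one side). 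I would also record here that every maximal clique $K$ with $S\subseteq K$ actually satisfies $K\supsetneq S$: as $S$ is a minimal separator it equals $\iota(x)\cap\iota(y)$ for two distinct adjacent nodes of a clique tree, so $S\subsetneq\iota(x)$ and $S$ is not itself a maximal clique; in particular $K\setminus S$ is a nonempty clique of $G$, hence a connected induced subgraph.

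It then remains to verify the two inclusions. For one direction, let $F^{\star}$ be the clique set of a connected component of $T_S^{\circ}$: walking along the corresponding subtree, consecutive cliques meet in a loose edge, so their differences with $S$ share a vertex, whence $\bigcup_{K\in F^{\star}}(K\setminus S)$ is a union of cliques overlapping pairwise along a tree and therefore induces a connected subgraph of $G[V\setminus S]$; and $F^{\star}$ cannot be enlarged, for any maximal clique $K'\supseteq S$ with $K'\notin F^{\star}$ lies in another component of $T_S^{\circ}$, so the $T_S$-path joining it to $F^{\star}$ crosses a tight edge $e$, and then $K'\setminus S$ lies in $A$ while all of $\bigcup_{K\in F^{\star}}(K\setminus S)$ lies in $B$ (since $F^{\star}$ is connected in $T$ and $e$ is not one of its edges, it lies on one side of $e$), so the enlarged union would be disconnected — a contradiction. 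Conversely, if an $S$-flower $F$ had two of its cliques in different components of $T_S^{\circ}$, I would pick a tight edge $e$ on the $T_S$-path between them; it splits $F$ into nonempty parts $F'\,\dot\cup\,F''$ with $\bigcup_{K\in F'}(K\setminus S)\subseteq A$ and $\bigcup_{K\in F''}(K\setminus S)\subseteq B$, again forcing $\bigcup_{K\in F}(K\setminus S)$ to be disconnected. Hence $\widehat F$ lies in a single component of $T_S^{\circ}$, and by the maximality built into the definition of a flower it equals that component, a connected subtree of $T$. The one delicate step is the argument that a flower cannot straddle a tight edge (and the matching maximality argument); the rest is routine bookkeeping with the clique-intersection property.
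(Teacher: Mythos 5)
Your proof is correct, and it rests on the same engine as the paper's, but it is packaged quite differently. The paper argues locally by contradiction: if $F$ were disconnected in $T$, the first node $\tilde K\notin F$ on a path between two cliques of $F$ would still contain $S$ (connectedness of $\iota^{-1}(S)$), maximality of $F$ would force $K_1\cap\tilde K=S$, and the clique-tree edge-separation property would then split $K_1\setminus S$ from $K_2\setminus S$ in $G[V\setminus S]$, contradicting the flower's connectivity. Your ``converse'' direction is exactly this argument (a tight edge on the path separates the flower), so the mathematical kernel coincides; what you add is a global characterization, namely that the $S$-flowers are precisely the connected components of $\iota^{-1}(S)$ after deleting the edges labelled exactly $S$. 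That stronger statement costs you the extra forward direction (loose edges share a vertex outside $S$, each $K\setminus S$ is a nonempty clique, maximality via the $A\,/\,S\,/\,B$ partition), but it buys you Lemma~\ref{lemma:bouquetpart} essentially for free and makes transparent the edge-counting of flowers used later in the proof of Proposition~\ref{prop:subpbound}. One small point of rigor: in your maximality step you rule out adding a single clique $K'$; to match the definition of a flower as a \emph{maximal} set you should note that, since no $G$-edge crosses from $A$ to $B$, adding any collection of further cliques can never reconnect the two sides, so no proper superset of the component's clique set has a connected union either. This extends immediately from your partition argument, so it is a phrasing issue rather than a gap.
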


 \begin{proof}
   Assume for a contradiction that $F$ is not
   connected in $T$. Then there are cliques $K_1,K_2\in F$ that
   are connected by the unique path $K_1-\tilde K-\dots-K_2$ with $\tilde
   K\not\in F$. Since $\iota^{-1}(S)$ is connected, we
   have $S\subseteq\tilde K$. By the maximality of~$F$, we
   have $K_1\cap\tilde K=S$. But then $S$ separates $K_1\setminus S$
   from $K_2\setminus S$, which contradicts the definition of $S$-flowers.
 \end{proof}
 \begin{lemma}
   \label{lemma:bouquetpart}
   For any minimal separator $S$, the bouquet $\bouquet(S)$ is a
   partition of $\iota^{-1}(S)$.
 \end{lemma}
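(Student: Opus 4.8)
The plan is to verify the two defining properties of a partition of $\iota^{-1}(S)=\{\,K\in\cliques(G)\mid S\subseteq K\,\}$ (where, as usual, I identify a clique-tree node $x$ with the maximal clique $\iota(x)$): first, that every maximal clique containing $S$ lies in some $S$-flower (covering), and second, that two $S$-flowers sharing a clique must coincide (disjointness). Throughout I read the phrase ``$\bigcup_{K\in F}K$ is connected in $G[V\setminus S]$'' as: the vertex set $\bigcup_{K\in F}(K\setminus S)$ induces a connected subgraph of $G[V\setminus S]$; call a family $F\subseteq\iota^{-1}(S)$ with this property \emph{admissible}, so that an $S$-flower is precisely an inclusion-maximal admissible family.

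The one structural fact I would record up front is that $K\setminus S\neq\emptyset$ for \emph{every} maximal clique $K$ with $S\subseteq K$. Since $S$ is a minimal separator, by the clique-tree characterization recalled above there are adjacent nodes $x,y$ with $\iota(x)\cap\iota(y)=S$, and $S\subsetneq\iota(x)$, $S\subsetneq\iota(y)$ because $\iota(x)\neq\iota(y)$ and neither maximal clique contains the other; thus $S$ is properly contained in at least two maximal cliques. Consequently, if $K\supseteq S$ is any maximal clique, then $K\neq S$ (otherwise $K$ would be a maximal clique properly contained in $\iota(x)$), so $S\subsetneq K$ and hence $K\setminus S$ is a non-empty clique of $G[V\setminus S]$, which is therefore connected there.

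For covering, take any $K\in\iota^{-1}(S)$. By the previous paragraph the singleton $\{K\}$ is admissible, so among the finitely many admissible families containing $K$ there is an inclusion-maximal one, which is an $S$-flower containing $K$. Hence $\bigcup_{F\in\bouquet(S)}F=\iota^{-1}(S)$. For disjointness, suppose $F_1,F_2\in\bouquet(S)$ and $K\in F_1\cap F_2$. The vertex sets $U_i=\bigcup_{K'\in F_i}(K'\setminus S)$ are connected in $G[V\setminus S]$ and both contain the non-empty set $K\setminus S$, so $U_1\cup U_2=\bigcup_{K'\in F_1\cup F_2}(K'\setminus S)$ is connected in $G[V\setminus S]$ as well, i.e.\ $F_1\cup F_2$ is admissible. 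By maximality of $F_1$ this gives $F_1\cup F_2=F_1$, so $F_2\subseteq F_1$, and symmetrically $F_1\subseteq F_2$; hence $F_1=F_2$. Together with covering, this shows $\bouquet(S)$ partitions $\iota^{-1}(S)$.

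The only genuinely delicate ingredient is the non-emptiness claim $S\subsetneq K$: it is what makes singletons admissible (needed for covering) and what guarantees that two overlapping flowers have overlapping vertex sets (needed for disjointness). Everything else is the elementary fact that two connected subgraphs with a common vertex have connected union, so I do not anticipate any further obstacle; one could alternatively route the argument through Lemma~\ref{lemma:connflowers} and the subtree $\iota^{-1}(S)$, but the direct approach above seems shortest.
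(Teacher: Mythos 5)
Your proof is correct and follows the same line as the paper's (very terse) argument: covering because each clique containing $S$ sits in some maximal admissible family, and disjointness because two flowers sharing a clique would have connected union in $G[V\setminus S]$, contradicting maximality. The extra care you take in verifying $S\subsetneq K$ (so that $K\setminus S$ is a non-empty connected set, making singletons admissible and overlapping flowers genuinely overlap in $G[V\setminus S]$) is a detail the paper leaves implicit, and it is handled correctly.
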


 \begin{proof}
   For each $x\in\iota^{-1}(S)$, the maximal clique $\iota(x)$ is in
   some $S$-flower by definition. However, no maximal clique can be
   in two $S$-flowers, as these flowers would then be in the same connected
   component in $G[V\setminus S]$.
 \end{proof}
 Since for a $S\in\Delta(G)$ the subtree $\iota^{-1}(S)$ of
 $(T,r,\iota)$ is connected,
 Lemma~\ref{lemma:connflowers} and Lemma~\ref{lemma:bouquetpart} give
 rise to the following order on $S$-flowers $F_1,F_2\in\bouquet(S)$:
 $F_1\prec_T F_2$ if $F_1$ contains a node on the unique path
 from $F_2$ to the root of $T$.

 \begin{lemma}\label{lemma:flowersAreOrdered}
   There is a unique least $S$-flower in $\bouquet(S)$ with respect to $\prec_T$.
 \end{lemma}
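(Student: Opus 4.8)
The plan is to identify the least flower explicitly as the one lying closest to the root of the clique tree, and then check that it sits below every other flower with respect to $\prec_T$.

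First I would use that, for a minimal separator $S$, the set $\iota^{-1}(S)=\{\,x\in V_T\mid S\subseteq\iota(x)\,\}$ is a nonempty connected subtree of $T$: it is nonempty because, by the characterization of minimal separators, $S=\iota(x)\cap\iota(y)$ for two adjacent nodes $x,y$, and its connectedness was recorded above. A connected subtree $T'$ of a rooted tree has a unique node $v^\ast$ of minimum depth, and every node of $T'$ is a descendant of $v^\ast$: for $u\in T'$ the unique tree path between $v^\ast$ and $u$ lies inside $T'$, so if $u$ were not a descendant of $v^\ast$ then the common ancestor of $u$ and $v^\ast$ on that path would lie in $T'$ at strictly smaller depth than $v^\ast$, contradicting minimality. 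Applying this to $T'=\iota^{-1}(S)$ gives its top node $v^\ast$. By Lemma~\ref{lemma:bouquetpart} the bouquet $\bouquet(S)$ partitions $\iota^{-1}(S)$, so $v^\ast$ lies in exactly one $S$-flower; call it $F^\ast$.

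Next I would show $F^\ast\prec_T F$ for every $F\in\bouquet(S)$ with $F\neq F^\ast$. Since $F$ is a connected subtree of $T$ by Lemma~\ref{lemma:connflowers}, it has a unique node $y$ closest to the root, and the path ``from $F$ to the root'' is the $y$--$r$ path. Now $y\in\iota^{-1}(S)$ and $y\neq v^\ast$ (the flowers are disjoint and $v^\ast\in F^\ast$), so $y$ is a proper descendant of $v^\ast$; hence $v^\ast$ lies on the $y$--$r$ path. As $v^\ast\in F^\ast$, this is precisely the defining condition $F^\ast\prec_T F$. Therefore $F^\ast$ is a least element of $(\bouquet(S),\prec_T)$. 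For uniqueness, suppose $F'$ is also a least element with $F'\neq F^\ast$; then $F'\prec_T F^\ast$ (as $F'$ is least), while the previous line gives $F^\ast\prec_T F'$. But $\prec_T$ cannot hold in both directions between distinct flowers: since flowers are disjoint and connected, that would force a chain of proper-ancestor relations $a\prec b\prec a$ among nodes of $T$, which is impossible. Hence $F'=F^\ast$.

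I do not expect a genuine obstacle here, since everything reduces to the elementary structure of connected subtrees of a rooted tree (indeed, contracting each flower in $\iota^{-1}(S)$ to a point realizes $\prec_T$ as the ancestor order of a rooted tree, whose root is $F^\ast$). The only point needing care is fixing the reading of ``the unique path from $F$ to the root'', which I would take — as above — to emanate from the root-closest node of the connected subtree $F$.
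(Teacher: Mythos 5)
Your proof is correct and rests on the same structural facts as the paper's own argument (connectedness of $\iota^{-1}(S)$ and of each flower via Lemma~\ref{lemma:connflowers}, plus the partition of Lemma~\ref{lemma:bouquetpart}); the paper phrases this as a short proof by contradiction, whereas you construct the least flower directly as the one containing the root-closest node of $\iota^{-1}(S)$ and then verify minimality. Your explicit check that $\prec_T$ cannot hold in both directions between two distinct flowers (needed for uniqueness of the least element) is a detail the paper leaves implicit, so your write-up is, if anything, slightly more complete.
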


 \begin{proof}
   Assume, there is no unique least $S$-flower.
   Then there are two minimal $S$-flowers which are
   incomparable. However, by
   Lemma~\ref{lemma:connflowers} and~\ref{lemma:bouquetpart}, and the definition
   of the partial order, there has to be another $S$-flower closer to
   the root and, thus, lesser given the partial order -- a contradiction.
 \end{proof}
 The lemma states that for every
 AMO~$\alpha$ there is a flower $F$ at which we want to count
 $\alpha$. We have to be sure that this is possible, i.e., that a
 clique in $F$ can be used to generate~$\alpha$.

 \begin{lemma}
   \label{lemma:BSeveryAMO}
   Let $\alpha$ be an AMO such that every clique-starting topological
   ordering that represents $\alpha$ has the minimal separator~$S$ as
   prefix. Then every $F\in\bouquet(S)$ contains a clique $K$ such that
   there is a $\tau\in\mathrm{top}(\alpha)$ starting with $K$.
 \end{lemma}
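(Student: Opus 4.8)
The plan is to show that every flower $F\in\bouquet(S)$ is reachable: the greedy construction underlying Observation~\ref{cor:startclique} can be steered so that it outputs a clique-starting topological ordering of $\alpha$ whose initial maximal clique lies in $F$. I would first extract two consequences of the hypothesis. Since $\alpha$ has at least one clique-starting topological ordering (Observation~\ref{cor:startclique}) and, by assumption, every such ordering has $S$ as a prefix, $\alpha$ has a topological ordering beginning with $S$; hence (a) every edge of $\alpha$ between $S$ and $V\setminus S$ is oriented away from $S$, and (b) $\alpha[S]$ is a linear order $\pi_S=(s_1,\dots,s_k)$ whose first vertex $s_1$ is the unique source of $\alpha$. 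Now fix $F\in\bouquet(S)$ and set $C_F=\bigcup_{K\in F}(K\setminus S)$. Because $S$ is a minimal separator of a chordal graph it is strictly contained in a maximal clique, so $K\supsetneq S$ for every $K\in F$ and $C_F\neq\emptyset$; by the definition of a flower $G[C_F]$ is connected and chordal, hence a UCCG.

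Next I would isolate the structural facts that make the steering possible. (i) Every vertex of $C_F$ is adjacent to all of $S$ (it lies in some $K\in F$ with $S\subseteq K$). (ii) For distinct flowers $F,F'$ there is no edge of $G$ between $C_F$ and $C_{F'}$: an edge $w-v$ with $w\in C_F$, $v\in C_{F'}$ together with (i) would make $S\cup\{w,v\}$ a clique, which extends to a maximal clique lying in both flowers, forcing $F=F'$; as a corollary, a vertex $v\in V\setminus S$ lying in no $C_{F'}$ is not adjacent to all of $S$ (otherwise $S\cup\{v\}$ would extend to a maximal clique placing $v$ in some $C_{F'}$). (iii) $\alpha[C_F]$ is an AMO of the UCCG $G[C_F]$ — acyclicity and the absence of v-structures are inherited from $\alpha$ — so it has a unique source $t$. (iv) The in-neighbours of $t$ in $\alpha$ are exactly the vertices of $S$: $t$ has none in $C_F$ since it is the source of $\alpha[C_F]$; for a neighbour $v\in V\setminus(S\cup C_F)$ of $t$, by (ii) $v$ lies in no $C_{F'}$, hence is not adjacent to all of $S$, so picking $y\in S$ non-adjacent to $v$ we have $y\to t$ by (a), and an orientation $v\to t$ would create the v-structure $y\to t\leftarrow v$; and every $s\in S$ is a neighbour of $t$ by (i), oriented $s\to t$ by (a).

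I would then run the greedy construction of Observation~\ref{cor:startclique} on $\alpha$ (which remains valid under any eligible choices), making the following choices: first append $s_1,\dots,s_k$ in the order $\pi_S$ — admissible because the in-neighbours of $s_{i+1}$ are exactly $\{s_1,\dots,s_i\}$ by $\pi_S$ and (a) — reaching the prefix $S$; then append $t$, admissible by (i) and (iv); then complete greedily. This yields a clique-starting topological ordering $\tau$ of $\alpha$ whose initial maximal clique $K^*$ of $G$ contains $S\cup\{t\}$. To finish I would check $K^*\in F$: any vertex $v$ appended after the prefix $S$ (in particular $t$, and each later one) is adjacent to all of $S$ and to $t$, so $S\cup\{v,t\}$ is a clique which extends to a maximal clique $\tilde K\supseteq S$ with $t,v\in\tilde K\setminus S$; since $\tilde K$ shares $t$ outside $S$ with the clique of $F$ through $t$, we get $\tilde K\in F$ and hence $v\in C_F$. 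Thus $K^*\setminus S\subseteq C_F$, and as $t\in K^*\setminus S$, the maximal clique $K^*$ shares a vertex outside $S$ with a clique of $F$, so $K^*\in F$. Since $F$ was arbitrary, this $\tau$ is the required element of $\mathrm{top}(\alpha)$ and the lemma follows.

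I expect the main obstacle to be fact (iv) — that the source $t$ of $\alpha[C_F]$ has no in-neighbour outside $S$ — since it is exactly here that morality of $\alpha$ and the ``no edge between distinct flowers'' observation (ii) are both indispensable; without it the greedy could be forced to drift out of $C_F$ and land in the wrong flower. The remaining points are routine, notably that appending a vertex whose in-neighbours already lie in the current prefix keeps the partial order extendable to a full topological ordering of $\alpha$, which is implicit in the proof of Observation~\ref{cor:startclique}.
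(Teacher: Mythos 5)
Your proof is correct, but it takes a genuinely different route from the paper's. The paper argues by \emph{transport}: it takes one clique-starting ordering $\tau=(S,K\setminus S,V\setminus K)$ of $\alpha$ guaranteed by Observation~\ref{cor:startclique}, observes that the petals of distinct $S$-flowers lie in distinct connected components of $G[V\setminus S]$, and asserts that $\tau$ can be rearranged into an ordering $(S,K'\setminus S,V\setminus K')$ for a clique $K'$ of any other flower. You instead construct the desired ordering from scratch for each flower $F$: you identify the unique source $t$ of the sub-AMO $\alpha[C_F]$, prove that its in-neighbours are exactly $S$ (this is your step (iv), and you are right that it is the crux --- it is where morality and the ``no edges between distinct flowers'' fact both enter), and then steer the greedy construction of Observation~\ref{cor:startclique} through $S$, then $t$, landing in a maximal clique $K^*\supseteq S\cup\{t\}$ that you verify lies in $F$. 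What your version buys is precision about \emph{which} clique of $F$ works: as stated, the paper's rearrangement does not hold for an arbitrary $K'\in F'$ (e.g., for $G[S\cup C_{F'}]$ a triangle-fan $s$--$a$--$b$--$c$ with $\alpha$ directing $a\to b\to c$, the clique $\{s,b,c\}$ cannot start any ordering of $\alpha$ because $b$ has the in-neighbour $a$ outside it); only a clique containing the source of $\alpha[C_{F'}]$ can, which is exactly the clique your construction produces. So your argument is longer but closes a gap the paper's four-line proof leaves implicit; both ultimately rest on the same two structural facts, namely that all edges leave $S$ and that petals of distinct flowers are separated by $S$.
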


 \begin{proof}
   Let $\tau$ be a topological ordering representing $\alpha$ that starts with $S$.
   By Lemma~\ref{lemma:startclique}, there is at least one clique $K$ with
   $S\subseteq K$ such that $\tau$ has the form $\tau=(S,K\setminus S,V\setminus K)$. Let $F\in\bouquet(S)$ be
   the flower containing $K$ and $F'\neq F$ be another $S$-flower with
   some $K'\in F'$. Observe that $K\setminus S$ is disconnected from
   $K'\setminus S$ in $G[V\setminus S]$. Therefore, there is a
   topological ordering of the form $(S,K'\setminus S, V\setminus K')$
   that represents $\alpha$ as well.
 \end{proof}
 We use $\prec_T$ to define, for a fixed AMO~$\alpha$, a partial
 order~$\prec_{\alpha}$ on the set of maximal cliques, which are at the
 beginning of some $\tau\in\mathrm{top}(\alpha)$, as follows:
 $K_1\prec_{\alpha}K_2$ if, and only if,
 (i)~$K_1\cap K_2=S\in\separators(G)$, (ii)~$K_1$ and $K_2$ are in
 $S$-flowers $F_1,F_2\in\bouquet(S)$, respectively, and
 (iii)~$F_1\prec_T F_2$.

 Now, we are ready to give:
 \begin{proof}{(of Proposition \ref{proposition:fpFormula})}
   We have to show that every
   AMO~$\alpha$ is counted exactly once. Recall that
   $\mathrm{top}(\alpha)=\{\tau_1,\dots,\tau_{\ell}\}$ is the set of clique-starting topological orderings
   that represent $\alpha$, and that the rooted clique tree $(T,r,\iota)$
   implies a partial order $\prec_T$ on flowers, which in return
   defines partial order $\prec_{\alpha}$ on the set of maximal cliques that are at the
   beginning of some $\tau\in\mathrm{top}(\alpha)$.

   \begin{claim}
     There is a unique least maximal clique $K\in\cliques(G)$ with respect to $\prec_{\alpha}$.
   \end{claim}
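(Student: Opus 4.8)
The plan is to prove that $\prec_{\alpha}$ has a unique minimum on its domain, which is the set $\mathcal{K}_{\alpha}$ of maximal cliques of $G$ that occur as the prefix of some clique-starting topological ordering representing $\alpha$; this set is nonempty by Observation~\ref{cor:startclique}. By the remark following Lemma~\ref{lemma:minimalSeparator}, all orderings in $\mathrm{top}(\alpha)$ share a common prefix lying in $\separators(G)\cup\cliques(G)$; let $S$ be the inclusion-smallest such set. If $S\in\cliques(G)$, then every ordering in $\mathrm{top}(\alpha)$ begins with the maximal clique $S$, and since two maximal cliques that are both prefixes of one linear ordering must coincide, we get $\mathcal{K}_{\alpha}=\{S\}$ and are done. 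So assume from now on that $S\in\separators(G)$. A minimal separator is never a maximal clique (it is a proper subset of each of the two cliques that realize it as an intersection in a clique tree), so $S\subsetneq K$ for every $K\in\mathcal{K}_{\alpha}$, and each such $K$ lies in a well-defined $S$-flower.

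Next I would bring in the bouquet machinery. By Lemma~\ref{lemma:BSeveryAMO} every flower $F\in\bouquet(S)$ contains a clique from $\mathcal{K}_{\alpha}$, and by Lemma~\ref{lemma:flowersAreOrdered} there is a unique $\prec_T$-least flower $F^{*}$, with $F^{*}\prec_T F$ for all other $F\in\bouquet(S)$. If $K\in\mathcal{K}_{\alpha}$ lies in a flower $F\neq F^{*}$ and $K^{*}\in\mathcal{K}_{\alpha}\cap F^{*}$, then $K^{*}$ and $K$ lie in distinct $S$-flowers, so by maximality of flowers $K^{*}\cap K=S\in\separators(G)$, whence $K^{*}\prec_{\alpha}K$ because $F^{*}\prec_T F$. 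Thus no clique outside $F^{*}$ is $\prec_{\alpha}$-minimal, and by the same computation any $\prec_{\alpha}$-minimal clique $K^{**}$ (which must then lie in $F^{*}$) automatically satisfies $K^{**}\prec_{\alpha}K$ for all $K\in\mathcal{K}_{\alpha}\setminus F^{*}$. Hence it remains to prove that $\mathcal{K}_{\alpha}\cap F^{*}$ has a unique $\prec_{\alpha}$-minimal clique.

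For this last step I would argue by induction on $|\cliques(G)|$, passing to the induced subgraph $H := G[U]$ with $U := \bigcup_{K\in F^{*}}K$. The facts to establish are: (i) $H$ is a UCCG with $\cliques(H)=F^{*}$ and $|F^{*}|<|\cliques(G)|$ — connectivity and chordality are inherited, $\bouquet(S)$ has at least two flowers since $S$ separates, and, using that $F^{*}$ is a connected subtree of a clique tree of $G$ (Lemma~\ref{lemma:connflowers}) together with the Helly property of clique trees, every maximal clique of $H$ is already a member of $F^{*}$; (ii) $\alpha$ restricted to $H$ is an AMO of $H$, since acyclicity and absence of v-structures pass to induced subgraphs, and the (unique) source of $\alpha$ lies in $S\subseteq V(H)$ and remains the unique source of $\alpha|_{H}$; and (iii) the restriction is \emph{faithful}: the clique-starting orderings, the set of starting cliques $\mathcal{K}_{\alpha}\cap F^{*}$, and $\prec_{\alpha}$ restricted to $\mathcal{K}_{\alpha}\cap F^{*}$ coincide with the corresponding objects for $\alpha|_{H}$ on $H$. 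Granting (i)--(iii), the induction hypothesis applied to $(H,\alpha|_{H})$ yields the desired unique minimum, and combining this with the previous paragraph proves the claim.

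The step I expect to be the main obstacle is (iii). One direction — restricting a topological ordering of $\alpha$ to $V(H)$ — is immediate, but the converse requires showing that a clique-starting ordering of $\alpha|_{H}$ beginning with $K\in F^{*}$ extends to a topological ordering of $\alpha$ that still begins with $K$; this reduces to a careful analysis of where $F^{*}$ sits in the clique tree relative to the root, in order to argue that no vertex of $V(G)\setminus V(H)$ is forced before a vertex of $V(H)$ in $\alpha$ (or, if it can be forced, to interleave it appropriately). Showing that $\prec_{\alpha}$ and $\prec_{\alpha|_{H}}$ agree on $\mathcal{K}_{\alpha}\cap F^{*}$ is similar in spirit and relies on the minimal separators of $H$ and their flowers being compatible with those of $G$ inside $U$. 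Everything outside (iii) is routine bookkeeping with the lemmas already established.
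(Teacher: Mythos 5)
Your reduction to the least flower is exactly the paper's: you take the smallest common prefix $S$ of $\mathrm{top}(\alpha)$ via Lemma~\ref{lemma:minimalSeparator}, dispose of the case $S\in\cliques(G)$, use Lemmas~\ref{lemma:flowersAreOrdered} and~\ref{lemma:BSeveryAMO} to locate the unique $\prec_T$-least flower $F^{*}$, and observe that every starting clique in $F^{*}$ is $\prec_{\alpha}$-below every starting clique outside $F^{*}$ because two cliques in distinct $S$-flowers intersect exactly in $S$. Up to that point the argument is sound and matches the paper.

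The gap is precisely the step you flag as ``the main obstacle'' and then do not carry out. Your induction passes to the induced subgraph $H=G[\,\bigcup_{K\in F^{*}}K\,]$ and needs the restriction to be faithful: that the starting cliques of $\alpha|_{H}$ are \emph{exactly} $\mathcal{K}_{\alpha}\cap F^{*}$, and that $\prec_{\alpha|_{H}}$ (defined with respect to \emph{which} rooted clique tree of $H$?) agrees with $\prec_{\alpha}$ on that set. Neither is proved. The direction you call immediate is fine, but without the converse --- $K$ starts a clique-starting ordering of $\alpha|_{H}$ implies $K$ starts one of $\alpha$ --- the induction hypothesis only yields a unique minimum of a possibly larger set $\mathcal{K}_{\alpha|_{H}}\supseteq\mathcal{K}_{\alpha}\cap F^{*}$, and a unique minimum of a superset in a partial order gives nothing for the subset. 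Concretely, $K$ starts an ordering of $\alpha$ iff every edge of $\alpha$ leaving $K$ points out of $K$, so your converse amounts to showing that no edge from $V\setminus U$ into $K\setminus S$ can point into $K$ once all edges from $K$ into $U\setminus K$ point outward; this needs a genuine argument (Meek-rule propagation from $S$, or the structure of $\chordalcomps_G(S)$), not bookkeeping. You would also have to fix the rooted clique tree of $H$ as the $F^{*}$-subtree of $T$ rooted at its node nearest $r$ and verify that minimal separators and flowers of $H$ coincide with those of $G$ inside $U$ before the statement $\prec_{\alpha|_{H}}=\prec_{\alpha}$ is even well posed. The paper sidesteps all of this by inducting on a different quantity: the number $\mu$ of distinct maximal cliques starting orderings in a subset $\mathrm{top}'(\alpha)\subseteq\mathrm{top}(\alpha)$. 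It restricts to the orderings starting in $F^{*}$ (nonempty, and with strictly smaller $\mu$, by Lemma~\ref{lemma:BSeveryAMO}), stays inside $G$ with the original clique tree and the original $\prec_{\alpha}$, and therefore never needs a restriction lemma. Either supply the faithfulness argument for $H$, or switch the induction variable as the paper does.
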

   \begin{proof}
     Let $\mathrm{top}'(\alpha)\subseteq\mathrm{top}(\alpha)$ be an arbitrary subset of the
     clique-starting topological orderings that represent~$\alpha$ and let $\mu$ be the
     number of different maximal cliques with which elements in
     $\mathrm{top}'(\alpha)$ start. We prove the claim by induction over
     $\mu$. In the base case, all elements in $\mathrm{top}'(\alpha)$ start with
     the same set $S\in\cliques(G)$ and, of course, this is the unique
     least maximal clique. For $\mu>1$ we observe
     that, by Lemma~\ref{lemma:minimalSeparator}, all
     $\tau\in\mathrm{top}'(\alpha)$ start with the same $S\in\separators(G)$.

     Consider the bouquet $\bouquet(S)$, which is partially ordered
     by~$\prec_T$. Lemma~\ref{lemma:flowersAreOrdered} states that there is a
     unique least $S$-flower $F\in\bouquet(S)$ with respect to
     $\prec_T$, and by the definition of~$\prec_{\alpha}$ the maximal
     cliques occurring in $F$ precede the others. Therefore, we reduce
     $\mathrm{top}'(\alpha)$ to the set $\mathrm{top}''(\alpha)$ of topological orderings that
     start with a maximal clique in $F$. This set is non-empty by
     Lemma~\ref{lemma:BSeveryAMO} and contains, by the induction
     hypothesis, a unique least maximal clique.
   \end{proof}
   We complete the proof by showing that
   the formula counts~$\alpha$ at the term for the unique least
   maximal clique~$K$ from the previous claim. To see this, we need to
   prove that (i) $\alpha$ can be counted at the clique $K$ (i.e., there is no
   set $S\in \mathrm{FP}(\iota^{-1}(K), \mathcal{T})$ preventing $\alpha$
   from being counted), and (ii) that
   $\alpha$ is not counted somewhere else (i.e., there is some set $S\in
   \mathrm{FP}(\iota^{-1}(K'), \mathcal{T})$ for all other $K'\in\cliques(G)$ that can be at the beginning
   of some $\tau\in\mathrm{top}(\alpha)$).

   \begin{claim}
     Let $\alpha$ be an AMO and $K\in\cliques(G)$ be the least
     maximal clique (with respect to $\prec_{\alpha}$) that is a prefix of
     some $\tau\in\mathrm{top}(\alpha)$. Then there is no $S\in\Delta(G)$ with
     $S\in \mathrm{FP}(\iota^{-1}(K), \mathcal{T})$ that is a prefix of $\tau$.
   \end{claim}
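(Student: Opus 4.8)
The plan is a proof by contradiction: assume some $S\in\mathrm{FP}(\iota^{-1}(K),\mathcal{T})$ is a prefix of $\tau$, and derive that $K$ is not $\prec_\alpha$-minimal. Write $v=\iota^{-1}(K)$ and let $r=x_1,\dots,x_p=v$ be the root-to-$v$ path; by definition of $\mathrm{FP}$ we have $S=\iota(x_j)\cap\iota(x_{j+1})$ for some $1\le j<p$. Hence $S$ is a minimal separator of $G$ with $S\subseteq\iota(v)=K$, and since $\iota$ is a bijection and distinct maximal cliques are incomparable, $\iota(x_j)$ is a maximal clique with $S\subseteq\iota(x_j)$, $\iota(x_j)\ne K$, and in fact $S\subsetneq K$. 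The target is a maximal clique $K^*$ that starts a clique-starting topological ordering of $\alpha$ and satisfies $K^*\prec_\alpha K$, which contradicts the minimality of $K$ established in the previous claim.

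First I would pin down the flower of $K^*$. Deleting the tree edge $x_jx_{j+1}$ splits $T$ into subtrees $T_1\ni r,x_j$ and $T_2\ni x_{j+1},v$; put $V_1=\bigl(\bigcup_{x\in T_1}\iota(x)\bigr)\setminus S$ and $V_2=\bigl(\bigcup_{x\in T_2}\iota(x)\bigr)\setminus S$. Because every edge of a chordal graph lies inside a single maximal clique, no vertex of $V\setminus S$ can occur in cliques on both sides, so $V$ is the disjoint union of $V_1$, $S$, $V_2$ and there is no $G$-edge between $V_1$ and $V_2$. In particular $\iota(x_j)\setminus S\subseteq V_1$ and $K\setminus S\subseteq V_2$ lie in different connected components of $G[V\setminus S]$, so by Lemma~\ref{lemma:connflowers} and Lemma~\ref{lemma:bouquetpart} the cliques $\iota(x_j)$ and $K$ lie in distinct $S$-flowers $F_j$ and $F_K$, which are connected subtrees of $T$. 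Since $x_j\in F_j$ lies on the path from $v$ — hence from $F_K$ — to the root, $F_j\prec_T F_K$. Letting $F^*$ be the unique $\prec_T$-least $S$-flower (Lemma~\ref{lemma:flowersAreOrdered}) we get $F^*\prec_T F_K$; moreover every clique of $F^*$ is indexed by a node of $T_1$ (any $T_2$-flower routes through $x_j\in F_j$ on its way to the root, hence is $\succ_T F_j\succeq_T F^*$), so every such clique meets $K$ in exactly $S$.

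It remains to realize some $K^*\in F^*$ as the leading clique of a clique-starting topological ordering of $\alpha$. The clean way is to invoke Lemma~\ref{lemma:BSeveryAMO} for the minimal separator $S$, after checking its hypothesis — that every clique-starting topological ordering of $\alpha$ carries $S$ as a prefix — via Lemma~\ref{lemma:minimalSeparator} and the minimality of $K$. A self-contained alternative is to build the ordering by hand: $\tau$ restricted to $G[V_1\cup S]$ is a topological ordering of the AMO $\alpha[V_1\cup S]$ of this UCCG that begins with $S$; running the clique-growing procedure from the proof of Observation~\ref{cor:startclique} seeded at $S$ (rather than at the source) produces a clique-starting ordering of $\alpha[V_1\cup S]$ whose leading clique $K^*$ is a maximal clique with $S\subsetneq K^*\subseteq V_1\cup S$; appending to it the vertices of $V_2$ in their $\tau$-order yields a topological ordering $\tau^*$. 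Since there are no $V_1$–$V_2$ edges and $S$ is an initial segment of $\tau$ (so every edge leaving $S$ points out of $S$), $\tau^*$ represents $\alpha$, and $K^*$ is still a maximal clique of $G$; moreover $K^*\in F^*$ and $K^*\cap K=S\in\Delta(G)$, so together with $F^*\prec_T F_K$ we obtain $K^*\prec_\alpha K$ — the desired contradiction.

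I expect the last step to be the main obstacle: upgrading a topological ordering that merely has $S$ as a prefix to a \emph{clique-starting} one that begins inside the $\prec_T$-least $S$-flower while still representing $\alpha$, and ensuring its leading clique meets $K$ in exactly $S$ so that $\prec_\alpha$ is defined between them. This reuses, and slightly strengthens by re-seeding, the reasoning behind Observation~\ref{cor:startclique} and Lemma~\ref{lemma:BSeveryAMO}; verifying the hypothesis needed to cite Lemma~\ref{lemma:BSeveryAMO} directly — or carrying out the interleaving explicitly as above — is the fiddly part.
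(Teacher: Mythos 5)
Your proof is correct and takes essentially the same route as the paper's: assume such an $S$ exists, observe that the forbidden prefix places $K$ in an $S$-flower that is not $\prec_T$-least, and apply Lemma~\ref{lemma:BSeveryAMO} to obtain a clique $K^*$ in an earlier flower that starts some $\tau'\in\mathrm{top}(\alpha)$, so that $K^*\prec_{\alpha}K$ contradicts the minimality of $K$. You are merely more explicit than the paper about why the tree split at the edge defining $S$ yields $F_j\prec_T F_K$ and why $K^*\cap K=S$ (so that $\prec_{\alpha}$ actually relates the two cliques), and your worry about verifying the stated hypothesis of Lemma~\ref{lemma:BSeveryAMO} is harmless, since its proof only uses the existence of one ordering in $\mathrm{top}(\alpha)$ with $S$ as a prefix, which you have.
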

   \begin{proof}
     Assume for a contradiction that there would be such a
     $S\in\Delta(G)$ and let $F\in\bouquet(S)$ be the $S$-flower
     containing~$K$. Since $S\in \mathrm{FP}(\iota^{-1}(K), \mathcal{T})$, there is another flower
     $F'\in\bouquet(S)$ with $F'\prec_T
     F$. Lemma~\ref{lemma:BSeveryAMO} tells us that there is another
     clique $K'\in F'$ that is at the beginning of some
     $\tau'\in\mathrm{top}(\alpha)$. However, then we have
     $K'\prec_{\alpha}K$~--~contradicting the minimality of $K$.
   \end{proof}
   \begin{claim}
     Let $\tau_1,\tau_2\in\mathrm{top}(\alpha)$ be two topological orderings
     starting with $K_1,K_2\in\cliques(G)$, respectively. If
     $K_1\prec_{\alpha} K_2$ then $K_1\cap K_2=S\in
     \mathrm{FP}(\iota^{-1}(K_2), \mathcal{T})$.
   \end{claim}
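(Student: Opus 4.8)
The plan is to exhibit, on the unique path $r=x_1,x_2,\dots,x_p=v_2$ from the root $r$ to the node $v_2:=\iota^{-1}(K_2)$ in the clique tree $\mathcal{T}$, a consecutive pair $x_a,x_{a+1}$ whose label-intersection $\iota(x_a)\cap\iota(x_{a+1})$ equals exactly $S:=K_1\cap K_2$; since $S\subseteq K_2=\iota(v_2)$, that is precisely what the definition of $\mathrm{FP}$ requires. First I would unpack $K_1\prec_\alpha K_2$: it yields $S\in\separators(G)$ together with $S$-flowers $F_1,F_2\in\bouquet(S)$ such that $K_1\in F_1$, $K_2\in F_2$ and $F_1\prec_T F_2$ (in particular $F_1\ne F_2$). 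As $S\subseteq K_1$ and $S\subseteq K_2$, both $\iota^{-1}(K_1)$ and $v_2$ lie in the connected subtree $\iota^{-1}(S)$. Letting $x_j$ be the node of $\iota^{-1}(S)$ of smallest depth, I would note that the portion of the $r$-$v_2$-path lying in $\iota^{-1}(S)$ is exactly the suffix $x_j,x_{j+1},\dots,x_p$: connectedness of $\iota^{-1}(S)$ puts all of $x_j,\dots,x_p$ into it, while minimality of $\mathrm{depth}(x_j)$ rules out $x_i\in\iota^{-1}(S)$ for $i<j$.

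Next I would invoke Lemmas~\ref{lemma:connflowers} and~\ref{lemma:bouquetpart}: $\bouquet(S)$ partitions $\iota^{-1}(S)$ into connected subtrees. In particular the topmost node $u_2$ of $F_2$ is an ancestor of $v_2$ and hence lies on the suffix $x_j,\dots,x_p$. From $F_1\prec_T F_2$ and $F_1\ne F_2$, the $F_1$-node witnessing this relation is a \emph{proper} ancestor of $u_2$ lying in $\iota^{-1}(S)$, so it equals $x_a$ for some index $j\le a\le p-1$. I would then choose $a$ to be the largest index with $x_a\in F_1$ and $a\le p-1$. Its successor $x_{a+1}$ still lies on the path and (since $j<a+1\le p$) in $\iota^{-1}(S)$, yet $x_{a+1}\notin F_1$: this holds by maximality of $a$ when $a+1\le p-1$, and because $x_p=v_2\in F_2\ne F_1$ when $a+1=p$. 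Thus $x_a$ and $x_{a+1}$ are adjacent nodes of $\mathcal{T}$ lying in distinct $S$-flowers.

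The remaining step is to show $\iota(x_a)\cap\iota(x_{a+1})=S$. The inclusion $S\subseteq\iota(x_a)\cap\iota(x_{a+1})$ is immediate, since both cliques contain $S$. For the reverse, put $S'=\iota(x_a)\cap\iota(x_{a+1})$; as $x_a,x_{a+1}$ are adjacent clique tree nodes with distinct labels, $S'$ is a minimal separator of $G$. If $S\subsetneq S'$, then $S'\setminus S$ is nonempty and contained in both cliques $\iota(x_a)$ and $\iota(x_{a+1})$; since $\iota(x_a)\setminus S$ and $\iota(x_{a+1})\setminus S$ are cliques sharing $S'\setminus S$, the union $\iota(x_a)\cup\iota(x_{a+1})$ is connected in $G[V\setminus S]$, which places $\iota(x_a)$ and $\iota(x_{a+1})$ in a common $S$-flower, contradicting the previous step. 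Hence $S'=S$, and as $S=S'\subseteq K_2=\iota(v_2)$ the consecutive pair $(x_a,x_{a+1})$ witnesses $S\in\mathrm{FP}(v_2,\mathcal{T})=\mathrm{FP}(\iota^{-1}(K_2),\mathcal{T})$.

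The step I expect to be the genuine obstacle is the middle one — converting the rather coarse definition of $\prec_T$ (``$F_1$ contains a node on the path from $F_2$ to the root'') into the precise statement that the witnessing $F_1$-node sits on the segment $x_j,\dots,x_p$ of the $r$-$v_2$-path and strictly above $F_2$, and then extracting from this the correct consecutive pair. By comparison the final separator computation is short and reads off directly from the definition of an $S$-flower, and the first paragraph is essentially bookkeeping with the known facts about clique trees.
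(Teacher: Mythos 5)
Your proposal is correct and follows essentially the same route as the paper: locate the tree edge on the root-to-$\iota^{-1}(K_2)$ path where the path leaves the $S$-flower $F_1$ containing $K_1$, and show (via Lemmas~\ref{lemma:connflowers} and~\ref{lemma:bouquetpart} and the maximality in the definition of $S$-flowers) that the two incident clique labels intersect exactly in $S$, which is the paper's ``first edge on this path that leads to a node in $F_1$'' argument spelled out in full. The only cosmetic point is that $F_1\neq F_2$ is most cleanly justified not from $F_1\prec_T F_2$ itself but from Lemma~\ref{lemma:minimalSeparator}, which shows $S$ separates $K_1\setminus S$ from $K_2\setminus S$.
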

   \begin{proof}
     Since $K_1$ and $K_2$ correspond to $\tau_1,\tau_2\in\mathrm{top}(\alpha)$,
     we have $K_1\cap K_2=S\in\cliques(G)\cup\separators(G)$ by
     Lemma~\ref{lemma:minimalSeparator}~--~in fact, $S$ is a prefix of
     $\tau_1$ and $\tau_2$. As we assume $K_1\prec_{\alpha} K_2$, we have
     $K_1\neq K_2$ and, thus, $S\in\Delta(G)$. Let
     $F_1,F_2\in\bouquet(S)$ be the $S$-flowers containing $K_1$ and
     $K_2$, respectively. The order $K_1\prec_{\alpha}K_2$ implies
     $F_1\prec_T F_2$ (item (iii) in the definition of~$\prec_{\alpha}$), meaning that $F_1$ contains some node of
     $(T,r,\iota)$ that is on the unique path from $F_2$ to the root of
     $T$. But by the definition of $S$-flowers and
     Lemma~\ref{lemma:bouquetpart}, the first edge on this path that
     leads to a node in $F_1$ connects two nodes $x,y$ with
     $\iota(x)\cap\iota(y)=S$. Hence, $S\in \mathrm{FP}(\iota^{-1}(K_2), \mathcal{T})$.
   \end{proof}
   This completes the proof of Proposition~\ref{proposition:fpFormula}.
 \end{proof}
 \null
 
\subsubsection{Proof of Theorem \ref{theorem:cliquepicking}}

 \begin{proof}
   Observe that recursive calls are performed in
   line~\ref{line:mult} if  $\chordalcomps_G(\iota(v))\neq\emptyset$. The only graphs with
   $\chordalcomps_G(S)=\emptyset$ for all $S\in\cliques(G)$ are the
   complete graphs, i.e., the graphs with $|\Pi(G)|=1$.
   We have $|\cliques(H)|<|\cliques(G)|$ for all graphs $G$ and
   $H=G[V\setminus S]$ with $S\in\cliques(G)$. Hence, we may assume by
   induction over $|\cliques(G)|$ that the subproblems are handled
   correctly~--~the base case being given by complete graphs.

   The correctness of the algorithm follows from
   Proposition~\ref{proposition:fpFormula}, as it traverses the clique
   tree with a BFS in order to compute the sets
   $\mathrm{FP}(v, \mathcal{T})$ and evaluates this formula. 
 \end{proof}
\null

\subsubsection{Proof of Proposition \ref{prop:subpbound}}

 Let $V'_{i(x)}$ be the set of all visited vertices by
 Algorithm~\ref{alg:cgk} in the step before $x$ is output (i.e., the set
 $V'$ at this point). Also recall the definition of $P_{i(x)}(x)$ as the already visited neighbors of $x$ at the iteration when $x$ is output. As all other vertices in the same component in $\chordalcomps_G(K)$ as $x$ have the same preceding neighbors, we will define $P_{i(H)}(H) := P_{i(x)}(x)$ for all $H \in \chordalcomps_G(K)$ and all $x \in H$. 

 \begin{lemma}
   \label{lemma:spminsep}
   Let $G$ be a chordal graph and $H \in
   \chordalcomps_G(K)$. Then, $P_{i(H)}(H)$ separates~$H$ from $W =
   \mathcal{X}_{\text{out}}(V) \setminus P_{i(H)}(H)$ and is a
   minimal separator of $G$.
 \end{lemma}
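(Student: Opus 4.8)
The plan is to reuse the machinery from the proofs of Theorem~\ref{theorem:cgk} and Lemma~\ref{lemma:minimalSeparator}. First I would collect the relevant facts. By Corollary~\ref{cor:properties:gk}(3)--(4), every vertex $a\in H$ has the same set of already-visited neighbours, namely $P_{i(H)}(H)=\textit{Pa}_a(G^K)$; writing $S:=P_{i(H)}(H)$, this set is \emph{complete to} $H$ (every $s\in S$ is adjacent to every $a\in H$, since $s\in N(a)\cap V'_{i(H)}=S$ for each such $a$) and $S$ is a clique of $G$ (the parents of any vertex in an AMO form a clique, else a v-structure would arise). Recall further that $W$ consists of vertices that were visited before $H$ was output, so every $b\in W$ satisfies $i(b)\le i(H)$ and $b\notin H$.

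For the separation claim I would show that no path of $G$ from $H$ to $W$ avoids $S$. There is no edge directly from $H$ to $W$: if $a\in H$ had a neighbour $y\notin S$ already visited at iteration $i(H)$, then $y\in N(a)\cap V'_{i(H)}=S$, a contradiction; hence every neighbour of a vertex of $H$ outside $S\cup H$ is visited strictly later than $H$. Now suppose, towards a contradiction, that $a=y_0-y_1-\dots-y_m=b$ is a shortest path in $G[V\setminus S]$ with $a\in H$, $b\in W$; by the previous sentence $m\ge 2$, $y_1\notin H\cup K$, and $i(y_1)>i(H)$, so $a$ and $y_1$ lie in different components of $G^K[V\setminus K]$ and Corollary~\ref{cor:properties:gk}(2) gives the edge $a\to y_1$ in $G^K$. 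Since a shortest path is induced, iterating the first Meek rule along $y_1-y_2-\dots-y_m$ (exactly as in the proof of Lemma~\ref{lemma:minimalSeparator}) forces $y_1\to y_2\to\dots\to y_m=b$ in $G^K$, whence $i(H)=i(a)<i(y_1)<i(y_2)<\dots<i(b)$, contradicting $i(b)\le i(H)$. Thus $G[V\setminus S]$ places $H$ in a component disjoint from $W$, i.e.\ $S$ separates $H$ from $W$.

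It remains to show that $S$ is a \emph{minimal} separator of $G$, and this is where I expect the main difficulty. Since $S$ is a clique, by the classical characterisation of minimal separators in chordal graphs it suffices to exhibit two distinct connected components $C_1,C_2$ of $G[V\setminus S]$ with $S\subseteq N(C_1)$ and $S\subseteq N(C_2)$; the component $C_1\supseteq H$ works because $S$ is complete to $H$. For $C_2$ I would take the component of $G[V\setminus S]$ containing an appropriately chosen $b\in W$ and argue that every $s\in S$ has a neighbour in it. The delicate point is selecting the side $C_2$ so that \emph{all} of $S$ is seen from it: I would do this by tracking the order in which the vertices of $S$ are visited by Algorithm~\ref{alg:cgk} and invoking the reverse-PEO property (Lemmas~\ref{lemma:samechoice} and~\ref{lemma:peomao})~--~the vertices of $S$ are exactly those that contributed to $H$'s label, and each of them, being simplicial at its own visit time in the remaining subgraph, must attach to the part of $G$ already processed, which lies on the $W$-side. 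An alternative, possibly cleaner, route is to identify $S$ directly with an intersection $\iota(x)\cap\iota(y)$ over the edge of a rooted clique tree of $G$ that ``enters'' the subtree corresponding to $H$, and then appeal to the clique-tree characterisation of minimal separators; proving that this identification is legitimate would then be the crux.
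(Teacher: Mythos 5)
Your argument for the separation half is sound, and it is a legitimate (if slightly more roundabout) alternative to the paper's: where you push orientations forward along an induced path via Corollary~\ref{cor:properties:gk}(2) and the first Meek rule to force $i(a)<i(y_1)<\dots<i(b)$ and contradict $i(b)<i(H)$, the paper argues directly on the visit order, taking the first ``descent'' $y$ on a shortest $H$--$W$ path and using the reverse-PEO property to find a chord $\{x,z\}$ that shortcuts the path. (You should still handle the endpoint case $b\in K\setminus S$ explicitly, since $i(b)$ is undefined there; it is dispatched by noting that all edges leave $K$ in $G^K$.)

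The minimality half, however, is a genuine gap, and it is exactly the point you flag as ``the crux'' without resolving. Your stated plan~--~each $s\in S$ is simplicial at its visit time and therefore ``attaches to the already-processed part, which lies on the $W$-side''~--~fails on both counts: a previously visited neighbour of $s$ may lie in $S$ itself rather than in $W$, and even if every $s\in S$ had some neighbour in $W$, those neighbours need not lie in a \emph{single} component of $G[V\setminus S]$, which is what the two-full-components characterisation requires. The paper closes this with one targeted observation you are missing: let $p$ be the \emph{last-visited} vertex of $S$. If $p$ had no previously visited neighbour outside $S$, then at the moment $p$ is processed its accumulated label would coincide with that of the vertices of $H$ (both determined by $S\setminus\{p\}$, resp.\ $S$), so $p$ would have been output into $H$ rather than visited as a member of $S$~--~a contradiction. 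Hence $p$ has a previously visited neighbour $x\in W$, and since all earlier-visited neighbours of $p$ form a clique (reverse-PEO property) and $S\setminus\{p\}$ is among them, this single $x$ is adjacent to \emph{all} of $S$. Together with any $y\in H$ (also complete to $S$) and the separation already proved, this exhibits $S$ as a minimal $x$--$y$ separator. Without an argument of this kind, producing the second full component $C_2$, your proof of minimality does not go through; the clique-tree identification you mention as an alternative would likewise need this (or an equivalent) argument to be justified.
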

 \begin{proof}
   The set $P_{i(H)}(H)$ is a proper subset of all previously
   visited vertices (as $H$ is not part of the \emph{maximal} clique $K$ Algorithm~\ref{alg:cgk}
   starts with). Since $P_{i(H)}(H)$ contains all visited
   neighbors of $H$, it separates $H$ from $W$. To see this, assume for
   sake of contradiction that there is a path from $v \in V_H$ to $w \in
   W$ without a vertex in $P_{i(H)}(H)$. Consider the shortest such path and let
   $y$ be the first vertex with successor $z$ preceding it in
   the vertex ordering produced by Algorithm~\ref{alg:cgk} : 
   $v - \dots - x - y - z - \dots - w \in W$. Then
   $\{x,z\}\in E_G$, as the ordering is a reverse of a PEO. Hence, the path is
   not the shortest path and, thus, $y$ cannot
   exist. Since there can be no direct edge from $v$ to $w$, the set
   $P_{i(H)}(H)$ is indeed a separator.

   We prove that there is a vertex in $W$, which is a neighbor of
   all vertices in $P_{i(H)}(H)$. Consider the vertex in
   $P_{i(H)}(H)$, which is visited last (denoted by $p$). When
   vertex $p$ is processed, it has to have a neighbor $x \in W$, which was
   previously visited, else $p$ would be part of $H$. This is because the
   preceding neighbors would be identical to the ones of the vertices
   in $H$ (i.e., $\mathcal{P}_{i(H)}(H) \setminus \{p\}$),
   meaning that $p$ would have the same label. It would follow that
   either $p$ and the vertices in $H$ are appended to $L$ when $p$ is
   visited or were already appended to $L$ previously. In both cases, $p$ would
   be in $H$, which is a contradiction.

   Hence, such vertex $x$ has to exist. Moreover, $x$
   has to be connected to all vertices in $P_{i(H)}(H)$ because
   of the PEO property (all preceding neighbors of a vertex form a
   clique).

   From the first part of the proof, we know that $x$ and $y \in H$ are
   separated by $P_{i(H)}(H)$. As both $x$ and $y$ are fully
   connected to $P_{i(H)}(H)$, it follows that this set is also a \emph{minimal} $x-y$ separator.
 \end{proof}
 \begin{lemma} \label{lemma:flspbijection}
   Let $G$ be a chordal graph for which the number of AMOs is computed
   with the function $\texttt{count}$ in
   Algorithm~\ref{alg:cliquepicking}. Let $H$ be any chordal graph for
   which \texttt{count} is called in the recursion (for $H \neq G$).
   Then $H = F \setminus S$ for some $S$-flower $F$ in $G$ with $S \in \separators(G)$.
 \end{lemma}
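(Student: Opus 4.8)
The plan is to prove, by induction on the depth of $H$ in the recursion tree of \texttt{count} (the root being $G$), the slightly strengthened statement that every UCCG $H\neq G$ on which \texttt{count} is called is the induced subgraph $G\left[\left(\bigcup_{M\in F}M\right)\setminus S\right]$ for some $S\in\separators(G)$ and some $S$-flower $F\in\bouquet(S)$. Such an $H$ arises as a connected component of $\chordalcomps_{G'}(K')$ for a maximal clique $K'$ of a UCCG $G'$ on which \texttt{count} was called earlier (line~\ref{line:mult} of Algorithm~\ref{alg:cliquepicking}); by the induction hypothesis either $G'=G$ or $V_{G'}=\left(\bigcup_{M\in\bar F}M\right)\setminus\bar S$ for some $\bar S$-flower $\bar F$ of $G$. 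I handle these two cases separately, the first being the base case.

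Base case. Let $H\in\chordalcomps_G(K)$ with $K=\iota(v)\in\cliques(G)$, and set $S:=P_{i(H)}(H)$. By items~(3) and~(4) of Corollary~\ref{cor:properties:gk}, $S$ is the common parent set $\textit{Pa}_x(G^K)$ of every $x\in V_H$, so $S\subseteq N_G(x)$ for all $x\in V_H$; by Lemma~\ref{lemma:spminsep}, $S\in\separators(G)$, hence $S$ is a clique. Put $F:=\{\,M\in\cliques(G)\mid S\subseteq M\text{ and }M\setminus S\subseteq V_H\,\}$. The crucial point, established from the labelling mechanics of Algorithm~\ref{alg:cgk} together with Corollary~\ref{cor:properties:gk}(2)--(3) and the separation statement of Lemma~\ref{lemma:spminsep}, is that for every $x\in V_H$ the clique $\{x\}\cup S$ extends to a maximal clique of $G$ lying inside $V_H\cup S$ (roughly: a vertex adjacent to all of $S$ is forced into the same maximal-label batch as $V_H$ and, being adjacent to a vertex of $V_H$, into the component $V_H$ itself). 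Hence $\bigcup_{M\in F}M=V_H\cup S$; this union is connected in $G[V\setminus S]$ because $H=G[V_H]$ is connected and induced (Corollary~\ref{cor:properties:gk}(1)); and $F$ is maximal with this connectedness property, again by the separation statement. Thus $F$ is an $S$-flower and $H=G\left[\left(\bigcup_{M\in F}M\right)\setminus S\right]$.

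Inductive step. Write $H:=G'$, so $V_H=\left(\bigcup_{M\in\bar F}M\right)\setminus\bar S$ for some $\bar S$-flower $\bar F$ of $G$; then every vertex of $V_H$ lies in a clique of $\bar F$ containing $\bar S$, so $\bar S\subseteq N_G(x)$ for all $x\in V_H$ and, symmetrically, every vertex of $\bar S$ is adjacent to all of $V_H$. Since the base-case argument used only that $G$ is a UCCG and $K$ one of its maximal cliques, it applies verbatim to $(H,K')$ (with Algorithm~\ref{alg:cgk} run on $H$) and yields $\tilde S:=P_{i(H')}(H')$, a minimal separator of $H$ with $\tilde S\subseteq N_H(x)$ for all $x\in V_{H'}$, together with the $\tilde S$-flower $\tilde F$ of $H$ for which $V_{H'}=\left(\bigcup_{M\in\tilde F}M\right)\setminus\tilde S$. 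Set $S:=\tilde S\cup\bar S$; since $\tilde S\subseteq V_H$ is adjacent to all of $\bar S$ and both $\tilde S$ and $\bar S$ are cliques, $S$ is a clique of $G$ with $S\subseteq N_G(x)$ for every $x\in V_{H'}$. The substance of the step is to show (i)~$S\in\separators(G)$, and (ii)~$F:=\{\,M\cup\bar S\mid M\in\tilde F\,\}$ is an $S$-flower of $G$ with $\bigcup_{M\in F}M=V_{H'}\cup S$, whence $H'=G\left[\left(\bigcup_{M\in F}M\right)\setminus S\right]$. For~(i) one exhibits two distinct full components of $G[V\setminus S]$ (connected components $C$ with $N_G(C)=S$): the one containing $V_{H'}$, and the one obtained from a second full $\tilde S$-side of $H[V_H\setminus\tilde S]$, observing that adjoining $\bar S$ (which is universal to $V_H$) preserves fullness. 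For~(ii) one verifies that $M\mapsto M\setminus\bar S$ is a bijection from the maximal cliques of $G$ that contain $S$ onto the maximal cliques of $H$ that contain $\tilde S$ — a maximal clique $M\supseteq S$ of $G$ contains $\bar S$ and a vertex of $\tilde S\subseteq V_H$, so $M\setminus\bar S$ lies in the component of $G[V\setminus\bar S]$ meeting $V_H$, which forces $M\in\bar F$ and hence $M\subseteq V_H\cup\bar S$, so $M\setminus\bar S$ is a maximal clique of $H$ — under which $F$ corresponds to $\tilde F$, so that the flower axioms for $F$ follow from those for $\tilde F$.

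I expect the inductive step, concretely assertions~(i) and~(ii), to be the main obstacle. The difficulty is intrinsic: a flower-remnant such as $V_H$ is in general \emph{not} a full connected component of $G[V\setminus\bar S]$ (it may have neighbours in $V\setminus(V_H\cup\bar S)$), so one cannot merely appeal to ``composition of separators''; one must check that removing the additional vertices $\tilde S$ really does isolate $V_{H'}$ from those extra neighbours in $G[V\setminus S]$, and dually that the $S$-flower of $G$ carves out exactly $V_{H'}$ modulo $S$ and nothing more. By contrast the base case is comparatively routine, its only delicate point being the clique-confinement noted above, which leans on the monotonicity (Inclusion Condition) of the labelling driving Algorithm~\ref{alg:cgk} and on the separation statement of Lemma~\ref{lemma:spminsep}.
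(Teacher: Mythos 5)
Your proposal is correct and follows essentially the same route as the paper: induction along the recursion tree of \texttt{count}, with the separator accumulated as the union of the sets $P_{i(\cdot)}(\cdot)$ along the call stack (your $\tilde S\cup\bar S$ is the paper's $S_H=S_{G'}\cup P_{i(H)}(H)$), Lemma~\ref{lemma:spminsep} supplying minimality in the current subgraph, and a lifting argument using that the accumulated separator is fully connected to the subgraph to conclude minimality and the flower identity in $G$. The only difference is one of explicitness --- you spell out the full-component witness for minimality and the clique bijection $M\mapsto M\setminus\bar S$, where the paper argues the same points more tersely --- so the steps you flag as the ``main obstacle'' are exactly the ones the paper's proof also handles, and they go through as you describe.
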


 \begin{proof}
   Let $S_H$ be the union of all sets $P_{i(\tilde{G})}(\tilde{G})$ for
   $\tilde{G}$ on the recursive call stack from the input graph $G$ to
   currently considered subgraph $H$. We
   define $P_{i(G)}(G) = \emptyset$ for convenience.

   Recall that $H\neq G$. We show by induction that (i) $S_H$ is a minimal
   separator, (ii) $S_H$ is fully connected to $H$, and 
   (iii)~$H = F \setminus S_H$ for some $S_H$-flower $F$.

   In the base case, $H \in \chordalcomps_G(K)$.  By Lemma~\ref{lemma:spminsep},
   $S_H$ is a minimal separator in $G$, which is by definition
   connected to all vertices in $H$. Hence, as $H$ is connected, $H
   \subseteq F \setminus S_H$ holds for an $S_H$-flower $F$. We show the equality by
   contradiction. Assume there is a vertex $v \in F \setminus S$ but
   not in $H$. Then $v$ can neither be a vertex in $W$ nor the neighbor
   of a vertex in $W$, as by the definition of
   flowers this means that there is a path from $W$ to $H$ in $G[V
   \setminus S_H]$~--~this would violate that $H$ is separated from
   $W$ by $S_H$ (Lemma~\ref{lemma:spminsep}). Moreover, $v$ is a neighbor of all vertices in $S_H$.
   Hence, we have $P_{i(v)}(v) = P_{i(H)}(H) = S_H$ and $v \in V_H$. A
   contradiction.

   Assume \texttt{count} is called with a graph $H \in
   \chordalcomps_{G'}(K)$ for some graph $G'$ and $K\in\cliques(G')$. By induction hypothesis,
   we have that $S_{G'}$ is a minimal separator in $G$ and fully
   connected to $G'$. Moreover, $G' = F' \setminus S_{G'}$ for some
   $F'$-flower of $S_{G'}$. Now, $P_{i(H)}(H)$ is by
   Lemma~\ref{lemma:spminsep} a minimal separator in $G'$ for some vertices $x$
   and $y$. As $x$ and $y$ are connected to every vertex in $S_{G'}$,
   it follows that $S_H = S_{G'} \cup P_{i(H)}(H)$ is a minimal
   $x$-$y$ separator in $G$. Furthermore, $S_H$ is fully connected to $H$
   and it can be easily seen that $H \subseteq F \setminus S_H$. To
   show equality, observe that every vertex $v$ in $F \setminus S_H$
   is in $G'$ (if it is not separated from $H$ by $S_H$, it is
   clearly not separated from $H$ in $S_{G'}$). Thus, the same argument
   as in the base case applies and the statement follows.
 \end{proof}
\null

 \begin{proof}{(of Proposition \ref{prop:subpbound})}
   By Lemma~\ref{lemma:flspbijection}, it remains to bound the number
   of flowers in $G$. Each flower is associated with a minimal
   separator $S$ and there are at most $|\cliques(G)|-1$ such
   separators, as they are associated with the edges of the clique
   tree~\citep{Blair1993}.  Let $r$ (which is initially
   $|\cliques(G)|-1$) be an upper bound for the number of remaining
   separators. Now consider separator $S$. If $\bouquet(S)$ has $k$
   flowers, $S$ can be found on at least $k-1$ edges of the clique
   tree, namely the edges between the flowers (by
   Proposition~\ref{lemma:bouquetpart} the flowers partition the
   bouquet and, by the definition of flowers, the intersection of cliques
   from two $S$-flowers has to be a subset of $S$). Thus, we have at
   most $r - (k-1)$ remaining separators. The maximum number of flowers
   is obtained when the quotient $k / (k-1)$ is maximal. This is the
   case for $k = 2$. It follows that there are at most
   $2(|\cliques(G)| - 1)$ flowers.

   When bounding the number of explored UCCGs, we additionally take
   into account the input graph and obtain as bound
   $2(|\cliques(G)| - 1) + 1 = 2|\cliques(G)| - 1$.
 \end{proof}
\null

\vskip 0.2in

\clearpage
\acks{This work was supported by the Deutsche Forschungsgemeinschaft (DFG)
 grant LI634/4-2.

The authors would like to express their gratitude to Paula Arnold for fruitful discussions regarding
uniform sampling of AMOs and her help in the implementations and the
setup of the experiments. The authors also thank the anonymous
reviewers and the action editor for their constructive comments and
suggestions.}

\bibliography{countingmec}

\end{document}